\setlist[itemize]{itemsep=0mm}
\setlist[enumerate]{itemsep=0mm}
\theoremstyle{plain}
\newtheorem{theorem}{Theorem}[section]
\newtheorem{lemma}[theorem]{Lemma}
\newtheorem{corollary}[theorem]{Corollary}
\theoremstyle{definition}
\newtheorem{definition}[theorem]{Definition}
\theoremstyle{remark}
\newcommand{\newreptheorem}[2]{\newtheorem*{rep@#1}{\rep@title}\newenvironment{rep#1}[1]{\def\rep@title{#2 \ref*{##1}}\begin{rep@#1}}{\end{rep@#1}}}
\definecolor{lucky}{RGB}{120, 130, 150}
\definecolor{newgreen}{rgb}{0.0, 0.5, 0.0}
\definecolor{newred}{rgb}{0.81,0.1,0.26}
\definecolor{babyblue}{rgb}{0.54, 0.81, 0.94}
\definecolor{negcol}{rgb}{0.96, 0.76, 0.76}
\definecolor{poscol}{rgb}{0.63, 0.79, 0.95}
\definecolor{sunwoogreen}{rgb}{0.66, 0.89, 0.63}
\definecolor{sunwoored}{rgb}{1.0, 1.0, 0.0}
\definecolor{sunwoogreentwo}{rgb}{0.53, 0.81, 0.98}
\definecolor{dodgerblue}{rgb}{0.0, 0.75, 1.0}
\definecolor{orange}{rgb}{1.0, 0.8, 0.0}
\definecolor{crimson}{rgb}{0.86, 0.08, 0.24}
\definecolor{limegreen}{rgb}{0.2, 0.8, 0.2}
\definecolor{backredcolor}{rgb}{0.91, 0.45, 0.32}
\definecolor{secgreen}{rgb}{0.3, 1, 0.0}
\newcommand{\best}{\cellcolor{dodgerblue}}  %{0.9}
\newcommand{\secb}{\cellcolor{secgreen}}  %{0.9}
\newcommand{\thib}{\cellcolor{sunwoored}}  %{0.9}
\newcommand{\method}{M2M-GNN\xspace }
\icmltitlerunning{Multiset-to-Multiset Message Passing}
\begin{document}

\twocolumn[
%\icmltitle{Deeper Insights into Signed Message Passing in Graph Neural Networks}
\icmltitle{Sign is Not a Remedy: Multiset-to-Multiset Message Passing for Learning on Heterophilic Graphs}

% It is OKAY to include author information, even for blind
% submissions: the style file will automatically remove it for you
% unless you've provided the [accepted] option to the icml2023
% package.

% List of affiliations: The first argument should be a (short)
% identifier you will use later to specify author affiliations
% Academic affiliations should list Department, University, City, Region, Country
% Industry affiliations should list Company, City, Region, Country

% You can specify symbols, otherwise they are numbered in order.
% Ideally, you should not use this facility. Affiliations will be numbered
% in order of appearance and this is the preferred way.
\icmlsetsymbol{equal}{*}

\begin{icmlauthorlist}
\icmlauthor{Langzhang Liang}{hit}
\icmlauthor{Sunwoo Kim}{kaist}
\icmlauthor{Kijung Shin}{kaist}
\icmlauthor{Zenglin Xu}{fdu,sais,hit,pcl}
\icmlauthor{Shirui Pan}{griffith}
\icmlauthor{Yuan Qi}{fdu,sais}
%\icmlauthor{}{sch}
%\icmlauthor{}{sch}
\end{icmlauthorlist}

\icmlaffiliation{hit}{Harbin Institute of Technology, Shenzhen, Shenzhen, China}
\icmlaffiliation{kaist}{Korea Advanced Institute of Science and Technology, Seoul, South Korea}
\icmlaffiliation{pcl}{Pengcheng Laboratory, Shenzhen, China}
\icmlaffiliation{griffith}{Griffith University, Gold Coast, Australia}
\icmlaffiliation{fdu}{Fudan University, Shanghai, China}
\icmlaffiliation{sais}{Shanghai Academy of Artificial Intelligence for Science, Shanghai, China}

\icmlcorrespondingauthor{Langzhang Liang}{lazylzliang@gmail.com}
\icmlcorrespondingauthor{Zenglin Xu}{zenglin@gmail.com}

% You may provide any keywords that you
% find helpful for describing your paper; these are used to populate
% the "keywords" metadata in the PDF but will not be shown in the document
\icmlkeywords{Machine Learning, ICML}

\vskip 0.3in
]

% this must go after the closing bracket ] following \twocolumn[ ...

% This command actually creates the footnote in the first column
% listing the affiliations and the copyright notice.
% The command takes one argument, which is text to display at the start of the footnote.
% The \icmlEqualContribution command is standard text for equal contribution.
% Remove it (just {}) if you do not need this facility.

\printAffiliationsAndNotice{}  % leave blank if no need to mention equal contribution
% \printAffiliationsAndNotice{\icmlEqualContribution} % otherwise use the standard text.

\begin{abstract}

\urlstyle{same}

%Kijung: Update
Graph Neural Networks (GNNs) have gained significant attention as a powerful modeling and inference method, especially for homophilic graph-structured data.
To empower GNNs in heterophilic graphs, where adjacent nodes exhibit dissimilar labels or features, Signed Message Passing (SMP) has been widely adopted.
However, there is a lack of theoretical and empirical analysis regarding the limitations of SMP. 
In this work, we unveil some potential pitfalls of SMP and their remedies.
We first identify two limitations of SMP: undesirable representation update for multi-hop neighbors and vulnerability against oversmoothing issues.
To overcome these challenges, we propose a novel message-passing function called Multiset to Multiset GNN (M2M-GNN).
Our theoretical analyses and extensive experiments demonstrate that M2M-GNN effectively alleviates the aforementioned limitations of SMP, yielding superior performance in comparison. 

\end{abstract}

% \vspace{-5mm}
\section{Introduction}
\label{sec:intro}
Graph Neural Networks (GNNs)~\citep{defferrard2016convolutional,kipf2016semi,hamilton2017inductive, velivckovic2018graph} have emerged as a prominent approach for machine learning on graph-structured data, such as social networks~\citep{DBLP:conf/ijcai/TangHGL13} and biomedical networks~\citep{lin2020kgnn}.
In essence, GNNs employ message passing to iteratively aggregate information from neighboring nodes, ultimately yielding node embeddings for downstream tasks.

Despite their success, the effectiveness of GNNs often diminishes on heterophilic graphs, where neighboring nodes typically have dissimilar labels or features.
Surprisingly, GNNs can even be outperformed by graph-agnostic models like MLPs~\citep{zhu2020beyond, DBLP:journals/corr/abs-2202-07082}. 
This issue arises from the inherent smoothing effect of message passing, which tends to make adjacent nodes have similar embeddings.

To address this limitation, Signed Message Passing (SMP) has been widely adopted~\citep{bo2021beyond,chien2020adaptive, yan2022two,DBLP:conf/icml/LeeBYS23}. 
SMP incorporates negative weights in message passing to push neighboring nodes apart in the embedding space. Consequently, SMP is also recognized as an effective solution for alleviating a key drawback of GNNs known as oversmoothing~\citep{li2018deeper}. It refers to the tendency of node embeddings to become indistinguishable as multiple GNN layers are stacked.

%Despite their significant success, most GNNs struggle to generalize to graphs with heterophily and 

However, in this work, we uncover two significant limitations of SMP.
First, we prove that, even when the message-passing weights are ``desirable'' for direct neighbors, the cumulative weights for multi-hop neighbors may not necessarily be desirable. Second, we reveal that, even when message-passing weights are desirable, SMP can still be vulnerable to oversmoothing.
More importantly, we point out that these limitations of SMP arise from its simple aggregator, which linearly combines the embeddings of all neighbors of each node into a single message vector.

%However, in this work, we uncover two significant limitations of SMP.
%Firstly, we theoretically verify that although the desirability of the propagation matrix (for direct neighbors) is guaranteed, the cumulative propagation matrix (for multi-hop neighbors) may still be undesirable. Secondly, we conduct an oversmoothing analysis of SMP and demonstrate that even a highly ideal SMP can still be vulnerable to oversmoothing.
%Moreover, we point out that the limitations of SMP arise from the simplistic aggregator, which linearly combines the embeddings of all neighbors of each node into a single message vector.

%Although SMP has demonstrated superior performance compared to conventional MP methods in modeling graphs with heterophily, in this paper, we identify two limitations of SMP. 

Motivated by our findings, we propose a novel message passing scheme for GNNs, which we refer to as \textbf{\method} (\textbf{M}ultiset 
\textbf{to} \textbf{M}ultiset GNN), to address the limitations of SMP. In essence, \method aggregates neighborhood information by mapping the embeddings of neighbors, which can be considered a multiset of vectors, to another multiset.
This differs from existing message passing schemes, where a multiset of vectors is mapped to a single message vector. We prove \method's theoretical properties related to alleviating the limitations of SMP together with its empirical effectiveness. Our code is available at \url{https://github.com/Jinx-byebye/m2mgnn}

Our contributions are summarized as follows:
\begin{itemize}
    \item \textbf{Analysis:} Our theoretical analyses reveal two fundamental limitations of SMP.
    \item \textbf{Method:}  We introduce \method, a novel multiset-to-multiset message passing scheme, and illustrate how it mitigates the aforementioned limitations.   
    \item \textbf{Experiments:} We conduct comprehensive experiments that demonstrate the superior performance of \method.
\end{itemize}

% paragraph 7: Experimental result summary.
% Our contribution towards analyzing and improving the expressiveness of MPNN can be summarized as follows:

%Paragraph 9: Summary of the paper.
%Code and datasets are available at \textbf{\red{github}}.

\section{Limitations of Signed Message Passing}{
    \label{sec:limitation}
    In this section, we introduce two limitations of signed message passing (SMP): (1) {undesirable} representation update and (2) {vulnerability} to over-smoothing.
To this end, we begin by introducing relevant notations and preliminaries.
We provide all proofs in Appendix~\ref{appendix:fullproof}.

\subsection{Preliminaries}

\textbf{Basic terminologies.}
For any positive integer $N\in \mathbb{N_+}$, we denote the set $\{1, 2, \ldots, N\}$ as $[N]$.
For any matrix $\mathbf{M}$, we use $\mathbf{M}_{i} = \mathbf{m}_{i}$ to denote its $i$-th row, and we use $\mathbf{M}_{ij}$ to denote the element in the $i$-th row and $j$-th column of $\mathbf{M}$.
Let $\mathcal{G=(V, E, \mathbf{X})}$ {denote} an undirected graph, where $\mathcal{V}=\{v_1, v_2,\cdots,v_N\}$ represents the node set and $\mathcal{E}$ represents the edge set. 
The nodes are characterized by a node feature matrix $\mathbf{X} \in \mathbb{R}^{N\times d}$, where $d$ denotes the number of features per node. 
Let $\mathbf{y} \in [C]^{N}$ be the node label matrix where $y_{i} \in [C]$ denotes the label of node $v_{i}$, and $C$ denotes the number of classes.
A node pair with the same label is termed \textit{homophilic nodes}, while a node pair with distinct labels is called \textit{heterophilic nodes}.
Here, the edge set $\mathcal{E}$ can be represented as an adjacency matrix $\mathbf{A} \in \left\{0,1\right\}^{N\times N}$ where $\mathbf{A}_{ij} = 1$ if $\{v_{i},v_{j}\} \in \mathcal{E}$ and $\mathbf{A}_{ij} = 0$ otherwise.
%Lastly, we use $\Vert \mathbf{M} \Vert $ to denote the spectral norm of a matrix $\mathbf{M}$.
%We define $\mathbf{Y} \in \mathbb{R}^{N \times C}$ as the node label matrix, \red{where $C$ is the number of labels}. 
%Let $\mathbf{A} \in \left\{0,1\right\}^{N\times N}$ denote the adjacency matrix that describes the edge set $\mathcal{E}$. The spectral norm of a matrix is denoted as $\Vert \cdot \Vert $, and $[C]$ represents the label set $\left\{1,2,...,C \right\}$. 
%\red{We use}  $y_v \in [C]$ \red{to denote} the label of node $v$. We use bold font to denote vectors and matrices. 
%\red{For any matrix $\mathbf{M}$, $\mathbf M_i$ denotes the $i$-th row of $\mathbf{M}$, and $\mathbf{M}_{ij}$ refers to the element in the $i$-th row and $j$-th column.}
%Let $\mathbf{A}_{ij}$ refer to the element in the $i$-th row and $j$-th column of matrix $\mathbf{A}$. 

\textbf{Message passing.}
Given a graph $\mathcal{G}$ with its adjacency matrix $\mathbf{A}$, \textit{Message Passing} (\texttt{MP}) involves iteratively updating node embeddings by aggregating neighbors' information. The $d'$-dimensional node embeddings at each $k$-th layer, denoted as $\mathbf{H}^{(k)} \in \mathbb{R}^{N \times d'}$, can be expressed as follows:
\begin{equation}\label{eq:1}
    \mathbf{H}^{(k)} = \texttt{MP}(\mathbf{A}, \mathbf{H}^{(k-1)}, \mathbf{W}) = \sigma\left(\mathbf{A}\mathbf{H}^{(k-1)}\mathbf{W}\right),
\end{equation}
where $\sigma(\cdot)$ denotes a (non-linear) activation function, and $\mathbf{W}$ denotes a learnable weight matrix.
Instead of directly employing $\mathbf{A}$, many architectures utilize its variant denoted as $\mathcal{A}$, which we call a \textbf{\textit{propagation matrix}}. Examples of $\mathcal{A}$ include normalized $\mathbf{A}$ in~\citet{kipf2016semi} ($\mathcal{A}=\mathbf{D}^{-0.5}\mathbf{A}\mathbf{D}^{-0.5}$ with $\mathbf{D}$ being the diagonal degree matrix) and learnable ones in Graph Attention Networks~\citep{velivckovic2018graph} (where non-zero elements are computed using adjacent nodes' representations). We call the element $\mathcal{A}_{ij}$ the \textbf{\textit{propagation coefficient}} for edge $\{v_i,v_j\}$. To clarify, Eq. \eqref{eq:1} represents the message passing frameworks utilizing the mean aggregator as its aggregation function. There exist other design choices. We use this definition as Eq. \eqref{eq:1} is sufficiently general to analyze SMP models.

\textbf{Signed message passing.}
%To extend the applicability of $\texttt{MP}$ to heterophilic graphs, 
\textit{Signed Message Passing} (SMP) enables $\mathcal{A}$ to incorporate both positive and negative values~\citep{chien2020adaptive,bo2021beyond,luan2022revisiting,yan2022two}. By introducing more flexibility in $\mathcal{A}$, SMP facilitates adaptive updates of node embeddings in heterophilic graphs, as elaborated in Section~\ref{subsec:lim1-undesirable}. 

\subsection{Limitation 1: Undesirable Embedding Update}\label{subsec:lim1-undesirable}

We first outline the mechanism of SMP and its particular benefits in heterophilic graphs.
Subsequently, we show that, contrary to our expectation, this mechanism does not consistently yield beneficial updates of node embeddings.

\textbf{Desirable embedding update.}
For accurate node classification, how should node embeddings be updated?
In general, nodes with similar embeddings are more likely to be classified into the same class. 
Thus, intuitively, in an {ideal} scenario, embeddings of homophilic nodes would be closely located, while embeddings of heterophilic nodes would be relatively distant from each other.
``Desirable'' SMP (i.e., desirable $\mathcal{A}$) facilitates this circumstance by assigning positive and negative propagation coefficients to homophilic and heterophilic node pairs, respectively. This assignment is based on the understanding that positive coefficients tend to increase similarity between the embedding of pairs, while negative coefficients have the opposite effect. This phenomenon has been observed in numerous prior studies~\citep{bo2021beyond, yan2022two, choi2023signed}.

% \begin{lemma}[Impact of $\mathcal{A}$]\label{lemma:naivedist}
%     Assume a non-isolated node $v_i$ and its neighbor $v_j$.
%     For a given node feature matrix $\mathbf H$ and two propagation matrices $\mathcal{A}^{+}$ and $\mathcal{A}^{-}$ such that $\mathcal{A}^{+}_{t\ell} = \mathcal{A}^{-}_{t\ell}, \forall v_{t},v_{\ell}$ except that $\mathcal{A}_{ij}^{-} < 0 < \mathcal{A}_{ij}^{+}$.
%     For $\mathbf H^{+} = {\texttt{MP}}(\mathbf H, \mathcal{A}^{+},\mathbf W)$ and $\mathbf H^{-} = {\texttt{MP}}(\mathbf H, \mathcal{A}^{-},\mathbf W)$, the following holds: $\lVert \mathbf H^{+}_{i} - \mathbf H_{j}\rVert_{2} \leq \lVert \mathbf H_{i} - \mathbf H_{j}\rVert_{2} \leq \lVert \mathbf H^{-}_{i} - \mathbf H_{j}\rVert_{2}$.
% \end{lemma}

Based on the suggested {ideal} scenario, a desirable propagation matrix is defined as follows:
\begin{definition}[{Desirable} matrix]\label{def:good}
    A matrix $\mathbf{M} \in \mathbb{R}^{N \times N}$ is \textbf{\textit{desirable}} if and only if
    the following holds: for any $v_i,v_j \in \mathcal{V}$, $\mathbf{M}_{ij}\ge 0$, if $y_i=y_j$; $\mathbf{M}_{ij}\le 0$, if $y_i\neq y_j$. 
\end{definition}

\textbf{Message passing scheme.}
In practice, multi-layer GNNs are commonly employed to expand the receptive field of graph convolution.
Following prior analyses~\citep{chien2020adaptive,bo2021beyond} we consider linear SMP with multiple layers, formalized as:
\begin{align}\label{eq:final_embedding}
    \mathbf{H}^{(K)} & = \mathcal{A}^{(K)}\cdots\mathcal{A}^{(1)}\mathbf{H}^{(0)}  \nonumber 
    \\ & = \prod\nolimits_{k=0}^{K-1}\mathcal{A}^{(K-k)}\mathbf{H}^{(0)} \coloneqq \mathcal{T}\mathbf{H}^{(0)},
\end{align}
where $\mathcal{A}^{(k)}$ is a (learnable) propagation matrix for the $k$-th layer, $\mathcal{T}:=\prod_{k=0}^{K-1}\mathcal{A}^{(K-k)}$ is the \textbf{\textit{cumulative propagation matrix}}, and $\mathbf{H}^{(0)} = f_\theta(\mathbf X)$ with $f_\theta(\cdot)$ being a neural network parameterized by $\theta$. 

%Typically, matrices $\mathcal{A}^{(k)}$s contain the same number of non-zero elements as the adjacency matrix, e.g., the ones used in GAT~\cite{velivckovic2018graph} and GCN~\citep{kipf2016semi}. As a result, their product, $\mathcal{T}$, has the same number of non-zero elements as the $K$-th power of the adjacency matrix ($\mathbf{A}^K$) where $\mathbf{A}_{ij}^K \neq 0$ if $v_i$ and $v_j$ have a distance at most $K$. Therefore, $\mathcal{A}^{(k)}$ and $\mathcal{T}$ contain propagation coefficients for direct and $K$-hop reachable neighbors, respectively. 

Typically, matrices $\mathcal{A}^{(k)}$'s have the same sparsity pattern of the adjacency matrix $\mathbf{A}$. Note that the sparsity pattern identifies the specific coordinates within the matrix that contain non-zero elements. Thus, their product, $\mathcal{T}$, has the same sparsity pattern of $\mathbf{A}^K$, i.e., the $K$-th power of the adjacency matrix, where $\mathbf{A}_{ij}^K \neq 0$ only if $v_i$ and $v_j$ have a distance at most $K$. 
That is, $\mathcal{A}^{(k)}$ and $\mathcal{T}$ can have non-zero propagation coefficients for direct neighbors and $K$-hop reachable neighbors, respectively. 

\textbf{Theoretical result.}
Under the aforementioned design choice of SMP, one would expect that with desirable $\mathcal{A}^{(k)},\forall k \in [K]$, the resulting $\mathcal{T}$ should also be desirable.
Surprisingly, our analysis reveals the existence of counterexamples that are prevalent in the real worlds, specifically graphs with more than two distinct node classes.
\begin{theorem}[Undesirability of SMP in multi-class cases]\label{thm:smp_in_multi_class}
    There exists a graph $\mathcal G$ with more than two distinct node classes (i.e., $C > 2$) where all propagation matrix $\mathcal{A}^{(1)},\mathcal{A}^{(2)},\cdots , \mathcal{A}^{(K)}$ are \textbf{desirable} (Def.~\ref{def:good}), but the cumulative propagation matrix $\mathcal{T}$ is \textbf{not desirable}.
\end{theorem}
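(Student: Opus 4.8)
The plan is to exhibit an explicit minimal counterexample rather than argue abstractly, since the theorem only asserts existence. The guiding intuition is a sign-flip phenomenon: along a two-hop path $v_i - v_m - v_j$ in which the intermediate node $v_m$ is heterophilic to both endpoints, a desirable propagation matrix must assign nonpositive coefficients to both incident edges, so the contribution of this path to $\mathcal{T}_{ij}$ is a product of two nonpositive numbers, hence \emph{nonnegative}. When $v_i$ and $v_j$ are themselves heterophilic, desirability of $\mathcal{T}$ would instead demand $\mathcal{T}_{ij} \le 0$, creating the conflict. The essential point is that such a configuration---three mutually heterophilic nodes on a length-two path---can only occur when $C > 2$.

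First I would fix the smallest graph realizing this: the path $v_1 - v_2 - v_3$ with three distinct labels $y_1 = 1$, $y_2 = 2$, $y_3 = 3$, and set $K = 2$ so that $\mathcal{T} = \mathcal{A}^{(2)}\mathcal{A}^{(1)}$. Next I would choose the two layer matrices to share the sparsity pattern of $\mathbf{A}$ (optionally with self-loops) and to satisfy Def.~\ref{def:good}: in particular the two heterophilic edges receive strictly negative coefficients, $\mathcal{A}^{(k)}_{12}, \mathcal{A}^{(k)}_{23} < 0$, while any diagonal (self-loop) entries are nonnegative. By construction each $\mathcal{A}^{(k)}$ is desirable.

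Then I would compute the entry connecting the two heterophilic endpoints, $\mathcal{T}_{13} = \sum_m \mathcal{A}^{(2)}_{1m}\mathcal{A}^{(1)}_{m3}$, and check that every summand vanishes except the one routed through the common neighbor $v_2$: the terms with $m \in \{1,3\}$ die because $v_1$ and $v_3$ are non-adjacent, forcing $\mathcal{A}^{(1)}_{13} = \mathcal{A}^{(2)}_{13} = 0$. Hence $\mathcal{T}_{13} = \mathcal{A}^{(2)}_{12}\mathcal{A}^{(1)}_{23} > 0$ as a product of two strictly negative numbers, whereas desirability of $\mathcal{T}$ requires $\mathcal{T}_{13} \le 0$ since $y_1 \ne y_3$. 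This contradiction establishes that $\mathcal{T}$ is not desirable.

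The step needing the most care is verifying that no other path contributes to $\mathcal{T}_{13}$, so that the positive two-hop term cannot be cancelled; this is exactly where the unique-two-hop-path structure of the bare path graph is convenient, and where self-loops must be handled (they only add terms weighted by the zero factor $\mathcal{A}_{13}$). I would also contrast this with $C = 2$ to explain why the multi-class hypothesis is necessary: with two labels, $y_1 \ne y_2$ and $y_2 \ne y_3$ force $y_1 = y_3$, so the endpoints become homophilic and the nonnegative path contribution is precisely what desirability of $\mathcal{T}$ wants---no conflict arises. Finally, if a version for larger $K$ is desired, the same idea extends by lengthening the path while keeping three mutually heterophilic nodes at its ends, but for the existence claim the three-node instance suffices.
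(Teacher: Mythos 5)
Your proposal is correct and follows essentially the same route as the paper's own proof: the paper also uses the three-node path $v_1$--$v_2$--$v_3$ with labels $y_1=1$, $y_2=2$, $y_3=3$, observes that the two heterophilic edges force negative coefficients whose product makes $\mathcal{T}_{13}>0$ despite $y_1\neq y_3$, and likewise explains that the parity-of-sign-flips argument only rescues the binary case. Your version is, if anything, slightly more careful than the paper's in explicitly verifying that no other path (or self-loop term) contributes to $\mathcal{T}_{13}$, but the underlying counterexample and argument are identical.
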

% Theorem~\ref{thm:smp_in_multi_class} states that in the multi-class setting, while every one-hop propagation matrix is considered \textit{desirable}, their product, which includes the propagation coefficients for $K$-hop reachable neighbors, may \textbf{\textit{not be desirable}}.
The presence of undesirable $\mathcal{T}$ implies that, in the embedding space, nodes may end up being close to their heterophilic multi-hop neighbors, resulting in similar embeddings and, consequently, classification into the same class.

% Previous Motivation
%The representation of $v_i$ is formed by linearly combining the representations of its $K$-hop neighbors $v_j$, with each neighbor representation multiplied by the coefficient $\mathcal{T}_{ij}$.
%Even though $\mathcal{T}$ contains the effective coefficients, neural networks directly learn $\mathcal{A}^{(k)}$. Therefore, we are interested in determining whether the desirable $\mathcal{A}^{(k)}$ can result in the desirable $\mathcal{T}$.

% We theoretically elaborate on this property below:
% \begin{proposition}[Success in binary-class case]
%     Consider a graph $\mathcal G$ where nodes \red{are} classified into two classes ($C = 2$).
%     If the coefficient matrices $\mathcal{A}^{(1)},\mathcal{A}^{(2)},\cdots , \mathcal{A}^{(K)}$ are all \textbf{\red{desirable}}, then the cumulative coefficient matrix $\mathcal{T}$ is also \textbf{\red{desirable}}.
% \end{proposition}

% \begin{proposition}\label{proposition:smp_in_multi_class}
%     [Failure in multi-class case]
%     There exists a graph $\mathcal G$ where nodes are classified into $C$ ($>2$) classes such that when coefficient matrices $\mathcal{A}^{(1)},\mathcal{A}^{(2)},\cdots , \mathcal{A}^{(K)}$ are all \textbf{desirable}, the cumulative coefficient matrix $\mathcal{T}$ is still \textbf{not desirable}.
% \end{proposition}

\subsection{Limitation 2: Vulnerability to Oversmoothing}
Next, we introduce another major limitation of SMP, i.e., its susceptibility to oversmoothing. Previous works have primarily focused on the potential advantages of SMP in mitigating oversmoothing, particularly its capacity to utilize negative propagation coefficients to separate nodes within the embedding space~\citep{chien2020adaptive,bo2021beyond,yan2022two}.
However, through our theoretical and empirical investigation, we verify that even in a highly ideal setting, SMP is still susceptible to oversmoothing.

%Next, we delve into another major limitation of SMP, its susceptibility to the oversmoothing issue. Previous works have established the relationship between heterophily and oversmoothing issues by leveraging the ability of negative propagation coefficient to push away from nodes in the embedding space~\citep{chien2020adaptive,bo2021beyond,yan2022two}.
%However, through our theoretical and empirical investigation, we verify that even in a highly ideal setting, SMP is still susceptible to oversmoothing.

Below, we first introduce the analysis setup and then present our theoretical findings.
Lastly, we review simulations designed to validate our theoretical findings.

%We first introduce a stochastic block model and present theoretical findings on the smoothing process of node embedding under the model. 
%Then, we perform synthetic experiments using the same model to validate our theoretical findings.
\begin{figure*}[t]
    \vspace{-4mm}
    \subfigure[binary case]{\includegraphics[width=0.67\columnwidth]{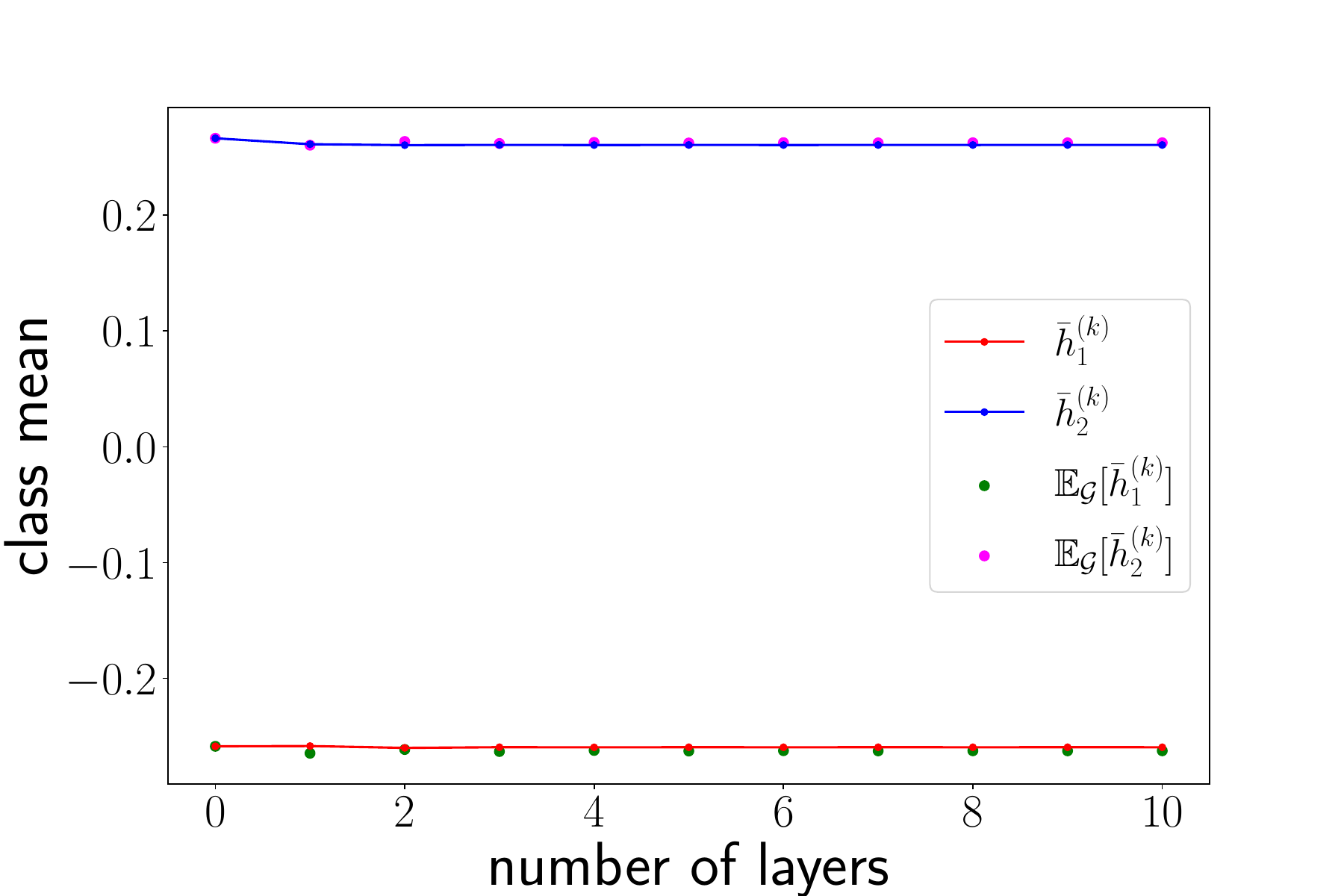}}    
    \subfigure[multi-class ($C=3$) case]{\includegraphics[width=0.67\columnwidth]{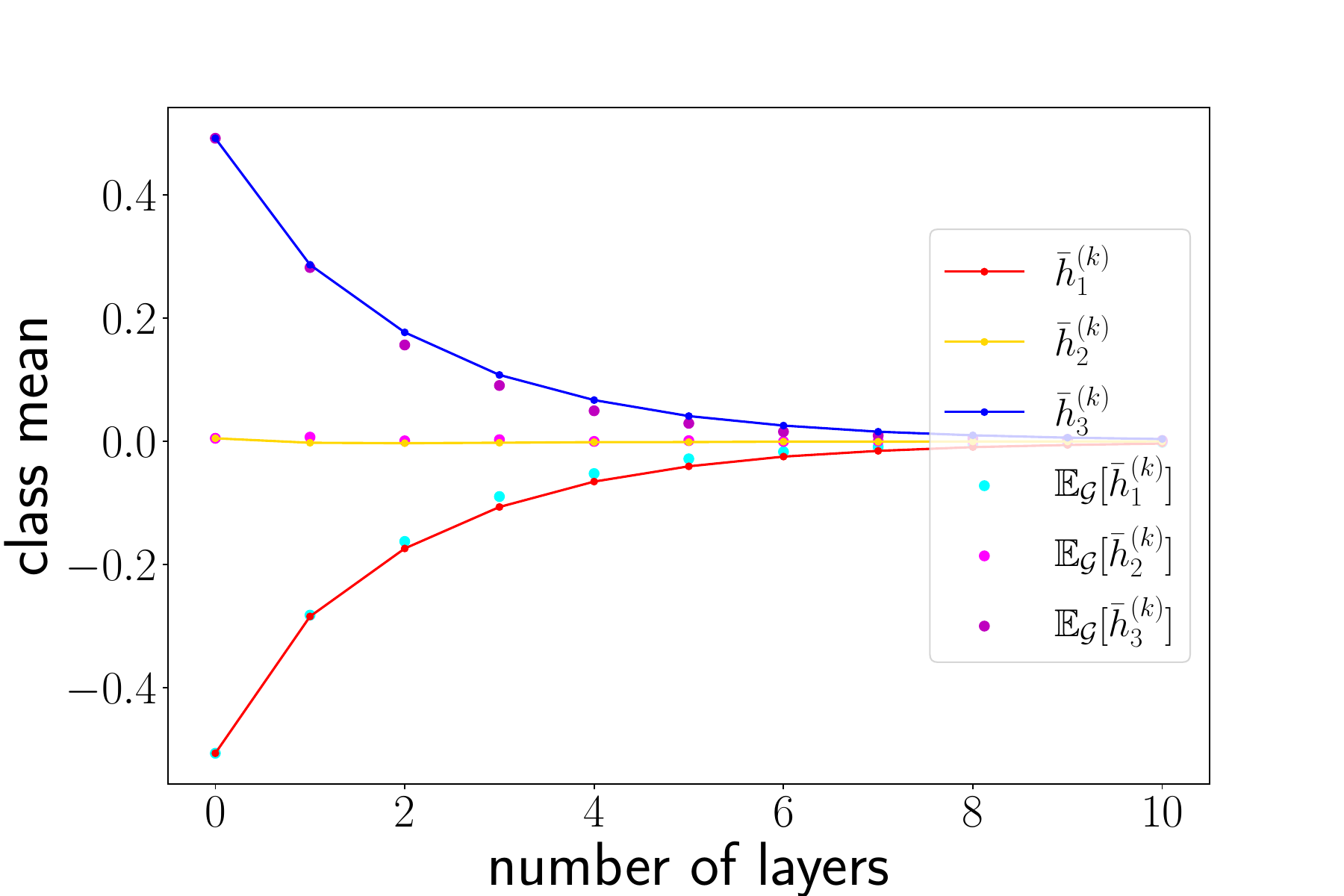}}
    \subfigure[score]{\includegraphics[width=0.67\columnwidth]{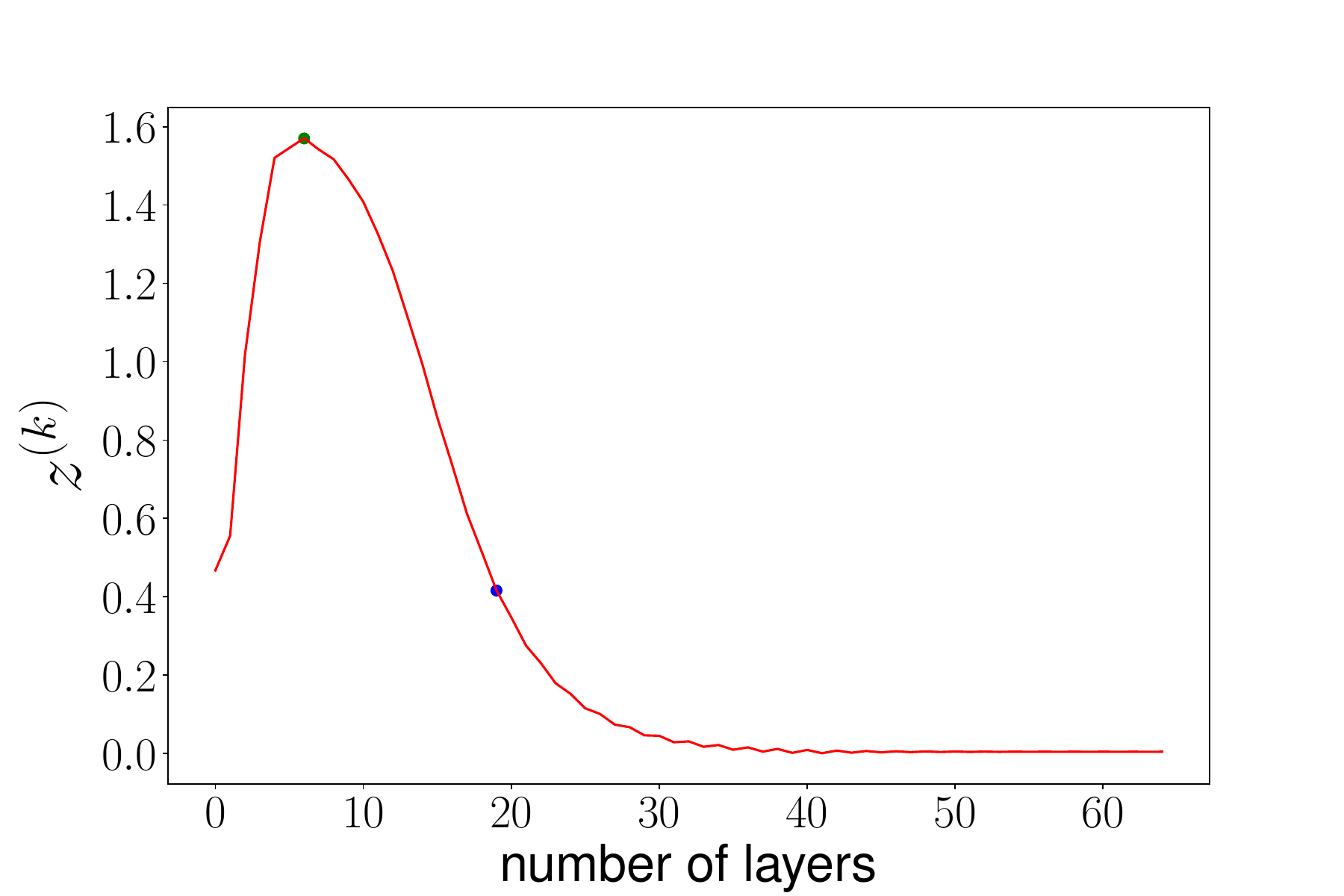}} \\
    \vspace{-2mm}
    \caption{Simulation results for (a) binary and (b) multi-class classification reveal that, as the number of layers increases, the mean embeddings of different classes converge in the multi-class case but not in the binary case.
    That is, SMP suffers from oversmoothing in the multi-class case, as also supported by (c) the drop of $z^{(k)}$, a score positively associated with the accuracy of the Bayes optimal classifier.
    }
    \label{fig: simulation}
\end{figure*}

\textbf{Setup: (1) Propagation matrix and features.} 
In this analysis, we assume a desirable propagation matrix $\mathcal{A}$ and node features $\mathbf{X}$ generated by a variant of the Contextual Stochastic Block Model (CSBM)~\citep{deshpande2018contextual}, where each node feature is generated from a particular Gaussian distribution corresponding to its class, and edges are sampled independently from Bernoulli distributions. CSBMs have been widely employed for theoretical analyses of graphs~\citep{,wu2022non,DBLP:journals/corr/abs-2402-04621}
%The analysis of the product of attention-based propagation matrices depends on specific properties, namely row-stochasticity and non-negativity~\citep{wu2023demystifying}. However, these properties do not apply to SMP when using a learnable signed propagation matrix. Therefore, we construct an SMP scheme by utilizing an unlearnable signed adjacency matrix.
%Details about this CSBM are provided in Def.~\ref{def:csbm}.

\begin{definition}[CSBM for $\mathcal{A}$ and $\mathbf{X}$]\label{def:csbm}
    %Assume $N$ number of nodes and $C$ number of classes of equal size.
    Assume $N$ nodes and $C$ classes, each having an equal number of nodes.
    For each node $v_{i} \in \mathcal{V}$, its feature vector $\mathbf{x}_{i}$ is sampled from $\mathcal{N}(\mathbf{u}_{y_{i}}, \Sigma) \in \mathbb{R}^{f}$.
    For each node pair $\{v_{i},v_{j}\} \in \binom{\mathcal{V}}{2}$, (1) $\mathcal{A}_{ij}=1$, if $(y_{i} = y_{j}) \vee (\mathcal{U} < p)$ holds, and (2) $\mathcal{A}_{ij}=-1$, if $(y_{i}\neq y_{j}) \vee (\mathcal{U} < q)$ holds, where $\mathcal{U} \sim \text{uniform(0,1)}$ is sampled independently for each node pair. %If $q>p$, the generated graphs are considered heterophilic.
    %In turn, we have a graph $\mathcal{G} = (\mathcal{A},\mathbf{X})$, which 
    As a result, we have a desirable propagation matrix $\mathcal{A} \in \{-1,0,1\}^{N \times N}$ and a node feature matrix $\mathbf{X} \in \mathbb{R}^{N \times f}$. 
\end{definition}
%Note that $\mathcal{A}$ in Def.(Def.~\ref{def:csbm}) is desirable (Def.~\ref{def:good}).

%Assume $N$ nodes and $C$ classes, each having an equal number of instances
%    For each node $v_{i} \in \mathcal{V}$, its feature vector $\mathbf{x}_{i}$ is sampled from $\mathcal{N}(\mathbf{u}_{y_{i}}, \Sigma) \in \mathbb{R}^{f}$.
%    For each node pair $\{v_{i},v_{j}\} \in \binom{\mathcal{V}}{2}$, the pair is joined by an edge (1) with an edge weight $1$ if $(y_{i} = y_{j}) \vee (\mathcal{U} < p)$ holds or (2) with an edge weight $-1$ if $(y_{i}\neq y_{j}) \vee (\mathcal{U} < q)$ holds, where $\mathcal{U} \sim \text{uniform(0,1)}$ is sampled independently for each node pair. If $q>p$, the generated graphs are considered heterophilic.
%    In turn, we have a graph $\mathcal{G} = (\mathcal{A},\mathbf{X})$, which consists of a signed adjacency matrix $\mathcal{A} \in \{-1,0,1\}^{N \times N}$ and a node feature matrix $\mathbf{X} \in \mathbb{R}^{N \times f}$. 

\textbf{Setup: (2) Message passing.}
Following prior oversmoothing analyses~\citep{zhou2021dirichlet,keriven2022not,wu2022non,DBLP:conf/icml/LeeBYS23}, we employ a simple message passing scheme, spec., the product of weighted (for SMP, signed) propagation matrices without non-linear activation.
Formally, given a signed propagation matrix $\mathcal{A} \in \{-1,0,1\}^{N \times N}$, we normalize it as $\mathcal{P}\coloneqq {D}^{-\frac{1}{2}}\mathcal{A}D^{-\frac{1}{2}}$, where $D$ is a diagonal matrix with $D_{ii} = \sum_{j=1}^{N}\vert \mathcal{A}_{ij}\vert, \forall i\in [N]$. 
Then, the node embedding matrix at the $K$-th layer is:
\begin{equation}\label{eq:theoryembedding}
    \mathbf{H}^{(K)} = \mathcal{P}\mathbf{H}^{(K-1)} = \mathcal{P}^{K}\mathbf{H}^{(0)} \coloneqq  \mathcal{P}^{K}\mathbf{X}.
\end{equation}
% Note that this SMP scheme is defined under an idealized condition where the labels are leaked to determine the sign of $\mathcal{A}_{ij}$, thus it should be more powerful than a practical one. 
%The reason to use this specially designed SMP is that $\mathcal{L} = \mathcal{I}-\mathcal{P}$ is a well-defined Laplacian of signed graphs~\citep{atay2014spectrum}, which guarantees that the spectral norm of $\mathcal{P}^{K}$ is bounded even when many layers are stacked. 
%Note that this SMP is very similar to the scheme of . Moreover, 
As SMP in our theoretical analysis, we use Eq.~\eqref{eq:theoryembedding}, which is very similar to FAGCN~\citep{bo2021beyond}.
It has a desirable property: $\mathcal{L} = \mathcal{I}-\mathcal{P}$ is a well-defined Laplacian of signed graphs~\citep{atay2014spectrum}, which ensures that the spectral norm of $\mathcal{P}^{K}$ is bounded, even when infinite layers are stacked.
In Sec.~\ref{sec:experiment}, we present relevant experimental results using popular SMP-based methods and real-world graphs.

\textbf{Theoretical result.}
With CSBM (Def.~\ref{def:csbm}) and SMP (Eq.~\eqref{eq:theoryembedding}), we present our theoretical finding: SMP is vulnerable to the over-smoothing problem.
% , which is formalized as follows:
% Specifically, we present that in the multi-class setting, the expected distance between the average embeddings of two distinct classes decreases exponentially as the number of layers $K$ in SMP increases.
\begin{theorem}[Oversmoothing problem of SMP]\label{thm:oversmoothing}
    Consider random variables $\mathcal{A}$ and $\mathbf{X}$ from CSBM (Def.~\ref{def:csbm}), and recall that $\mathcal{A}$ is always desirable.
    % Consider a graph model $\mathcal{G}$ obtained by Def.~\ref{def:csbm}.    
    Let $\mathcal{V}_{c} \coloneqq \{v_{j} \in \mathcal{V} : y_{j} = c\}$ be the set of nodes belonging to each class $c\in[C]$, and %$\mathcal{V}_{c}(\mathcal{G}) \coloneqq \mathcal{V}_{c}$ (i.e., $\mathcal{V}_{c} \coloneqq \{v_{j} \in \mathcal{V} : y_{j} = c\}$).
    let $\bar{\mathbf{h}}^{(K)}_{c} \coloneqq \frac{C}{N}\sum_{v_{i} \in \mathcal{V}_{c}}\mathbf{h}^{(K)}_{i}$ be the mean of the $K$-th layer embeddings of $\mathcal{V}_{c}$. 
    % as $\bar{\mathbf{h}}^{(K)}_{c}$ (i.e., 
    Then, for any two classes $a, b\in [C]$, 
    % the following holds:
    % we have:
    \begin{equation}\label{eq:dist_form}
        \mathbb{E}_{\mathcal{G}} [\lVert \bar{\mathbf{h}}^{(K)}_{a} - \bar{\mathbf{h}}^{(K)}_{b}\rVert] = \left(\frac{p + q}{p + (C-1)q}\right)^{K}\lVert \mathbf{u}_{a} - \mathbf{u}_{b} \rVert.
    \end{equation}
\end{theorem}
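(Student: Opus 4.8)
The plan is to track the per-class mean embeddings $\bar{\mathbf{h}}^{(K)}_{c}$ through the linear recursion $\mathbf{H}^{(K)} = \mathcal{P}\mathbf{H}^{(K-1)}$ and to show that, once one takes the \emph{pairwise difference} of two class means, the dynamics collapse to a single scalar factor. First I would handle the base case: since $\mathbf{H}^{(0)}=\mathbf{X}$ and each $\mathbf{x}_i \sim \mathcal{N}(\mathbf{u}_{y_i},\Sigma)$, linearity of expectation gives $\mathbb{E}[\bar{\mathbf{h}}^{(0)}_{c}] = \mathbf{u}_{c}$ for every class $c$, so the claimed right-hand side of Eq.~\eqref{eq:dist_form} is exactly $\lVert\mathbf{u}_a-\mathbf{u}_b\rVert$ at $K=0$.

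Next I would derive the one-layer transition for the expected class means. Writing $\bar{\mathbf{h}}^{(K)}_{c} = \frac{C}{N}\sum_{v_i\in\mathcal{V}_c}\sum_j \mathcal{P}_{ij}\mathbf{h}^{(K-1)}_j$ and grouping neighbors $j$ by their class $b$, the block structure of the CSBM makes each class's contribution homogeneous: by Definition~\ref{def:csbm} an entry $\mathcal{A}_{ij}$ has expectation $+p$ when $y_i=y_j$ and $-q$ when $y_i\neq y_j$, while by the symmetry of the model every node shares the same expected degree $\bar d = \tfrac{N}{C}\bigl(p+(C-1)q\bigr)$ under the normalization $\mathcal{P}=D^{-1/2}\mathcal{A}D^{-1/2}$. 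Combining the expected block weights with $\bar d$ yields the recursion
$$\mathbb{E}[\bar{\mathbf{h}}^{(K)}_{c}] = \frac{1}{p+(C-1)q}\Bigl(p\,\mathbb{E}[\bar{\mathbf{h}}^{(K-1)}_{c}] - q\sum_{b\neq c}\mathbb{E}[\bar{\mathbf{h}}^{(K-1)}_{b}]\Bigr).$$

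The crucial algebraic step is to subtract the recursions for two classes $a$ and $b$. The off-diagonal sums telescope, since $\sum_{c\neq b}\mathbb{E}[\bar{\mathbf{h}}^{(K-1)}_{c}] - \sum_{c\neq a}\mathbb{E}[\bar{\mathbf{h}}^{(K-1)}_{c}] = \mathbb{E}[\bar{\mathbf{h}}^{(K-1)}_{a}] - \mathbb{E}[\bar{\mathbf{h}}^{(K-1)}_{b}]$ (the terms for $c\notin\{a,b\}$ cancel), so the $+p$ diagonal term and the surviving $+q$ cross term merge into $p+q$, giving
$$\mathbb{E}[\bar{\mathbf{h}}^{(K)}_{a}] - \mathbb{E}[\bar{\mathbf{h}}^{(K)}_{b}] = \frac{p+q}{p+(C-1)q}\bigl(\mathbb{E}[\bar{\mathbf{h}}^{(K-1)}_{a}] - \mathbb{E}[\bar{\mathbf{h}}^{(K-1)}_{b}]\bigr).$$
Iterating this scalar recursion $K$ times against the base case and taking norms yields Eq.~\eqref{eq:dist_form}. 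I would close by extracting the oversmoothing interpretation: the factor equals $1$ precisely when $C=2$ (no contraction, robustness in the binary case) but is strictly less than $1$ for $C>2$, so all class means collapse geometrically in the multi-class regime, matching the simulations in \cref{fig: simulation}.

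The main obstacle is the degree-normalization term. Because $D$ is random and enters as $\sqrt{D_iD_j}$ coupled both to the random sign pattern $\mathcal{A}_{ij}$ and to the graph-dependent embeddings $\mathbf{h}^{(K-1)}_j$, the expectation of the product is not literally the product of expectations, and the outer norm in $\mathbb{E}_{\mathcal{G}}[\lVert\cdot\rVert]$ does not commute with $\mathbb{E}_{\mathcal{G}}$. I would discharge this by exploiting the exchangeability of nodes within each class—every node has an identically distributed degree and neighborhood—to reduce $\mathbb{E}_{\mathcal{G}}$ acting on the \emph{class means} to the deterministic block recursion above, effectively replacing the random normalization by its common expected value $\bar d$. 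Justifying this reduction and the interchange with the norm, rather than the subsequent linear algebra, is where the real work lies.
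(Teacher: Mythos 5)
Your proposal follows essentially the same route as the paper's proof: the same mean-field recursion for the expected class means, the same cancellation of the off-diagonal sums upon subtraction of the two recursions (yielding the factor $(p+q)/(p+(C-1)q)$), and the same geometric iteration from the base case $\mathbb{E}_{\mathcal{G}}[\bar{\mathbf{h}}_c^{(0)}] = \mathbf{u}_c$. The obstacle you flag at the end --- that the random degree normalization is coupled to the sign pattern and to $\mathbf{H}^{(K-1)}$, and that $\mathbb{E}_{\mathcal{G}}[\lVert\cdot\rVert]$ does not commute with $\mathbb{E}_{\mathcal{G}}$ --- is precisely the step the paper's own proof passes over silently: it asserts the recursion outright (in effect computing $\lVert\mathbb{E}_{\mathcal{G}}[\bar{\mathbf{h}}_a^{(K)}-\bar{\mathbf{h}}_b^{(K)}]\rVert$ rather than the stated expectation of the norm) and relegates control of the fluctuations to a separate concentration bound in the appendix.
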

% \begin{equation}
    %     \mathbb{E}_{\mathcal{G}}\left[\bar{\mathbf{h}}^{(K)}_{a} - \bar{\mathbf{h}}^{(K)}_{b}\right] = \left(\frac{p + q}{p + (C-1)q}\right)^{K}(\mathbf{u}_{a} - \mathbf{u}_{b}).
    % \end{equation}
Refer to Appendix \ref{app:concentration} for a concentration bound of $\Vert(\bar{\mathbf{h}}^{(K)}_{a} - \bar{\mathbf{h}}^{(K)}_{b})-\mathbb{E}_{\mathcal{G}} [ \bar{\mathbf{h}}^{(K)}_{a} - \bar{\mathbf{h}}^{(K)}_{b}]\Vert$.
Theorem~\ref{thm:oversmoothing} states that, even when $\mathcal{A}$ is desirable, if $C>2$ (i.e., multi-class), the expected distance between the average embeddings of any pair of classes can decrease exponentially as the number of layers $K$ increases.
That is, the separability of different class embeddings significantly decreases as more layers are stacked, revealing SMP's susceptibility to oversmoothing.

\textbf{Simulation result.} 
Fig.~\ref{fig: simulation} shows the simulation results for binary-class and multi-class ($C=3$) cases with 20 pairs of $\mathcal{A}$ and $\mathbf{X}$ from CSBM (Def.~\ref{def:csbm}).
For the binary case, we set $\mathbf u_1=-0.25$, $\mathbf u_2=0.25$, and for the multi-class case, we set $\mathbf u_1=-0.5$, $\mathbf u_2=0$, and $\mathbf u_3=0.5$. 
The other parameters are $N=3000$, $p=0.003$, and $q=0.01$.
% Parameters including $N=3000$, $p=0.003$, $q=0.01$, and $\Sigma=1$ are utilized. 
One-dimensional features are used for illustrative purposes. We have several observations.
(1) The expected and actual class means are close, validating that Theorem \ref{thm:oversmoothing} is meaningful.
% Similarity between expected (Theorem \ref{thm:oversmoothing}) and actual class means. (2) 
(2) SMP is robust to oversmoothing in the binary-class case, i.e., the difference between two class means does not decrease.
% Robustness of SMP in the binary case, with almost unchanged differences. 
(3) SMP suffers from oversmoothing in the multi-class case, with the differences between any two class means decreasing as the number of layers increases.
% Significant reduction in differences with increasing layers in the multi-class case, indicating oversmoothing. 
(4) We further use a numerical score $z^{(k)}$ measuring the distinguishability between different classes, i.e., class 1 
and class 2.\footnote{The score is defined as $z^{(k)} =(\bar{\mathbf{h}}_{2}^{(k)}- \bar{\mathbf{h}}_{1}^{(k)})/ \sigma^{(k)}, \sigma^{(k)} = (\sigma_1^{(k)}+\sigma_2^{(k)})/2$, where $\sigma_1^{(k)}$ and $\sigma_2^{(k)}$ denote the variances of features from class 1 and class 2, respectively.}
The score is positively correlated with the accuracy of the optimal Bayes classifier~\citep{wu2022non}, where a higher value is desired, implying that the two classes are easier to distinguish.
As the depth increases, we observe an initial rise in the score, followed by rapid decline and ultimate convergence to zero, illustrating the typical phenomenon of oversmoothing.

While our analysis of SMP's limitations is founded on linear models, these findings can be seamlessly applied to non-linear SMP models. See Appendix~\ref{app:non-linearity} for a discussion.

}

\section{Multiset to Multiset Message Passing}{
    \label{sec:method}
    % As described in our proof (\red{Appendix}), shortcomings of SMP stem from the simple message passing function, which is a weighted summation of neighbor embeddings.
% To improve the message passing process, we present a novel method \method.}
In this section, we propose \textbf{\method} (\textbf{M}ultiset \textbf{to} \textbf{M}ultiset GNN),
our novel message-passing paradigm that provably mitigates the aforementioned limitations of SMP.

\subsection{Motivation of \method}\label{subsec:m2mgnn_motivation}
Let us first recall the limitations we observed on SMP.
First, a discrepancy arises between one-hop and multi-hop aggregation: even if we have a desirable one-hop propagation matrix $\mathcal{A}^{(k)}$s, %(i.e.,\textbf{weighted summation schemes}, 
the multi-hop propagation matrix $\mathcal{T}$ can still be undesirable (Theorem~\ref{thm:smp_in_multi_class}).
Second, even with negative propagation coefficients between them, the embeddings of different classes can be mixed up, thereby leading to oversmoothing (Theorem~\ref{thm:oversmoothing}).
All proofs are in Appendix~\ref{appendix:fullproof}.

\textbf{Limitations of simple pooling approach.}
We point out that, even with desirable weights (or signs), the simple weighted summation scheme acts as the information bottleneck of SMP.
Specifically, the weighted summation has a functional form of \textit{multiset-to-element} (\textit{m-2-e}).
In the aggregation step, multiple elements (the embeddings of neighbors) are reduced (i.e., aggregated) into a single element (a single message vector), and the embeddings of heterophilic nodes are thus mixed up.
Formally, an \textit{m-2-e} message passing function can be represented as $\mathbf{m}_i = \phi(\mathcal{S}_{i})$,
where $\mathcal{S}_{i} \coloneqq \left\{\mathbf{h}_j: v_j \in \mathcal{N}(v_i) \right\}$,\footnote{Precisely, $\mathcal{S}^{k}_{i}$ consists of $k-1$ layer outputs of $v_{i}$ neighbors. For simplicity, superscripts denoting layers are omitted.} 
and $\phi(\cdot)$ is a mapping function defined as follows.
\begin{definition}\label{def:mapping}
    A mapping function $\phi(\cdot)$ takes a set of vectors as input and produces a single element as output, by applying a learnable weight matrix to each element of the set, followed by element-wise pooling (sum, mean, and max) to aggregate all elements. 
\end{definition}
For instance, using sum pooling and learnable weight matrix $\mathbf{W}$ yields a mapping $\phi(\mathcal{S}_{i})=\sum_{\mathbf h_j \in \mathcal{S}_{i}} (\mathbf h_j \mathbf{W})$.

% \begin{equation}
%     \mathbf{m}_i = \phi(\mathcal{S}_{\mathcal{N}_i}),\; \mathcal{S}_{\mathcal{N}_i} \coloneqq \left\{\mathbf{h}_j: v_j \in \mathcal{N}(v_i) \right\}\footnote{Here, the superscript of the layer is omitted.},
% \end{equation}

% {\begin{equation}
%     \mathbf{m}_i = \phi(\mathcal{S}_{i}), \quad \mathcal{S}_{i} \coloneqq \left\{\mathbf{h}_j: v_j \in \mathcal{N}(v_i) \right\}\footnote{For simplicity, the superscript that denotes the layer is omitted.},
% \end{equation}}

% \textbf{Proposed: Concatenation-based approach.}
% Then, what if we aggregate embeddings into multiple elements instead? 
% In other words, we can take a functional form of \textit{multiset-to-multiset} (\textit{m-2-m}). 
% Mathematically, we defined a \textit{m-2-m} scheme with class-based subset construction. This scheme enables the aggregation of embeddings into multiple elements while preserving the information about class distributions.

\textbf{Multiset to multiset: Partition, mapping, and concatenation.}
Then, what if we aggregate embeddings into multiple elements instead?
In other words, we can take a functional form of \textit{multiset-to-multiset} (\textit{m-2-m}).
Specifically, given a multiset of node embeddings,\footnote{It is a multiset because different nodes may have identical embeddings, and each of them should be kept.} an \textit{m-2-m} scheme
(1) partitions the multiset into multiple subsets,
(2) maps the features in each subset into an element (e.g., a vector), and 
(3) combines the elements into a multiset.

\begin{definition}[\textit{m-2-m} schemes]\label{def:m2m}
    Given a multiset $\mathcal{X}$ where each $x_{i} \in \mathcal{X}$ is a vector $x_i \in \mathbb{R}^{d}$,
    an \textit{m-2-m} scheme $f$ does the following:
    (1) it partitions $\mathcal{X} = \bigcup_{t = 1}^{\omega} \mathcal{X}_t$
    into $\omega$ disjoint subsets for some $\omega \in \mathbb{N}$,
    (2) it applies a mapping 
    $\phi \colon 2^{\mathcal{X}} \mapsto \mathbb{R}^{d'}$ on each subset $\mathcal{X}_t$ to obtain an element (a vector) $\mathbf{z}_t = \phi(\mathcal{X}_t)$ for each subset, and
    (3) the obtained elements essentially form a multiset, and it concatenates the obtained elements into
    $f(\mathcal{X}) = \mathbf{z} = \mathbin\Vert_{t = 1}^{\omega} \mathbf{z}_t$, where
    $\mathbin\Vert$ denotes the vector concatenation operator.
\end{definition}

\textbf{Desirable \textit{m-2-m} schemes.}
There can be different \textit{m-2-m} schemes, e.g., we have diverse ways of partitioning a given multiset $\mathcal{X}$.
What kinds of \textit{m-2-m} schemes are \textit{desirable}?
Recall that the limitations of SMP primarily arise from the intermixing of various classes.
Hence, our high-level idea is to ensure that heterophilic node representations are not intertwined, i.e., when we partition $\mathcal{X}$, each $\mathcal{X}_k$ only consists of the embeddings of nodes from the same class.
Specifically, we define \textit{desirable m-2-m schemes} as follows.

\begin{definition}[Desirable \textit{m-2-m} schemes]\label{def:desirable_m2m}
    Assume for any multiset $\mathcal{X}$, each vector $x_i \in \mathcal{X}$ associated with a class $y_i$, we say
    an \textit{m-2-m} scheme is \textit{desirable}, iff,
    for any given $\cal{X}$, it always partitions $\mathcal{X}$ according to the class of $x_i$'s, i.e.,
    it partitions $\mathcal{X} = \bigcup_{t = 1}^{\omega} \mathcal{X}_t$ such that,
    if two vectors $x_i, x_j \in \mathcal{X}$ are in the same $\mathcal{X}_t$, then we must have $y_i = y_j$.
    Note that we do \textit{not} necessarily have $x_i, x_j$  in the same $\mathcal{X}_t$ if $y_i = y_j$.
\end{definition}
Based on the concept of desirable \textit{m-2-m} schemes, we further define \textit{desirable \textit{m-2-m} message passing}.

\begin{definition}[Desirable \textit{m-2-m} message passing]\label{def:desirable_m2m_mespas}
    An \textit{\textit{m-2-m} message passing} operation $f_{mp}$ is \textit{desirable}, 
    if given any node embedding matrix $\mathbf{H}$ and any node $v_i$, there exists a mapping $\phi(\cdot)$ such that the message vector of $v_i$ can be represented as
    \begin{equation}\label{eq:m2m_message_passing_overall}
     f_{mp}(\mathbf{H}, \phi(\cdot); i) = \mathbf{m}_{i} = 
     \mathbin\Vert_{t=1}^{\omega} \phi(\mathcal{S}_{i,t}),  
    \end{equation}
    where 
    % (1) $\mathcal{S}_{i,t}$'s are disjoint, i.e., $\mathcal{S}_{i,t} \neq \mathcal{S}_{i,t'}$ for any $t \neq t'$,
    (1) the union of $\mathcal{S}_{i,t}$'s is a subset of $\mathbf{H}$, i.e., $\bigcup_{t=1}^{\omega} \mathcal{S}_{i,t} \subseteq \{\mathbf{h}_j \colon j \in [N]\}$, and
    (2) each $\mathcal{S}_{i,t}$ contains embeddings of nodes from the same class,\footnote{Or $\mathcal{S}_{i,t} = \emptyset$, and we let $\phi(\mathcal{S}_{i,t}) = \mathbf{0}$, so that all the $\mathbf{m}_{i}$'s have the same dimension.} i.e., if $\mathbf{h}_j, \mathbf{h}_{j'} \in \mathcal{S}_{i,t}$ then $y_j = y_{j'}$.
    
    % the multiset $\{\mathbf{h}_j: v_j \in \mathcal{N}(v_{i})\}$ of the embeddings of the neighbors of $v_i$ is partitioned into $\mathcal{S}_{i,t}$'s, and
    % for any two $v_j$ and $v_{j'}$ in the same $\mathcal{S}_{i,t}$, we must have $y_j = y_{j'}$.
    % When $\mathcal{S}_{i,t} = \emptyset$, we let $\phi(\mathcal{S}_{i,t}) = \mathbf{0}$, so that all the $\mathbf{m}_{i}$'s have the same dimension. 

    Moreover, we say $f_{mp}$ is a \textit{\textbf{one-hop} desirable} \textit{m-2-m} message passing operation
    % if the other way around also holds, i.e.,
    % two $v_j$ and $v_{j'}$ are in the same $\mathcal{S}_{i,t}$ \textit{iff} $y_j = y_{j'}$, we call 
    % $f_{mp}$ a \textit{\textbf{one-hop} desirable} \textit{m-2-m} message passing operation, i.e.,    
    if there exists a mapping function $\phi(\cdot)$ satisfying
    \begin{equation}\label{eq:m2m_message_passing}
     f_{mp}(\mathbf{H} , \phi(\cdot); i) = \mathbf{m}_{i} = \mathbin\Vert_{t=1}^{C} \phi(\mathcal{S}_{i,t}),
    \end{equation}
    where $\mathcal{S}_{i,t} \coloneqq \{\mathbf{h}_j:y_{j} = t,v_j \in \mathcal{N}(v_{i})\}$.
\end{definition}

\textbf{Desirable \textit{m-2-m} message passing alleviates the limitations.}
Below, we shall analyze several theoretical properties of \textit{m-2-m} message passing and how it mitigates the limitations of SMP (i.e., \textit{m-2-e}) discussed in Section~\ref{sec:limitation}.
% : \textit{desirable property} and \textit{oversmoothing issue}.

\textbf{\textit{First}}, regarding desirable properties, we claim that stacking one-hop \textit{desirable} \textit{m-2-m} message passing operations always gives us a desirable (multi-hop) message passing.
Recall that this does not hold for SMP (Theorem~\ref{thm:smp_in_multi_class}).

{

\begin{lemma}[Maintenance of desirable property]\label{lem:desirablem2m}
    Assuming a \textbf{one-hop} desirable \textit{m-2-m} message passing $f_{mp}^{(k)}$ is applied to each node at each layer $k$.
    Specifically, for each layer $k$,    
    the embedding of each node $v_{i}$ at the $k$-th layer is $\mathbf{h}^{(k)}_{i} = f_{mp}^{(k)}(\mathbf{H}^{(k-1)}, \phi^{(k)}(\cdot); i) =
    \mathbin\Vert_{t=1}^{C} \phi^{(k)}(\mathcal{S}_{i,t}^{(k)})$,
    where $\mathcal{S}_{i,t}^{(k)} = \{\mathbf{h}_j^{(k-1)}:v_j \in \mathcal{N}(v_i), y_j = t\}$.
    % Let $\texttt{dis}(v_i,v_j)$ denote the distance between a node pair $v_i$ and $v_j$. 
    Then, for any $d, k \in \mathbb{N}$,  
    the $d$-hop message passing operation at the $k$-th layer
    $\rho^{(d, k)}$ defined by
    $\rho^{(d, k)}(\mathbf{H}^{(k)}, \Phi^{(k+d)}(\cdot); i) = \mathbf{h}_i^{(k + d)}$, 
    is desirable.
    Specifically, $\rho^{(d, k)}$ stacks $d$ one-hop message passing operations from $f_{mp}^{(k + 1)}$ to $f_{mp}^{(k + d)}$, and it partitions the layer-$k$ embeddings of the $d$-hop neighbors of each node, where each group contains the embeddings of $d$-hop neighbors in the same class.
    % $\rho^{(t)} : \mathcal{M}^{(t)}_{i} \mapsto \mathbf{h}^{(K)}_{i}$, where $\mathcal{M}^{(t)}_{i}=\{ \mathbf h_j^{(K-t)}: \texttt{dis}(v_i,v_j)=t \}$ is desirable regarding $\mathcal{M}^{(t)}_{i}, \forall t\in [K]$.
    %Define a $K$-hop \textit{m-2-m} mappings as $\rho^{(1)}: \mathcal{M}^{1}_{i}\mapsto %\mathbf{h}^{(k)}_{i},\cdots,\rho^{(k)}: \mathcal{M}^{t}_{i}\mapsto \mathbf{h}^{(k)}_{i} $, %where $\mathcal{M}^{t}_{i}=\{ \mathbf h_j^{(k-t)}: dis(v_i,v_j)=t \}, \forall t\in [k]$. 
    %For any graph $\mathcal{G}$ and $k \in \mathbb{N}_{+}$, $\rho^{(1)},\cdots,\rho^{(k)}$ are %always desirable regarding $\mathcal{M}^{t}_{1}, \cdots, \mathcal{M}^{t}_{k}$, respectively.
    % For any graph $\mathcal{G}$ and $k \in \mathbb{N}_{+}$, a function $\rho$ defined as $\rho(\mathbf{X}, \mathcal{E}, i) = f^{(k)}(f^{(k-1)}(\cdots f^{(1)}(\mathcal{S}^{(1)}_{i},\cdot)\cdots )\cdot ),\cdot )$ where $\mathcal{S}^{(1)}_{i} = \{\mathbf{x}_{i} : v_{i} \in \mathcal{N}(v_{i})\}$, is \textit{desirable} if $f^{(1)},\cdots,f^{(k)}$ are desirable.
\end{lemma}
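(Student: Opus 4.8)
The plan is to prove Lemma~\ref{lem:desirablem2m} by induction on the hop-count $d$, exhibiting at each stage an explicit partition of the relevant embeddings into class-pure groups. First I would fix a node $v_i$ and a base layer $k$, and unwind the definition of $\rho^{(d,k)}$ as the composition of $d$ one-hop operations $f_{mp}^{(k+1)}, \ldots, f_{mp}^{(k+d)}$. The key structural observation is that the final message vector $\mathbf{h}_i^{(k+d)}$ is a concatenation indexed by a sequence of classes: unrolling the recursion, each coordinate-block of $\mathbf{h}_i^{(k+d)}$ is determined by a \emph{path of class labels} $(t_1, t_2, \ldots, t_d) \in [C]^d$, where $t_d$ is the class selected at the first (innermost) aggregation and $t_1$ at the outermost. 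The goal is to show that the $d$-hop neighbors whose layer-$k$ embeddings feed into a given block all share a single class, so that the induced partition of the $d$-hop neighborhood is class-pure, which is exactly the desirability condition in Definition~\ref{def:desirable_m2m_mespas}.

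The core of the induction is the following claim: for each block index determined by a class-path $(t_1, \ldots, t_d)$, the set of layer-$k$ embeddings contributing to that block is $\{\mathbf{h}_\ell^{(k)} : v_\ell \text{ lies at the end of a walk } v_i, v_{j_1}, \ldots, v_{j_{d-1}}, v_\ell \text{ with } y_{j_1} = t_1, \ldots, y_{j_{d-1}} = t_{d-1}, y_\ell = t_d\}$. I would establish this by peeling off one layer at a time: at the outermost layer, $f_{mp}^{(k+d)}$ partitions $v_i$'s direct neighbors by their class $t_1$, so each top-level block $\phi^{(k+d)}(\mathcal{S}_{i,t_1}^{(k+d)})$ depends only on neighbors $v_{j_1}$ with $y_{j_1} = t_1$; then by the inductive hypothesis applied to each such $v_{j_1}$ at layer $k+d-1$, the embedding $\mathbf{h}_{j_1}^{(k+d-1)}$ is itself block-decomposed by class-paths of length $d-1$, and concatenating over the fixed $t_1$ reconstructs the length-$d$ path structure. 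Since the innermost selected class $t_d$ pins down $y_\ell = t_d$ for every contributing $v_\ell$, every block is class-pure, giving the desired partition.

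The main obstacle I anticipate is bookkeeping rather than conceptual difficulty: I must be careful that the partition $\mathcal{X} = \bigcup_t \mathcal{X}_t$ demanded by Definition~\ref{def:desirable_m2m_mespas} is genuinely a partition of a \emph{subset} of the layer-$k$ embeddings (condition (1) of that definition, $\bigcup_t \mathcal{S}_{i,t} \subseteq \{\mathbf{h}_j\}$), and that overlaps are handled correctly. A subtle point is that the same node $v_\ell$ may be reachable by several distinct class-paths of length $d$ (the $d$-hop walks need not be paths, and multiple intermediate-class sequences can terminate at $v_\ell$); its embedding may therefore appear in more than one block. This is permitted because Definition~\ref{def:desirable_m2m} only forbids two \emph{different-class} embeddings from sharing a block, and Definition~\ref{def:desirable_m2m_mespas} explicitly allows $y_i = y_j$ without requiring co-membership, so class-purity within each block is all that must be verified; I would state this explicitly to preempt the objection that distinct blocks must be disjoint as sets of nodes.

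Finally, I would note that the mapping $\Phi^{(k+d)}(\cdot)$ required by the lemma is constructed \emph{implicitly} by this composition: it is the composite map sending each class-pure group of $d$-hop-neighbor embeddings, through the nested applications of the one-hop mappings $\phi^{(k+1)}, \ldots, \phi^{(k+d)}$, to the corresponding output block. Because each $\phi^{(k+i)}$ is a valid mapping in the sense of Definition~\ref{def:mapping}, their nested composition—restricted to a single class-path block—is again a well-defined mapping on a class-pure multiset, which certifies that $\rho^{(d,k)}$ meets the existence-of-$\phi$ requirement in Definition~\ref{def:desirable_m2m_mespas}. This closes the induction and establishes that $\rho^{(d,k)}$ is desirable for all $d, k \in \mathbb{N}$.
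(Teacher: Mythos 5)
Your overall strategy coincides with the paper's: induction on $d$, indexing groups by class-paths $(t_1,\dots,t_d)\in[C]^d$ (the paper encodes each path as a single index $t=\sum_{p=1}^{d}(y_p-1)C^{d-p}+1$, giving $C^d$ groups), peeling off the outermost one-hop operation, applying the inductive hypothesis to each direct neighbor, and concluding class purity of each group from the innermost class label. Your remark that distinct groups need not be disjoint as sets of nodes (a node reachable along several class-paths may contribute to several groups) is correct and consistent with Definition~\ref{def:desirable_m2m_mespas}, which only demands class purity within each group; the paper is silent on this, so that clarification is welcome rather than a deviation.

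However, your closing step --- ``because each $\phi^{(k+i)}$ is a valid mapping, their nested composition restricted to a single class-path block is again a well-defined mapping'' --- has a genuine gap, and it is precisely the point where the paper's proof does its real work. By Definition~\ref{def:mapping}, each one-hop mapping applies a learnable weight matrix to every input vector and then pools. At layer $k+d$ the input vectors $\mathbf{h}_{j_1}^{(k+d-1)}$ are themselves concatenations of class-path blocks (your inductive hypothesis), and a general dense weight matrix $\mathbf{W}^{(k+d)}$ mixes coordinates \emph{across} those blocks: every coordinate of $\mathbf{h}_{j_1}^{(k+d-1)}\mathbf{W}^{(k+d)}$ depends on all blocks of $\mathbf{h}_{j_1}^{(k+d-1)}$, hence on layer-$k$ embeddings of $d$-hop neighbors of \emph{every} endpoint class. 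The top-level block $\phi^{(k+d)}(\mathcal{S}_{i,t_1}^{(k+d)})$ therefore does not split into class-path sub-blocks at all; the composition cannot be ``restricted to a single class-path block,'' the output retains only $C$ blocks, each a linear mixture over all classes, and in general it admits no representation $\mathbin\Vert_{u}\,\Phi(\Gamma_{u})$ with class-pure $\Gamma_u$ and a single mapping $\Phi$. The paper closes this hole constructively: it takes the weight matrix of the outer layer to be block-diagonal with $C^{d}$ identical diagonal blocks $\mathbf{W}$, so that $\bigl(\mathbin\Vert_{s}\,\mathbf{a}_s\bigr)\mathbf{W}^{(k+d+1)} = \mathbin\Vert_{s}\,\bigl(\mathbf{a}_s\mathbf{W}\bigr)$; concatenation then commutes with the linear map, the recursion goes through, and the composite mapping is the single uniform map $\Phi^{(k+d+1)}(\mathcal{S})=\sum_{\mathbf{s}\in\mathcal{S}}\mathbf{s}\,\mathcal{W}^{(k+d)}\mathbf{W}$ with $\mathcal{W}^{(k+d+1)}=\mathcal{W}^{(k+d)}\mathbf{W}$. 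Your proof needs either this explicit block-diagonal choice (reading the lemma as asserting the existence of suitable one-hop mappings) or an equivalent hypothesis forbidding the weight matrices from mixing blocks; without it, the step from ``the block depends only on neighbors with $y_{j_1}=t_1$'' to ``the block decomposes by full class-paths'' fails.
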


}
Lemma~\ref{lem:desirablem2m} demonstrates that unlike SMP (\textit{m-2-e}), the proposed \textit{m-2-m} message passing can generalize the desirable property from local to global.
This is because, following our high-level idea, at each layer, each group consists of embeddings from the same class, and thus heterophilic node representations are not intertwined even when we stack multiple message passing operations.
% Specifically, if we have desirable one-hop \textit{m-2-m} functions, the multi-hop \textit{m-2-m} function comprised of them is also desirable.
% meaning that each element of $\mathbf{h}_i^{(k)}$ is a function of homophilic nodes.
% that are $k$-hop reachable.

% at multiple-layer setting, supporting its potential discriminative power even when GNNs get deeper.

% \begin{lemma}[Maintenance of desirable property]
%     In a $K$-layer \textit{m-2-m} MPNN, for every layer $k$ and any node $v_i$, if there are no two heterophilic \textbf{direct} neighbors\footnote{Here, heterophilic neighbors means that these two neighbors have labels different from each other.} of $v_i$ are assigned to the same subset, then there will be no two heterophilic \textbf{$K$-hop neighbors} of $v_i$ are assigned to the same subset.
% \end{lemma}

\textbf{\textit{Second}}, regarding oversmoothing, we show that the concatenation utilized in desirable \textit{m-2-m} message passing can enhance the discriminability among message vectors. 
We present informal statements that describe the robustness of \textit{m-2-m} to oversmoothing phenomena, with the formal statements and proofs detailed in Appendix~\ref{sec:proof_for_oversmoothing}.

% \begin{lemma}\label{theorem:discri_pow}
%      Assume non-empty multisets $\mathcal{X}_{a}$ and $\mathcal{X}_{b}$ each of which consists of $x_{i} \in \mathbb{R}^{d}$ vectors.
%      Let $\phi : 2^\mathcal{X} \mapsto \mathbb{R}^{d}$ be one of the elementwise-sum, -mean, or -max function. 
%      Consider a function $f : {\mathcal{X}} \mapsto \mathbb{R}^{\omega d}$ such that 
%     $f(\mathcal{X}) = \Vert_{k=1}^{\omega}\phi(\mathcal{X}'_{k})$, where $\omega\in \mathbb N_+$, $\mathcal{X}'_{k} \subseteq \mathcal{X}$ are any disjoint subsets whose union is a full set.
%     For message vectors: $\mathbf{m}_{a}^{\textit{m2m}} \coloneqq f(\mathcal{X}_{a})$, $\mathbf{m}_{b}^{\textit{m2m}} \coloneqq f(\mathcal{X}_{b})$ and $\mathbf{m}^{\textit{m2e}}_{a},  \coloneqq \phi(\mathcal{X}_{a})$,  $\mathbf{m}^{\textit{m2e}}_{b},  \coloneqq \phi(\mathcal{X}_{b})$, the following holds:
%      $\Vert \mathbf{m}_{a}^{\textit{m2m}} - \mathbf{m}_{b}^{\textit{m2m}}  \Vert_2 \ge \Vert \mathbf{m}_{a}^{\textit{m2e}} - \mathbf{m}_{b}^{\textit{m2e}}  \Vert_2$.
% \end{lemma}
\begin{lemma}[\textit{m-2-m} is always no worse than \textit{m-2-e} (informal)]\label{theorem:discri_pow}
Given any two embedding multisets, the distance between the two resulting message vectors of \textit{m-2-m} is greater than or equal to the distance between those of \textit{m-2-e}.
\end{lemma}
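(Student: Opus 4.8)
The plan is to reduce both message vectors to a common collection of per-class building blocks and then compare their distances purely in terms of these blocks. Fix two neighborhood embedding multisets $\mathcal{X}$ and $\mathcal{Y}$ (the one-hop neighbor embeddings of two nodes), and let the desirable partition split each of them into the $C$ class-subsets, so that the number of blocks $\omega = C$ is the \emph{same} for every input. Applying the same mapping $\phi(\cdot)$ to each block yields aggregates $\mathbf{z}_t^{\mathcal{X}} = \phi(\mathcal{X}_t)$ and $\mathbf{z}_t^{\mathcal{Y}} = \phi(\mathcal{Y}_t)$ for $t \in [C]$. I would first observe that, by Def.~\ref{def:desirable_m2m_mespas}, the \textit{m-2-m} message is exactly the concatenation $\mathbin\Vert_{t=1}^{C}\mathbf{z}_t$, whereas the \textit{m-2-e} message that pools the whole multiset is the block-pooled vector $\sum_{t=1}^{C}\mathbf{z}_t$, i.e., a fixed linear read-out of the \emph{same} blocks.

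Writing $\boldsymbol{\Delta}_t \coloneqq \mathbf{z}_t^{\mathcal{X}} - \mathbf{z}_t^{\mathcal{Y}}$, the second step is to express both distances through the $\boldsymbol{\Delta}_t$'s. By the definition of the Euclidean norm on a concatenated vector, the \textit{m-2-m} distance satisfies $\lVert \mathbin\Vert_{t}\mathbf{z}_t^{\mathcal{X}} - \mathbin\Vert_{t}\mathbf{z}_t^{\mathcal{Y}}\rVert^2 = \sum_{t=1}^{C}\lVert\boldsymbol{\Delta}_t\rVert^2$, while the \textit{m-2-e} distance is $\lVert\sum_{t=1}^{C}\boldsymbol{\Delta}_t\rVert = \lVert L\,\boldsymbol{\Delta}\rVert$, where $L=[\mathbf{I}\,|\,\cdots\,|\,\mathbf{I}]$ is the block-sum operator and $\boldsymbol{\Delta}$ the stacked differences. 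Crucially, because $\omega=C$ is fixed across all nodes, $L$ is one and the same linear map for both $\mathcal{X}$ and $\mathcal{Y}$, so the \textit{m-2-e} difference is a genuine linear image of the \textit{m-2-m} difference. This reduces the whole claim to comparing $\sum_{t}\lVert\boldsymbol{\Delta}_t\rVert^2$ against $\lVert\sum_{t}\boldsymbol{\Delta}_t\rVert^2 = \sum_{t}\lVert\boldsymbol{\Delta}_t\rVert^2 + \sum_{t\ne t'}\langle\boldsymbol{\Delta}_t,\boldsymbol{\Delta}_{t'}\rangle$.

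The final step is to argue that the read-out cannot increase the distance. I would exhibit a non-expansive ($1$-Lipschitz) read-out $g$ with $g(\mathbf{m}^{m})=\mathbf{m}^{e}$, so that $\lVert\mathbf{m}^{m}_{\mathcal{X}}-\mathbf{m}^{m}_{\mathcal{Y}}\rVert \ge \lVert g(\mathbf{m}^{m}_{\mathcal{X}})-g(\mathbf{m}^{m}_{\mathcal{Y}})\rVert = \lVert\mathbf{m}^{e}_{\mathcal{X}}-\mathbf{m}^{e}_{\mathcal{Y}}\rVert$, which is precisely the assertion of Lemma~\ref{theorem:discri_pow}. The point is that concatenation keeps every block's contribution as an independent, additive squared term, so distinct-class differences can never cancel, whereas pooling folds them onto one another through the cross terms $\sum_{t\ne t'}\langle\boldsymbol{\Delta}_t,\boldsymbol{\Delta}_{t'}\rangle$.

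The hard part will be this read-out / cross-term step, since it is exactly where a naive comparison breaks: with unnormalized sum pooling the operator $L$ has spectral norm $\sqrt{C}$ and is therefore \emph{expansive}, and the cross terms can be positive. I would handle this by absorbing a $1/\sqrt{C}$-type normalization into the pooling, exploiting the freedom granted by the ``there exists a mapping $\phi$'' quantifier of Def.~\ref{def:desirable_m2m_mespas}, so that $g$ becomes a convex-combination (averaging) read-out that is provably $1$-Lipschitz; alternatively one can lower-bound the \textit{m-2-m} distance by the single dominant block, using $\sum_{t}\lVert\boldsymbol{\Delta}_t\rVert^2\ge\max_t\lVert\boldsymbol{\Delta}_t\rVert^2$. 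In either route, the fixedness of $\omega=C$ is what guarantees $g$ is a single map valid for every node pair — the very feature that fails for \textit{m-2-e}, where collapsing distinct classes into one vector permits the cancellations responsible for oversmoothing (Theorem~\ref{thm:oversmoothing}).
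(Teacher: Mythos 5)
Your plan follows exactly the paper's own route: write the \textit{m-2-m} squared distance as $\sum_{t}\lVert\boldsymbol{\Delta}_t\rVert_2^2$ with $\boldsymbol{\Delta}_t=\phi(\mathcal{X}'_{a,t})-\phi(\mathcal{X}'_{b,t})$, and treat the \textit{m-2-e} message as a fixed linear read-out (block sum) of the same blocks. The difference is what happens at the crux. The paper's proof in Appendix~\ref{sec:proof_for_oversmoothing} disposes of it in one line, asserting ``by triangle inequality'' that $\sum_{t}\lVert\boldsymbol{\Delta}_t\rVert_2^2\ge\bigl\lVert\sum_{t}\boldsymbol{\Delta}_t\bigr\rVert_2^2$. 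You correctly refuse to take that step for granted: it is false in general, for precisely the reason you give — the cross terms $\sum_{t\neq t'}\langle\boldsymbol{\Delta}_t,\boldsymbol{\Delta}_{t'}\rangle$ can be positive (take $\boldsymbol{\Delta}_1=\boldsymbol{\Delta}_2=\mathbf{v}\neq\mathbf{0}$: the left side is $2\lVert\mathbf{v}\rVert^2$, the right side $4\lVert\mathbf{v}\rVert^2$), equivalently the block-sum operator has operator norm $\sqrt{C}$. So your diagnosis of ``the hard part'' pinpoints exactly the step at which the paper's own argument is invalid for sum and mean pooling.

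The genuine gap is that neither of your proposed repairs yields the lemma as the paper formalizes it — and no argument can, because that formal statement is false for sum pooling (and for mean pooling under the paper's convention of normalizing each block by the full multiset size). Concretely, take $\mathcal{X}_a=\{\mathbf{v},\mathbf{v}\}$ split into two singletons and $\mathcal{X}_b=\{\mathbf{0},\mathbf{0}\}$: the \textit{m-2-m} distance is $\sqrt{2}\,\lVert\mathbf{v}\rVert$, while the \textit{m-2-e} sum-pooled distance is $2\lVert\mathbf{v}\rVert$. Your averaging read-out $g=\tfrac{1}{C}\,[\,\mathbf{I}\mid\cdots\mid\mathbf{I}\,]$ is indeed $1$-Lipschitz (its operator norm is $1/\sqrt{C}$), but then $g(\mathbf{m}^{m2m})$ is no longer $\phi(\mathcal{X})$ for the same $\phi$: you have replaced the right-hand side and proved a normalized variant of the lemma, not the lemma. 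Exploiting the ``there exists a mapping $\phi$'' freedom of Def.~\ref{def:desirable_m2m_mespas} asymmetrically (one scaling inside the concatenation, another for the pooled message) makes the comparison vacuous, since the \textit{m-2-m} side can then be inflated at will. Your fallback bound $\sum_t\lVert\boldsymbol{\Delta}_t\rVert^2\ge\max_t\lVert\boldsymbol{\Delta}_t\rVert^2$ also cannot close the gap for sum pooling, since $\lVert\sum_t\boldsymbol{\Delta}_t\rVert$ may exceed $\max_t\lVert\boldsymbol{\Delta}_t\rVert$; it does suffice for componentwise max pooling, where
\begin{equation*}
\lVert\phi(\mathcal{X}_a)-\phi(\mathcal{X}_b)\rVert_2^2\;\le\;\sum_{c}\max_t\,(\boldsymbol{\Delta}_t)_c^2\;\le\;\sum_t\lVert\boldsymbol{\Delta}_t\rVert_2^2.
\end{equation*}
In short, your instinct about where the difficulty lies is correct, and the paper glosses over that very step incorrectly; but what your route actually delivers is the max-pooling case plus the convex-combination (normalized) variant, which is also the strongest version of the statement that is true.
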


\begin{theorem}[\textit{m-2-m} never gets trapped by oversmoothing (informal)]\label{thm:robust-to-oversmoothing}
    Under the same CSBM settings as in Sec.~\ref{sec:limitation}, a one-hop \textit{m-2-m} message passing (Eq. ~\eqref{eq:m2m_message_passing} in Def.~\ref{def:desirable_m2m_mespas})
    % defined identically to Eq.~\eqref{eq:m2m_message_passing_overall} 
    can escape from oversmoothing even when the expected class means converge to the same point, i.e., even if $\mathbb E_{\mathcal{G}} [\bar{\mathbf{h}}_a^{(k-1)}] = \mathbb E_{\mathcal{G}} [\bar{\mathbf{h}}_b^{(k-1)}]$ we have $\mathbb E_{\mathcal{G}} [\bar{\mathbf{h}}_a^{(k)}] \neq \mathbb E_{\mathcal{G}} [\bar{\mathbf{h}}_b^{(k)}]$ as long as $p \neq q$,
    % That is, for \textit{m-2-m}, we can guarantee $\Vert \mathbb{ E}_{\mathcal{G}} [\bar{\mathbf{h}}_a^{(k)}]-  \mathbb{E}_{\mathcal{G}} [\bar{\mathbf{h}}_b^{(k)}]\Vert>0$ if $p\neq q$, 
    while SMP fails to achieve this.
\end{theorem}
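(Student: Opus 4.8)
The plan is to instantiate the one-hop \emph{m-2-m} update of Eq.~\eqref{eq:m2m_message_passing} with a concrete mapping $\phi$ and track how the per-class mean embeddings evolve over a single layer under the CSBM of Definition~\ref{def:csbm}, mirroring the bookkeeping behind Theorem~\ref{thm:oversmoothing}. Since Definition~\ref{def:desirable_m2m_mespas} only requires the \emph{existence} of a suitable $\phi$, I get to pick a convenient one, and the crucial design choice is to take a \emph{degree-sensitive} (sum-type) pooling, $\phi(\mathcal{S}_{i,t}) = \big(\sum_{\mathbf{h}_j \in \mathcal{S}_{i,t}} \mathbf{h}_j\big)\mathbf{W}$, rather than a mean. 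This is the conceptual heart: the separation I will exploit is carried by the differing \emph{sizes} of the per-class neighbor blocks, so any normalization must be applied globally (by the total degree) and not per block, else the effect is averaged away.

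First I would fix a node $v_i$ with $y_i = a$ and describe the expected message vector $\mathbb{E}_{\mathcal{G}}[\mathbf{m}_i^{(k)}]$ block by block. By Definition~\ref{def:csbm}, $v_i$ connects to each class-$a$ node with probability $p$ and to each node of any class $c \neq a$ with probability $q$; as each class holds $N/C$ nodes, the own-class block $\mathcal{S}_{i,a}$ has expected size $\tfrac{N}{C}p$ while each block $\mathcal{S}_{i,c}$ with $c\neq a$ has expected size $\tfrac{N}{C}q$. Under sum pooling, the $t$-th block of $\mathbb{E}[\mathbf{m}_i^{(k)}]$ is therefore (to leading order) $\tfrac{N}{C}\,p\,\bar{\mathbf{h}}_a^{(k-1)}\mathbf{W}$ when $t=a$ and $\tfrac{N}{C}\,q\,\bar{\mathbf{h}}_t^{(k-1)}\mathbf{W}$ otherwise, and averaging over $v_i \in \mathcal{V}_a$ gives $\mathbb{E}[\bar{\mathbf{h}}_a^{(k)}]$ with the identical block description. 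The key structural observation is that the index of the $p$-weighted block equals the node's own class: the label is encoded in \emph{where} the heavier block sits, not merely in the block values.

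Next I would subtract the class means for two classes $a$ and $b$ under the theorem's hypothesis that all $\mathbb{E}[\bar{\mathbf{h}}_c^{(k-1)}]$ equal a common vector $\bar{\mathbf{h}}$. Every block then cancels except blocks $a$ and $b$: block $a$ contributes $\tfrac{N}{C}(p-q)\bar{\mathbf{h}}\mathbf{W}$ and block $b$ contributes $-\tfrac{N}{C}(p-q)\bar{\mathbf{h}}\mathbf{W}$, whence $\lVert \mathbb{E}[\bar{\mathbf{h}}_a^{(k)}] - \mathbb{E}[\bar{\mathbf{h}}_b^{(k)}]\rVert = \tfrac{\sqrt{2}\,N}{C}\,\lvert p-q\rvert\,\lVert \bar{\mathbf{h}}\mathbf{W}\rVert$, which is strictly positive whenever $p\neq q$ (and $\bar{\mathbf{h}}\mathbf{W}\neq\mathbf{0}$). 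This proves the separation claim. For the contrast with SMP I would invoke Theorem~\ref{thm:oversmoothing}: its one-step recursion scales $\bar{\mathbf{h}}_a-\bar{\mathbf{h}}_b$ by the scalar $\tfrac{p+q}{p+(C-1)q}$, so $\bar{\mathbf{h}}_a^{(k-1)}=\bar{\mathbf{h}}_b^{(k-1)}$ forces $\bar{\mathbf{h}}_a^{(k)}=\bar{\mathbf{h}}_b^{(k)}$; once collapsed, \emph{m-2-e} can never re-separate, exactly the failure mode from which \emph{m-2-m} escapes.

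The main obstacle I anticipate is making these expectations rigorous. With a global degree normalization the update becomes a ratio of random quantities, so $\mathbb{E}$ of the message is not literally the ratio of expectations used above. I would handle this precisely as the proof of Theorem~\ref{thm:oversmoothing} does: either work with the unnormalized sum-pooling variant so that each block is \emph{linear} in the edge indicators (making the leading term exact given the layer-$(k-1)$ embeddings), or apply a concentration argument in the spirit of the bound referenced in Appendix~\ref{app:concentration}, showing the block sizes and within-block sums concentrate at their means as $N\to\infty$ so the block description holds up to vanishing error. A secondary, easier check is that this $\phi$ indeed yields a one-hop \emph{desirable} \emph{m-2-m} operation in the sense of Definition~\ref{def:desirable_m2m_mespas}, which is immediate because the partition is by neighbor class so each $\mathcal{S}_{i,t}$ is single-class.
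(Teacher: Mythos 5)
Your proposal is correct and takes essentially the same route as the paper's proof: the paper likewise instantiates $\phi$ so that block sizes survive (its ``mean pooling'' is normalized by the \emph{total} degree, hence equals your sum pooling up to the positive constant $1/(p+(C-1)q)$), writes the expected class means block by block with the $p$-weighted block sitting at the index of the node's own class, cancels every block except $a$ and $b$ to obtain a $(p-q)$-separation, and contrasts with SMP via the recursion from Theorem~\ref{thm:oversmoothing} exactly as you do. The only differences are cosmetic: you assume all class means coincide where the paper needs only those of classes $a$ and $b$ (your cancellation goes through under the weaker hypothesis too, since the remaining blocks carry coefficient $q$ in both concatenations), and your explicit $\sqrt{2}$ norm computation is, if anything, tighter than the bound written in the paper.
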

We refer the reader to Appendix~\ref{app:discriminative_power} for a discussion on the discriminative power of \textit{m-2-m}.

% The equality holds if there exist subsets $\mathcal{X}'_{a,k} \subseteq \mathcal{X}_{a}$ and $\mathcal{X}'_{b,k} \subseteq \mathcal{X}_{b}$ such that both subsets form full sets.
% In general, $\Vert \mathbf{m}_{a}^{\textit{m2m}}-\mathbf{m}_{b}^{\textit{m2m}}  \Vert_2 > \Vert \mathbf{m}_{a}^{\textit{m2e}} - \mathbf{m}_{a}^{\textit{m2e}}  \Vert_2$. 
% Here, $\phi(\cdot)$ can be generalized to any weighted sum scheme, including SMP. 
% See the proof of Theorem~\ref{theorem:discri_pow} for detailed discussion. This demonstrates that the distance between the resulting embeddings of \textit{m-2-m} is greater than that of \textit{m-2-e}. Hence, embeddings are less likely to converge to the same point. 

% Note that \textit{m-2-m} achieves this without compromising the amount of information aggregated, i.e., all information of $\mathcal{X}_{a}$ is encoded in 
% Theorem~\ref{theorem:discri_pow} states that for given two sets of vectors, the distance between embeddings of the sets obtained via \textit{m-2-m} is greater than or equal to that obtained via \textit{m-2-e}.
% Thus, this result indicates that the embeddings are less likely to be converged in \textit{m-2-m} than \textit{m-2-e}, implying the potential robustness against oversmoothing. 

% \ref{sec:limitation}.Appendix~\ref{app:discriminative_power} for a detailed discussion on the discriminative power of \textit{m-2-m} and \textit{m-2-e}, as well as an analysis demonstrating that \textit{m-2-m} is robust to oversmoothing under the same settings as in Sec. \ref{sec:limitation}.

\begin{figure*}[htp]
    \centering
    \includegraphics[width=0.8\textwidth]{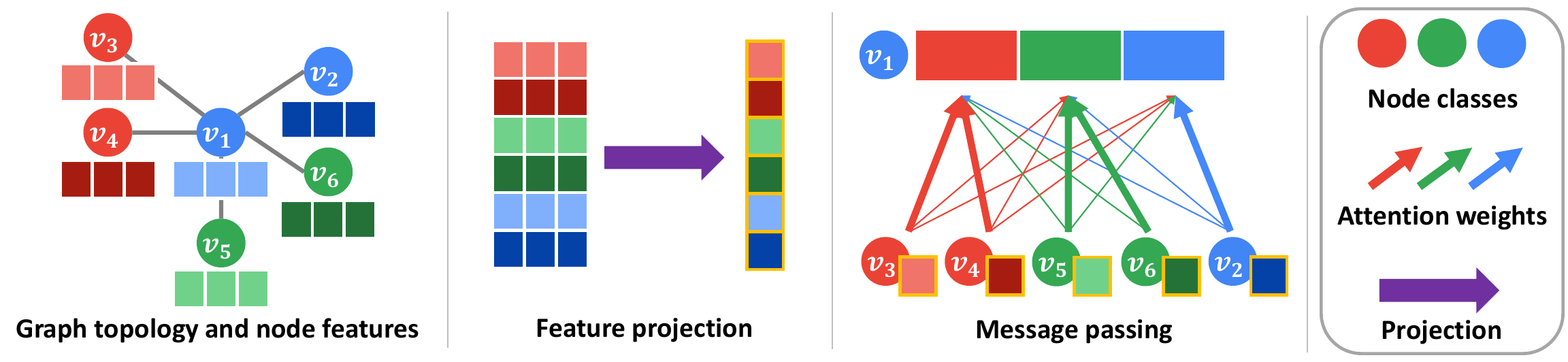}
    \caption{An illustration of the convolution layer of \method, where the ego (central) node is $v_{1}$.
    For a given graph topology and node features (left), we first project node features from $3$-dimension to $1$-dimension (middle).
    Then, we construct a message vector for node $v_{1}$ by concatenating $3$ chunks, each of which consists of a subset of neighbors' projected features using soft labels}.
    \label{fig:method}
\end{figure*}

\subsection{Proposed Method: \method}\label{subsec:proposed_method}
Motivated by the good properties of \textit{m-2-m} message passing, we present a specific instance it, termed \textbf{\method}. 
% This method rethinks the process of neighbor embedding aggregation. 
% Rather than merging neighbor embeddings into a single vector, our approach distinctively combines them, concatenating the resulting embeddings. 
% By employing the diversified aggregation, the resultant message vector of \method provably preserves the neighbor information well.
An illustration of \method is presented in Figure~\ref{fig:method}.

\textbf{Core idea: Chunks.}
Each partition is represented as a \textit{chunk}.
Ideally, a model should assign a label to each chunk while ensuring a one-to-one correspondence between chunk labels and node classes, i.e.,
% Furthermore, 
each chunk should only contain nodes belonging to the class associated with that chunk.
%Specifically, we have $\mathcal{C}$ chunks with each of them aggregating information from a subset of neighbors with a particular class.
% We define the output of these distinctive combinations as \textit{chunks}. 
% In other words, having $\mathcal{C}$ distinct combinations results in $\mathcal{C}$ corresponding chunks, with each chunk originating from one of the combinations.
% In an optimal scenario, we assert that each combination corresponds exclusively to a single class and aggregates the embedding of the corresponding class.

% The rationale supporting our claim is that, following the discussion in Section~\ref{subsec:lim1-undesirable}, embeddings of homophilic nodes are expected to be close, whereas those of heterophilic nodes are anticipated to be positioned at greater distances from each other.
% As such, we allow each chunk to smooth the embedding of the corresponding class while ensuring different class embeddings are not entangled.
% In turn, each chunk contains the desirable direction of movement the ego node embedding should move toward/against.

\textbf{Method details.}
We elaborate on how \method updates node embeddings using $\mathcal{C}$ chunks. Note that $\mathcal{C}$ may not necessarily be the same as $C$, which represents the number of classes.
We focus on the embedding update at the $k$-th layer.
For a given $(k-1)$-th layer output node embeddings $\mathbf{H}^{(k-1)} \in \mathbb{R}^{N \times d}$, \method first projects the node embeddings using a learnable weight matrix $\mathbf{W}^{(k)} \in \mathbb{R}^{d \times (d/\mathcal{C})}$.
Formally, $\hat{\mathbf{H}}^{(k)}=\mathbf{H}^{(k-1)}\mathbf{W}^{(k)}$.

%% Langzhang, please see the experimental results section where I have marked as teal-colored! I have revised the RQ2 section!

Next, we describe the message passing scheme of \method. 
We define $v_{i}$ as an ego (central) node and $\mathcal N(v_{i}) \coloneqq N_{i}$ as the set of neighbors of $v_{i}$. 
Since in practice, we typically do not have the ground-truth class information of nodes, we propose to utilize soft labels that are obtained via an attention function.
We use the soft labels to assign each node (and its embedding) to chunks.
In a nutshell, every node embedding of $N_{i}$ is assigned to every chunk, but with different scores (i.e., weights).
To learn the assigned scores, we employ the attention function inspired by~\citet{brody2021attentive}.
Specifically, for $v_{j} \in \mathcal N_{i}$, its scores towards $v_{i}$ are defined as follows:
\begin{equation}\label{eq:attention}
    \mathbf{s}^{(k)}({i},{j}) = \texttt{Softmax}(\texttt{ReLU}(\alpha\hat{\mathbf{h}}^{(k)}_{i} + \hat{\mathbf{h}}^{(k)}_{j})\mathbf{W}^{(k)}_{att}/\tau),
\end{equation}
where $\tau$ is a temperature hyperparameter, $0<\alpha<1$ is a hyperparameter that weights the importance of the ego node $v_i$'s representation, and $\mathbf{W}^{(k)}_{att} \in \mathbb{R}^{(d/\mathcal{C}) \times \mathcal{C}}$ is the learnable attention weight.
Note that $\mathbf{s}^{(k)}({i},{j}) \in \mathbb{R}^{\mathcal{C}}$ indicates scores of $v_{j}$ during message passing towards $v_{i}$, where $\mathbf{s}_{t}^{(k)}({i},{j})$ is the score assigned to the $t$-th chunk.
Moreover, these scores can be interpreted as a soft label of $v_{j}$, and in the ideal case, \method assigns a high score to the chunk that corresponds to the class of $v_{j}$, and nearly zero values to other chunks.
We examine this property in Section~\ref{subsec:exp_visual}.

% where $t-$th chunk is a weighted summation of $\hat{\mathbf{h}}^{(k)}_{j},\forall v_{j} \in N_{j}$. 
% Here, for $t-$th chunk, we regard weights as $\mathbf{s}_{t}^{(k)}(i,j),v_{j} \in N_{i}$. Formally,
\begin{equation}
    \mathbf{C}_{it}^{(k)} = \sum_{v_{j} \in N_{i}}\mathbf{s}_{t}^{(k)}(i,j)\hat{\mathbf{h}}^{(k)}_{j} \in \mathbb{R}^{d/\mathcal{C}}, \forall t \in [\mathcal{C}].
\end{equation}
Then, the message vector of $v_{i}$, denoted by $\mathbf{m}^{(k)}_{i}$, is defined as the concatenation of $\mathbf{C}^{(k)}_{i1},\cdots ,\mathbf{C}^{(k)}_{i\mathcal{C}}$, 
i.e., $\mathbf{m}^{(k)}_{i} = \lVert_{t=1}^{\mathcal{C}}\mathbf{C}_{it}^{(k)}$.
The dimension of $\mathbf{m}^{(k)}_{i}$ is equal to that of $\mathbf{h}^{(k-1)}_{i}$, which is $\mathbb{R}^{d}$.

% When $\mathbf{s}^{(k)}(i,j)$'s, $\forall v_j \in \mathcal{N}_i$, are one-hot vectors with $\mathbf{s}_t^{(k)}(i,j)=1$, if $ y_j=t$, and $0$ otherwise, there exist  M2M-GNN becomes a special case of Eq.~\eqref{eq:m2m_message_passing_overall}. 

Finally, the node representation of $v_{i}$ is updated using the following weighted average:
\begin{equation}
    \mathbf{h}^{(k)}_{i} = \texttt{ReLU}((1 - \beta)\mathbf{h}^{(0)}_{i} + \beta \mathbf{m}^{(k)}_{i}),
\end{equation}
where $\beta \in (0,1)$ is a hyperparameter, and $\mathbf{h}^{(0)}_{k}$ is the transformed original feature of $v_{i}$ (i.e., $\mathbf{h}^{(0)}_{i} = \texttt{MLP}(\mathbf{x}_{i})$).

\textbf{Regularizing soft labels.}
In our preliminary study, we observed that every attention score is often collapsed into one particular chunk (i.e., there exists a $1\le t \le \mathcal{C}$ such that $s_t^{(k)}(i,j)\approx 1$ for all edges $\{v_i,v_j\}$).
This deviates from a desirable scenario, where attention scores of nodes are diversified to different chunks according to their respective classes.
Motivated by ~\citet{tsitsulin2023graph}, we leverage a regularization term to help \method diversify the attention scores. Unlike ~\citet{tsitsulin2023graph}, our objective does not involve balancing class sizes as this loss affects the attention weights rather than the predictions.
Specifically, we employ L2-norm of the summation of $\mathbf{s}_{t}^{(k)}(i,j)$ across $\{v_{i},v_{j}\} \in \mathcal{E}$.
Formally,
\begin{equation}\label{eq:reg}
     \mathcal{L}_{reg} =\frac{1}{K}\sum_{k=1}^{K} \frac{\sqrt{\mathcal{C}}}{|\mathcal{E}|} \Vert \sum_{\{v_i,v_j\} \in \mathcal{E}} \mathbf{s}^{(k)}(i,j) \Vert_2^2 - 1,
\end{equation}
where $K$ is the number of layers of \method.
Note that the value of $\mathcal{L}_{reg}$ decreases as the distribution of $\mathbf{s}^{(k)}(i,j)$ varies across $\forall \{v_{i},v_{j}\}\in\mathcal{E}$.

\textbf{Training \method.}
Finally, the parameters of $K$-layer \method (spec., initial feature projector \texttt{MLP}, projection- and attention weights of each layer $\mathbf{W}^{(k)},\mathbf{W}_{att}^{(k)},\forall k \in [K]$) are trained via (1) the regularization loss $\mathcal{L}_{reg}$ (Eq.~\eqref{eq:reg}) and (2) the loss designed for the particular learning task (e.g., node classification), which we denote as $\mathcal{L}_{task}$.
Specifically, we use the loss function: $\mathcal{L} = \mathcal{L}_{task} + \lambda \mathcal{L}_{reg}$, where $\lambda$ is a hyperparameter that controls the strength of the regularization term $\mathcal{L}_{reg}$.

The time complexity of \method is analyzed in Appendix~\ref{app:time_complexity}, showing that it exhibits a time complexity similar to that of GCN~\citep{kipf2016semi}.

}

\section{Experiments}{
\label{sec:experiment}
% In this section, we evaluate \method using both homophilic and heterophilic datasets. 
% Our evaluation mainly focuses on three aspects: performance on both homophilic and heterophilic datasets, the pattern of learned $\mathbf{s}^{(k)}(i,j)$, robustness against over-smoothing.
\begin{table}[t]
\scriptsize
\setlength\tabcolsep{1.5pt}
    \centering
    \begin{tabular}{c c c c c c}
    \toprule
       &\textbf{Edge Hom.} &\textbf{\# Nodes} &\textbf{\# Edges} &\textbf{\# Features} &\textbf{\# Classes} \\
       \midrule
    \textbf{Texas}  &0.21 &183 &295 &1,703 &5 \\
    \textbf{Wisconsin} & 0.11 &251 &466 &1,703 &5 \\
    \textbf{Cornell} &0.30 &183 &280 &1,703 &5 \\
    \textbf{Actor} &0.22 &7,600 &26,752 &931 &5\\
    \textbf{Squirrel} & 0.22 &5,201 &198,493 &2,089 &5\\
    \textbf{Chameleon} &0.23 &2,277 &31,421 &2,325 &5 \\
    \textbf{Cora} &0.81 &2,708 &5,278 &1,433 &7 \\
    \textbf{Citeseer} &0.74 &3,327 &4,676 &3,703 &6 \\
    \textbf{Pubmed} &0.80 &19,717 &44,327 &500 &3 \\
    \textbf{Penn94} &0.47 &41,554 &1,362,229 &5 &2\\
    \textbf{Amazon-rat.} &0.38 &24,492 &93,050 &300 &5\\
    \bottomrule
    \end{tabular}
    \caption{Dataset statistics. The edge homophily ratio (Edge Hom.) is defined as $|\{(v_i,v_j) \in \mathcal{E} \}: y_i=y_j |/|\mathcal{E}|$.
}
    \label{tab:dataset_stat}
\end{table}

\begin{table*}[htp]
\scriptsize
\setlength\tabcolsep{2.5pt}
\renewcommand{\arraystretch}{0.95}
    \centering
    \begin{tabular}{c c c c c c c c c c c c c}
   \toprule
       &\textbf{Texas} &\textbf{Wiscon.} &\textbf{Cornell} &\textbf{Actor} &\textbf{Squir.} &\textbf{Chamel.}   &\textbf{Cora} &\textbf{Citeseer} &\textbf{Pubmed} &\textbf{Penn94} &\textbf{Amazon-rat.} &\textbf{A.R.} \\
        
    \midrule
    MLP & 80.81 \scriptsize{± 4.7} & 85.29 \scriptsize{± 3.3} &81.89 \scriptsize{± 6.4} &36.53 \scriptsize{± 0.7} &28.77 \scriptsize{± 1.5} &46.21 \scriptsize{± 2.9} &75.69 \scriptsize{± 2.0} &74.02 \scriptsize{± 1.9} &87.16 \scriptsize{± 0.3} &73.61 \scriptsize{± 0.4} &42.87 \scriptsize {± 0.4} &9.9\\
    \midrule
    GCN &55.14 \scriptsize{± 5.1} &51.76 \scriptsize{± 3.0} &60.54 \scriptsize{± 5.3} &27.32 \scriptsize{± 1.1} &53.43 \scriptsize{± 2.0} &64.82 \scriptsize{± 2.2} &86.98 \scriptsize{± 1.2} &76.50 \scriptsize{± 1.3} &88.42 \scriptsize{± 0.5} &82.47 \scriptsize{± 0.2} &48.70 \scriptsize{± 0.6} &9.1\\
    GAT &52.16 \scriptsize{± 6.6} &49.41 \scriptsize{± 4.0} &61.89 \scriptsize{± 5.0} &27.44 \scriptsize{± 0.8} &40.72 \scriptsize{± 1.5} &60.26 \scriptsize{± 2.5} &87.30 \scriptsize{± 1.1} &76.55 \scriptsize{± 1.2} &86.33 \scriptsize{± 0.4} &81.53 \scriptsize{± 0.5} & \secb 49.09 \scriptsize{± 0.6} &9.9\\
    \midrule
    GPR-GNN &78.38 \scriptsize{± 4.3} &82.94 \scriptsize{± 4.2} &80.27 \scriptsize{± 8.1} &34.63 \scriptsize{± 1.2} &31.61 \scriptsize{± 1.2} &46.58 \scriptsize{± 1.7} &87.95 \scriptsize{± 1.1} &77.13 \scriptsize{± 1.6} &87.54 \scriptsize{± 0.3} &81.38 \scriptsize{± 0.1} &44.88 \scriptsize{± 0.3} &8.6\\
    FAGCN  &77.00 \scriptsize {± 7.7} &78.32 \scriptsize {± 6.3} &82.41 \scriptsize{± 3.8} &35.67 \scriptsize{± 0.9}  &42.20 \scriptsize{± 1.8}  &60.98 \scriptsize{± 2.3} &87.42 \scriptsize{± 2.1} &76.35 \scriptsize{± 1.7} &87.83 \scriptsize{± 1.1} &72.85 \scriptsize{± 0.5} & 44.12 ± \scriptsize{0.3} &9.1\\

    GGCN &84.86 \scriptsize{± 4.5} &86.86 \scriptsize{± 3.2} &85.68 \scriptsize{± 6.6} &\secb 37.54 \scriptsize{± 1.5} &55.17 \scriptsize{± 1.5} &71.14 \scriptsize{± 1.8} &87.95 \scriptsize{± 1.0} &77.14 \scriptsize{± 1.4} &89.15 \scriptsize{± 0.3} &OOM  &36.86 \scriptsize{± 0.4} &6.0\\
    
    ACM-GCN & \secb 87.84 \scriptsize{± 4.4} & \secb 88.43 \scriptsize{± 3.2} &85.14 \scriptsize{± 6.0} &36.28 \scriptsize{± 1.0} &54.40 \scriptsize{± 1.8} &66.93 \scriptsize{± 1.8} &87.91 \scriptsize{± 0.9} & \best 77.32 \scriptsize{± 1.7} &\thib 90.00 \scriptsize{± 0.5} &82.52 \scriptsize{± 0.9} &\thib 38.62 \scriptsize{± 0.6} &4.6 \thib\\
    Goal &83.62 \scriptsize{± 6.7} &86.98 \scriptsize{± 4.4} &\thib 85.68 \scriptsize {± 6.2} &36.46 \scriptsize{± 1.0} &60.53 \scriptsize{± 1.6} 
    &\thib 71.65 \scriptsize{± 1.6} & \best 88.75 \scriptsize{± 0.8} &77.15 \scriptsize{± 0.9} &89.25 \scriptsize{± 0.5} &\secb 84.18 \scriptsize{± 0.3} &37.94 \scriptsize{± 0.3}  &4.7\\

    AERO-GNN &84.35 \scriptsize{± 5.2} &81.24 \scriptsize{± 6.8} &84.80 \scriptsize{± 3.3} &36.57 \scriptsize{± 1.1}  & \thib 61.76 \scriptsize{± 2.4} &71.58 \scriptsize{± 2.4} &88.12 \scriptsize{± 1.1} &77.08 \scriptsize{± 1.5} &89.95 \scriptsize{± 0.7} &82.47 \scriptsize{± 0.7} &45.71 \scriptsize{± 0.5} &5.1\\
    \midrule
    
    Ord. GNN &86.22 \thib \scriptsize{± 4.1} & \thib 88.04 \scriptsize{± 3.6} &\best 87.03 \scriptsize{± 4.7} & \best 37.99 \scriptsize{± 1.0} & \secb 62.44 \scriptsize{± 1.9}
    &\secb 72.28 \scriptsize{± 2.2} & \secb 88.37 \scriptsize{± 0.7}& \secb 77.31 \scriptsize{± 1.7} & \secb 90.15 \scriptsize{± 0.3} &\thib 83.65 \scriptsize{± 0.6} &38.52 \scriptsize{± 0.4} &\secb 2.7\\
    
    H\textsubscript{2}GCN  &84.86 \scriptsize{± 7.2} &87.65 \scriptsize{± 4.9} &82.70 \scriptsize{± 5.2} &35.70 \scriptsize{± 1.0} &36.48 \scriptsize{± 1.8} &60.11 \scriptsize{± 2.1} &87.87 \scriptsize{± 1.2} &77.11 \scriptsize{± 1.5} &89.49 \scriptsize{± 0.3} &81.31 \scriptsize{± 0.6} &36.47 \scriptsize{± 0.2} & 7.8\\
    
    \midrule
    DMP &66.08 \scriptsize{± 7.0} &56.41 \scriptsize{± 5.5} &62.73 \scriptsize{± 4.5} &28.30 \scriptsize{± 2.7} &34.19 \scriptsize{± 7.6}  &63.79 \scriptsize{± 4.1} &82.56  \scriptsize{± 1.9} &62.54 \scriptsize{± 1.5} &73.12 \scriptsize{± 0.9} & 73.85 \scriptsize{± 0.7} &35.84 \scriptsize{± 0.4} &11.2\\
    \midrule
    
    \textbf{\method} & \best 89.19 \scriptsize{± 4.5} & \best 89.01 \scriptsize{± 4.1}  & \secb 86.48 \scriptsize{± 6.1} & \thib 36.72 \scriptsize{± 1.6} & \best 63.60 \scriptsize{± 1.7} & \best 75.20 \scriptsize{± 2.3}  &\thib 88.12 \scriptsize{± 1.0} & \thib 77.20 \scriptsize{± 1.8}  & \best 90.35 \scriptsize{± 0.6} & \best 85.94 \scriptsize{± 0.4} & \best 49.18 \scriptsize{± 0.6} & \best 1.7
    \\
    \bottomrule
    \end{tabular}
    \caption{Node classification accuracy (\%) on $11$ datasets. 
    The \textcolor{dodgerblue}{best}, \textcolor{secgreen}{second-best}, and \textcolor{yellow}{third-best} performance across each dataset are highlighted in \textcolor{dodgerblue}{blue}, \textcolor{secgreen}{green}, and \textcolor{yellow}{yellow}, respectively. 
    A.R. and OOM denote average ranking and out-of-memory, respectively.
    Overall, our method \textbf{M2M-GNN} performs the best in terms of average ranking, achieving at least the third-best performance in every dataset.}
    \label{tab:result}
\end{table*}
% \teal{In this section, we evaluate the performance of \method across a range of benchmark graph datasets from diverse domains, encompassing both homophilic and heterophilic graph types
% We focus on four research questions: (\textbf{RQ1}) How accurately does \method perform in node classification task? 
% (\textbf{RQ2}) What is the distribution pattern of the learned attention score $\mathbf{s}^{(k)}(i,j)$?, 
% (\textbf{RQ3}) How robust is \method in addressing the oversmoothing problem?, 
% and (\textbf{RQ4}) Is each component of \method necessary?.}

In this section, we conduct a comprehensive evaluation of \method on various benchmark datasets, including both homophilic and heterophilic graphs. We aim to answer four key research questions: (\textbf{RQ1}) How does \method perform in node classification tasks? (\textbf{RQ2}) What patterns emerge in the attention scores, $\mathbf{s}^{(k)}(i, j)$'s? (\textbf{RQ3}) How effectively does \method mitigate oversmoothing? (\textbf{RQ4}) Are all components of \method necessary and impactful?

% \paragraph{Datasets.}
% We consider a total of $11$ datasets. Among these, three datasets are homophilous, namely Cora, Citeseer, and Pubmed~\citep{sen2008collective}. We also include six heterophilic datasets from \citet{pei2020geom} and \citet{rozemberczki2021multi}.
% Apart from these commonly used datasets, we additionally include 
% Penn94, a large-scale heterophilic 
% graph released by~\citet{lim2021large}, as well as an amazon-ratings dataset from~\citet{platonov2022critical}. 
% The training, validation, and testing splits employed are provided by Pytorch Geometry~\citep{fey2019fast}. These datasets span various domains and edge homophily. A detailed discussion of these datasets can be found in Appendix \ref{dataset_and_baseline}.

\paragraph{Datasets.} 
We use 11 widely-used node classification benchmark datasets, where 8 are heterophilic (Texas, Wisconsin, Cornell, Actor, Squirrel, Chameleon, Penn94, and Amazon-rating), and the remaining 3 are homophilic (Cora, Citeseer, and Pubmed).
In line with prior research, for training/validation/test splits, we employ the setting provided by Pytorch Geometric (PyG)~\citep{platonov2022critical}.
For details of the datasets, including their sources and construction methods, refer to Appendix~\ref{dataset_and_baseline}.

% \paragraph{Baselines.}
% We compare \method against $12$ baseline models, comprising (1) classic GNN models: vanilla GCN~\citep{kipf2016semi} and GAT~\citep{velivckovic2018graph}. (2) A 2-layer MLP. (3) GNNs that incorporate signed messages, including ACM-GCN~\citep{luan2022revisiting}, GPRGNN~\citep{chien2020adaptive}, FAGCN~\citep{bo2021beyond}, GGCN~\citep{yan2022two},  Goal~\citep{DBLP:conf/icml/ZhengZLZWP23}, and AERO-GNN~\cite{DBLP:conf/icml/LeeBYS23}. 
% We also consider (3) GNNs using concatenation, including Ordered GNN~\citep{song2022ordered} and H\textsubscript{2}GCN~\citep{zhu2020beyond}, as well as (5) DMP~\citep{yang2021diverse}, a GNN that utilizes attention vectors in message passing.
% A comprehensive description of the baselines is provided in Appendix \ref{dataset_and_baseline}. 

\vspace{-3mm}

\paragraph{Baseline models.}
We compare the performance of \method against the following $12$ baseline models:
\begin{enumerate}[leftmargin=*,noitemsep,topsep=0pt] % To remove margin, remove the right-two commands
    \item \textbf{Classic models}: GCN~\citep{kipf2016semi}, GAT~\citep{velivckovic2018graph}, and MLP,
    \item \textbf{SMP-based GNNs}: ACM-GCN~\citep{luan2022revisiting}, GPRGNN~\citep{chien2020adaptive}, FAGCN~\citep{bo2021beyond}, GGCN~\citep{yan2022two},  Goal~\citep{DBLP:conf/icml/ZhengZLZWP23}, and AERO-GNN~\cite{DBLP:conf/icml/LeeBYS23},
    \item \textbf{Concatenation-based GNNs}: Ordered GNN~\citep{song2022ordered} and H\textsubscript{2}GCN~\citep{zhu2020beyond},
    \item \textbf{Attention-vector-based GNN}: DMP~\citep{yang2021diverse}.
\end{enumerate}
For details of them,
refer to Appendix~\ref{dataset_and_baseline}. % we provide a comprehensive description of all these baseline methods.

% \subsection{Performance Results}
% We present the mean node classification accuracy, along with the corresponding standard deviation, for the test nodes across 10 random data splits. The results are summarized in Table \ref{tab:result}.

% Our method consistently achieves the best or runner-up performance on both homophilic and heterophilic datasets, with an average rank of 1.7. This highlights the superiority of \method compared to existing signed message-based and concatenation-based methods. For instance, the state-of-the-art signed message-based methods, Goal and AERO-GNN, only achieve an average rank of around 4.5. In addition to M2M-GNN, the DMP method also learns weight vectors instead of scalars for edges during the message passing process. However, there exists a significant performance gap between \method and DMP. This difference in performance can be attributed to the fact that while DMP utilizes weight vectors, it still follows a \textit{m-2-e} scheme in its message passing framework.

% Notably, \method stands out as the only method capable of outperforming GCN and GAT on the Amazon-ratings dataset among all the methods. Additionally, \method demonstrates excellent performance on the large-scale Penn94 graphs, showcasing its scalability. 

\subsection{\textbf{RQ1}: Node Classification Performance Results}
% \teal{We study the node classification performance of \method by using the described datasets and baseline methods.
% To this end, we run each classification experiment across 10 data splits and report the mean and standard deviation of accuracy results.
% Detailed results are reported in Table~\ref{tab:result}.}
In Table \ref{tab:result}, we provide the mean node classification accuracy for test nodes, along with their corresponding standard deviations, across 10 random data splits.

In summary, \method achieves the best average ranking among all the methods, securing a position in the top 3 across all datasets.
The following three points stand out.

First, \method outperforms all SMP-based GNNs. This observation serves as empirical support for our theoretical finding, indicating that the limitations of SMP (Sec.~\ref{sec:limitation}) can adversely impact its effectiveness. 
Conversely, the theoretical properties of \textit{m-2-m} (Sec.~\ref{subsec:m2mgnn_motivation}) contribute to the enhanced classification accuracy of \method.

Second, \method also outperforms DMP, an attention-vector-based GNN: This again demonstrates the superiority of \textit{m-2-m} over \textit{m-2-e}.
DMP is a variant of SMP assigning distinct coefficients to each channel of node representations. Consequently, while employing weight vectors, DMP still follows the \textit{m-2-e} paradigm in its message passing scheme. 
% That is, not only obtaining multi-view but also how to incorporate them significantly matters.

Lastly, \method outperforms concatenation-based GNNs. While H\textsubscript{2}GCN and Ordered GNN employ concatenation to combine representations from different hops, they fail to address the mixing of features of heterophilic neighbors within the same hop. This validates the effectiveness of the novel concatenation design used in \textit{m-2-m}.

\subsection{RQ2: Attention Weight Analysis}\label{subsec:exp_visual}
We analyze the distribution of learned attention weights $\mathbf{s}^{(k)}(i,j)$'s. To this end, we consider directed graphs. In the case of an undirected graph, we convert it into a directed graph by splitting each undirected edge into two directed edges. For an edge from $v_i$ to $v_j$, $v_i$ and $v_j$ are referred to as the source and target nodes of the edge, respectively.

As described in Section~\ref{subsec:proposed_method}, the learned attention weight $\mathbf{s}^{(k)}(i,j)$ corresponds to the soft label of target node $v_j$.
Let $\bar {\mathbf S}\in \mathbb R^{|\mathcal{E}|\times \mathcal{C}}$ be the average value across all layers, calculated as $\frac{1}{K} \sum_{k=1}^K \mathbf S^{(k)}$, where $\mathbf S^{(k)}$ is the matrix formed by all $\mathbf{s}^{(k)}(i,j) \in \mathbb R^{\mathcal{C}}$ $,\{v_i,v_j\} \in \mathcal{E}$.
To make a comparison, we also consider the ground truth edge labels $\hat{\mathbf{ S}}\in \mathbb R^{|\mathcal{E}|\times \mathcal{C}}$, where $\hat{\mathbf{ S}}_{ic}=1$ if the target node of the $i$-th edge belongs to class $c$. 
A learned $\bar{\mathbf{ S}}$ is considered accurate if, upon proper column index reordering, it is similar to $\hat{\mathbf{ S}}$. 
We reorder the columns of $\mathbf{\bar{S}}$ to ensure their values are properly aligned to the order of class.
% We rearrange the columns of $\bar{\mathbf{ S}}$ such that the $i$-th column of $\bar{\mathbf{ S}}$ corresponds to the most similar column, among all columns of $\bar{\mathbf{ S}}$, to the $i$-th column of $\hat{\mathbf{ S}}$. 
Then we compute the matrix $\mathbf{\mathcal{S}}=\hat{\mathbf{ S}}^T \hat{\mathbf{ S}} \in \mathbb R^{\mathcal{C}\times \mathcal{C}}$, followed by a row-wise softmax. 

In the definition of $\mathbf{\mathcal{S}}$, large diagonal entries imply predictions closely align with the ground truth, thus considered correct.
As shown in Fig.~\ref{fig:visualization}, the learned $\bar{\mathbf{S}}$ and the ground truth $\hat{\mathbf{S}}$ are well matched. 
Specifically, 19 out of the 23 columns (each column corresponds to one class, e.g., Cora has 6 classes of nodes) of $\mathcal{S}$ are verified to be correct, (i.e., the diagonal entries are the largest in the columns), with a particularly high accuracy observed for the homophilic graphs where nearly all edges are correctly identified.

% This matrix can be interpreted as: the $(i,j)$-th element of $\hat{\mathbf{ S}}^T \hat{\mathbf{ S}}$ denotes the ratio of edges belonging to class $j$ being classified as class $i$. Hence, accurate predictions would be represented by high values along the diagonal.

% The visualization of the results is presented in Fig.~\ref{fig:visualization}. 
% The learned $\bar{\mathbf{S}}$ and the ground truth $\hat{\mathbf{S}}$ demonstrate a notable resemblance. 
% Specifically, 19 out of the 23 columns are accurately represented, with a particularly high accuracy observed for the homophilic graphs where nearly all edges are correctly identified.

% \begin{figure}[htp]
%     \centering
%     \includegraphics[width=0.95\columnwidth]{}
%     \caption{3-hop subtree of the $5$-th node from the Cornell dataset. The nodes are indexed according to the PyTorch Geometry Library.}
%     \label{fig:subtree}
% \end{figure}

% \begin{figure}[t]
%     \centering
%     \includegraphics[width=0.95\columnwidth]{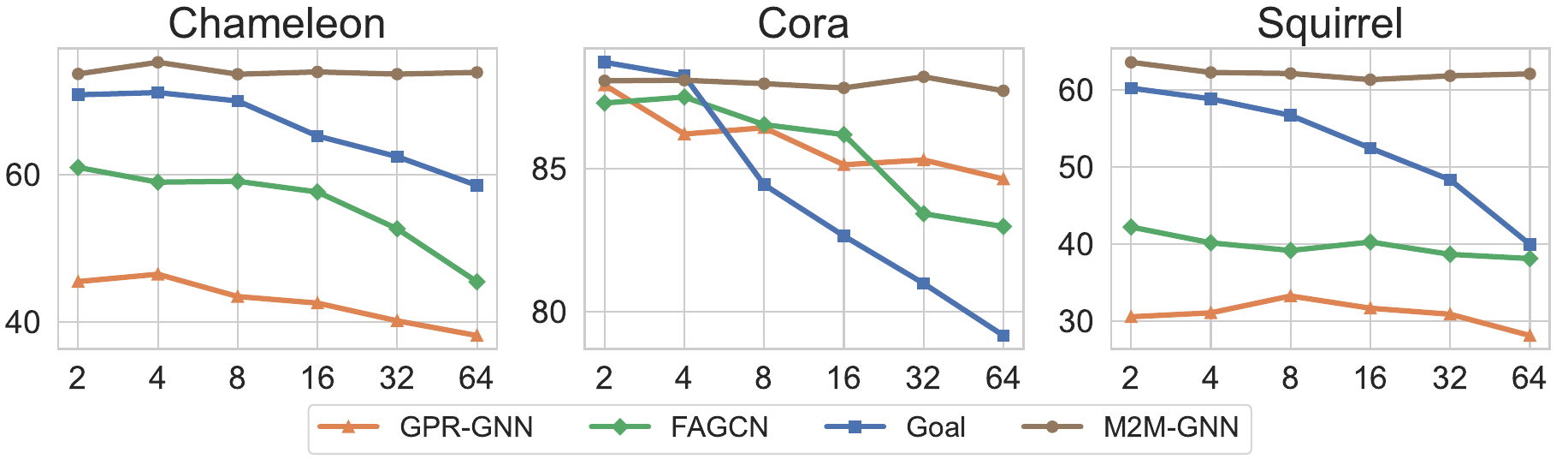}
%     \caption{Performance comparison of \method against SMP-based GNNs under various model depths.
%     The X-axis has the number of layers, and the Y-axis has node classification accuracy.
%     }
%     \label{fig:os_ana}
%     \vspace{-2mm}
% \end{figure}

\begin{figure}[t]
  \centering
    \includegraphics[width=\columnwidth]
    {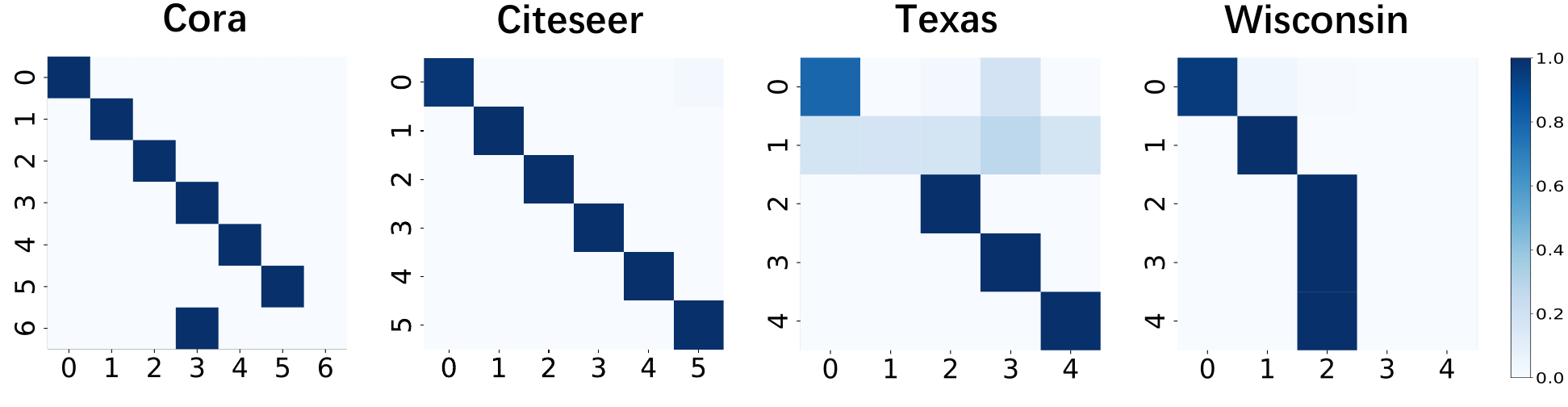}
    \caption{
    Visualization results of $\mathbf{\mathcal{S}}$ for two homophilic (left two) and two heterophilic (right two) graphs.
    (Off-)Diagonal entries indicate (in)correct prediction.} %, while off-diagonal entries indicate incorrect prediction.}
    \label{fig:visualization}
    %\vspace{-0.8em}
\end{figure}

\subsection{RQ3: Over-smoothing Analysis}
% We evaluate the robustness of \method by testing its performance at different depths (2,4,8,16,32,64) to assess its capability to address the over-smoothing problem. To provide a comparative analysis, we also include three models that use signed messages: FAGCN, Goal, and GPR-GNN, and a model using separation: Ordered GNN. We use two heterophilic graphs and one homophilic graph for experiments.

We evaluate the robustness of \method against the over-smoothing problem by assessing its performance at different model depths ($K \in \{2,4,8,16,32,64\}$). 
To provide a comparative analysis against other GNNs, we also evaluate three SMP-based GNNs: FAGCN, Goal, and GPR-GNN.
We evaluate the performances of these methods on two heterophilic graphs and one homophilic graph.

% The results, presented in Figure~\ref{fig:os_ana}, indicate that increasing the number of layers has minimal impact on the performance of M2M-GNN compared to SMP-based methods. The FAGCN, which shares high similarity with the SMP scheme discussed in Sec. \ref{sec:limitation}, exhibits vulnerability to oversmoothing. This phenomenon supports our previous analysis. Specifically, the performance of M2M-GNN remains consistent regardless of the number of layers employed. Furthermore, M2M-GNN performs superior to all baselines across all datasets, showcasing its robustness against oversmoothing. 

% \red{Revision of the above paragraph.}
% \teal{As shown in Fig.~\ref{fig:os_ana}, the performance of \method does not significantly change for the number of layers and shows the best performance when we stack deep layers ($K=32$) in the Cora dataset.
% On the other hand, the utilized SMP-based GNNs show a significant performance drop as their layers get deeper, demonstrating their empirical vulnerability against the oversmoothing problem.}
The analysis presented in Fig.~\ref{fig:os_ana} illustrates that the performance of \method remains relatively stable with varying numbers of layers, achieving its best performance when deeper layers are employed ($K=32$) in the Cora dataset. In contrast, the SMP-based GNNs utilized in the experiment exhibit a substantial decrease in performance as the number of layers increases, indicating their empirical susceptibility to the oversmoothing problem.

\begin{figure}[t]
    \centering
    \includegraphics[width=\columnwidth]{}
    \caption{Performance comparison of \method against SMP-based GNNs under various model depths.
    The X-axis has the number of layers, and the Y-axis has node classification accuracy.
    }
    \label{fig:os_ana}
\end{figure}

\color{black}
\subsection{RQ4: \method Component Analysis}
We examine the following: (1) the necessity of regularization loss $\mathcal{L}_{reg}$ (see Eq.~\eqref{eq:reg}) and (2) the impact of the number of chunks $\mathcal{C}$.
Specifically, we assess the performance of \method with/without $\mathcal{L}_{reg}$ and across various values of $\mathcal{C}$.
We focus on evaluating (1) the discriminative power of soft labels, (2) node classification accuracy, and (3) robustness against oversmoothing.

To this end, we use three metrics: 
(1) mixing score, which is defined as the proportion of heterophilic edges $\{v_{i},v_{j}\}$ such that the argmax chunk is different for $(v_{i} \rightarrow v_{j})-$ and $(v_{j} \rightarrow v_{i})-$message passing,
(2) node classification accuracy at the best hyperparameter setting for each, and
(3) node classification accuracy at $K=32$.
Across all metrics, a higher value indicates better model performance.
We utilized one heterophilic dataset (Wisconsin) and one homophilic dataset (Pubmed).

We present results in Fig.~\ref{fig:ablation}, where
each tuple on the X-axis represents a pair of parameters ($\mathcal{C}, \lambda$). Recall that $\lambda$ is a scalar for $\mathcal{L}_{reg}$, and $\lambda = 0$ indicates the absence of the regularization loss.
First, $\mathcal{L}_{reg}$ is demonstrated to be necessary, given that the models with $\mathcal{L}_{reg} \neq 0$ outperforms that with $\mathcal{L}_{reg} = 0$ across all metrics and datasets.
Second, setting the number of chunks and classes equal leads to the best performance. 
While increasing $\mathcal{C}$ may help discriminate heterophilic node representations, this may cause excessive separation, which is harmful to smoothing homophilic node representations, leading to suboptimal node embeddings. 
\color{black}

% \begin{figure}[t]
%     \centering
%     \includegraphics[width=1.0\columnwidth]{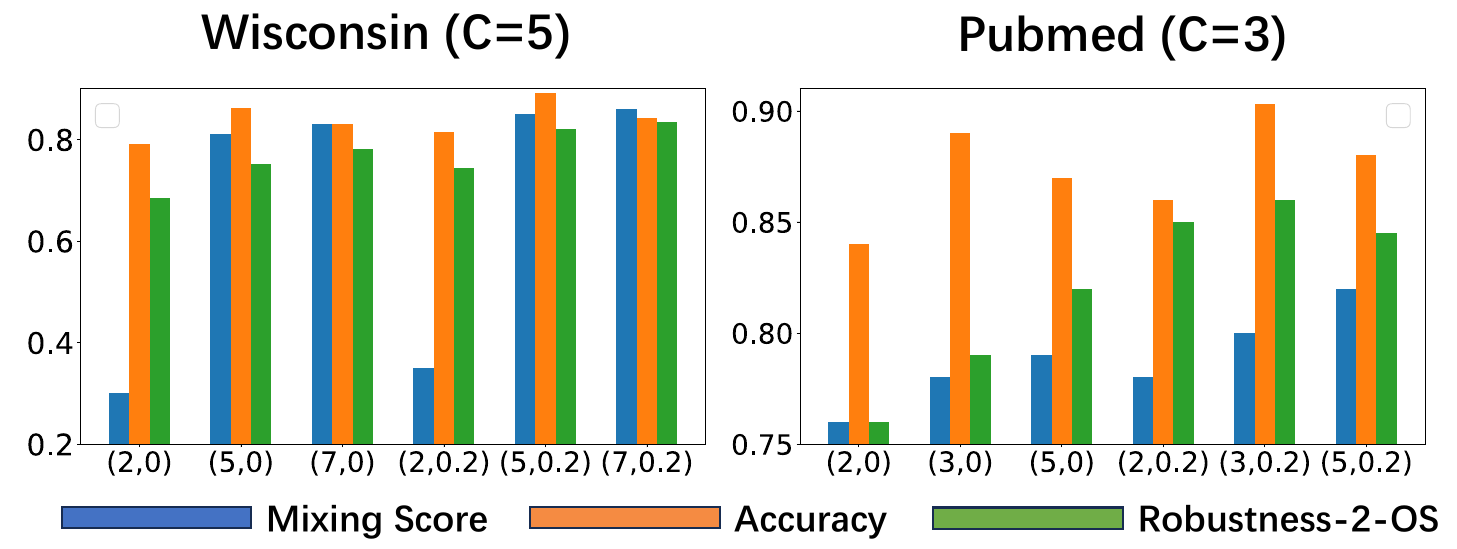}
%     \caption{Ablation study. Each tuple on the X-axis indicates a pair of parameters: (number of chunks $\mathcal{C}$, strength of regularization $\lambda$).
%     For all the metrics, higher values indicate better performance. \label{fig:ablation}}
% \end{figure}

% for the heterophilic dataset, a consistent increase in diversity and mixing score as $\mathcal{C}$ and $\lambda$ increase. 
% % This aligns with our intuition, as a higher number of chunks and larger $\lambda$ values encourage diversity, resulting in a lower probability of placing heterophilic nodes in the same chunk. 
% Notably, larger $\mathcal{C}$ exhibits significantly higher robustness against oversmoothing (red bar, $\mathcal{C}=2$ vs. $\mathcal{C}=5$), validating our theoretical findings that an increased number of elements facilitates discrimination between neighborhoods. However, excessively increasing $\mathcal{C}$ can lead to a performance drop (green bar, $\mathcal{C}=5$ vs. $\mathcal{C}=7$). This occurs because using too many chunks may separate homophilic nodes into different chunks, preventing beneficial mixing. (2) In the case of the homophilic dataset, the impact of increasing $\mathcal{C}$ and $\lambda$ on diversity and mixing scores, as well as robustness to oversmoothing, is not as prominent as observed in the heterophilic dataset. Since most neighbors are homophilic (put in the same chunk), the significance of incorporating more chunks is reduced.

}

\section{Related Work}{
\label{sec:related_work}
\textbf{Analysis of signed message passing.}
The theoretical analysis by \citet{yan2022two} explores the impact of signed messages in GNNs under a binary-class setting, introducing the desirable signs for homophilic and heterophilic edges, respectively.
\citet{choi2023signed} extend the theoretical framework to a multi-class setting, analyzing how node embeddings are updated in multi-class \texttt{SMP}. 

\citet{choi2023signed} investigate the impact of the error rate on performance by assigning positive and negative coefficients to heterophilic and homophilic edges, respectively. However, they do not discuss the suboptimality of \texttt{SMP} for GNNs with multiple layers, even when the error rate is zero (i.e., desirable). 

\citet{NEURIPS2023_23aa2163} reveal that for a triad in a graph (i.e., three connected nodes), their embedding update can be undesirable even when they are connected by edges with proper signs. However, they overlook the influence of other nodes and do not present a solution to mitigate this limitation, instead sidestepping it by employing a one-layer GNN.

%\textit{\textbf{Note}.
%\citet{choi2023signed} lacks an analysis of the exponential loss of distinguishability among the expected class means. Their proposed solution still follows the \texttt{SMP} scheme.}

% \textit{\textbf{Note}. \red{While \citet{choi2023signed} focus on the mean of node embeddings for given fixed node features, we analyze (1) both mean and variance under (2) a stochastic setting (CSBM).}}

\textbf{Oversmoothing analysis.}
Among many studies of oversmoothing~\citep{oono2019graph,zhou2021dirichlet, keriven2022not, wu2023demystifying, wu2022non, DBLP:conf/icml/LeeBYS23, DBLP:journals/tkde/LiangXSKQY24}, we focus on two most related ones.
\citet{wu2022non} present a non-asymptotic analysis of the oversmoothing problem on binary-class graphs, focusing on positive propagation coefficient settings. 
\citet{DBLP:conf/icml/LeeBYS23} study oversmoothing phenomena in attention-based methods under asymptotic cases where the number of model layers goes infinity.

Unlike \citep{wu2022non}, we focus on multi-class settings with signed message passing, and compared to \citep{DBLP:conf/icml/LeeBYS23}, we provide non-asymptotic results.

\textbf{Concatenation-based GNNs.}
To avoid undesirable feature smoothing, many concatenation-based GNNs have been proposed~\citep{hamilton2017inductive, xu2018representation, zhu2020beyond,  song2022ordered}.
They mainly (1) concatenate ego node features and neighbor features and/or (2) concatenate node features at different distances.

Note that LW-GCN~\citep{dai2022label} also focuses on separating node features from various classes. However, the authors do not investigate the constraints of SMP or establish theoretical properties for addressing these constraints. 
In contrast to LW-GCN's goal of accurately mapping nodes to chunks according to their labels, M2M-GNN adopts a more practical strategy by assigning nodes with different labels to separate chunks.
}

\begin{figure}[t]
    % \vspace{-2mm}
    \centering
    \includegraphics[width=1.0\columnwidth]{}
    \caption{Ablation study. Each tuple on the X-axis indicates a pair of parameters: (number of chunks $\mathcal{C}$, strength of regularization $\lambda$).
    For all the metrics, higher values indicate better performance. \label{fig:ablation}}
\end{figure}

\vspace{-2mm}

\section{Conclusion}{
\label{sec:conclusion}
In this paper, we reveal two unexpected challenges associated with signed message passing (SMP): (a) undesirable embedding update and (b) vulnerability to oversmoothing.
To mitigate these challenges, we develop a novel multiset-to-multiset (m-2-m) message passing scheme and prove several desirable theoretical properties of it.
Motivated by the properties, we propose \method, a novel GNN based on the m-2-m scheme.
Through our comprehensive experiments,
we demonstrate the superiority of \method over existing SMP-based GNNs in node classification tasks.

% In this paper, we analyze the limitations of \textit{SMP}, which incorporates negative edge weights to address the heterophily problem. 
% While \textit{SMP} has been widely used in the research community due to its effectiveness in capturing heterophily, our analysis identifies two fundamental drawbacks. 
% Even in an ideal setting with desirable propagation matrices, \textit{SMP} suffers from exponential loss of expressiveness as the number of layers increases and encounters undesirable embedding updates. 
% To mitigate these limitations, we propose a novel message passing scheme called \textit{m2m} that significantly improves upon \textit{SMP}. 
% Through comprehensive experiments, we demonstrate the effectiveness of our proposed \method.
}

\section*{Acknowledgements}
We want to thank all the anonymous reviewers for their constructive comments. The authors also thank Fanchen Bu for his insightful feedback on Section 3.
This work was partially supported by the Major Key Project of PCL
(No. 2022ZD0115301), and an Open Research Project of Zhejiang Lab (NO.2022RC0AB04).

\section*{Impact Statement}{
\label{sec:broader_impact}
This paper presents work whose goal is to advance graph representation learning with heterophily, a prevalent trend in real-world connections observed in domains like chemical compounds and social networks. 

The effectiveness of our model may be compromised by inaccurate node labels, as labeled nodes are crucial for providing supervision signals during model training, potentially leading to adverse implications. For instance, utilizing a misled model to detect fraud within a trading network could introduce misclassifications and bias against legitimate users. To mitigate the vulnerability to noisy labels, established techniques can be integrated into our methodology. Additionally, it is essential to formulate specific ethical guidelines and impose constraints on the application of our model based on practical considerations.

Our model generates node representations using node attributes and linear transformation. Within the context of a social network represented as a graph, these attributes may encompass sensitive personal data like age and gender. To improve fairness, it may be helpful to incorporate the notion of counterfactual fairness, which emphasizes that a decision made about an individual is considered fair if altering the individual's sensitive attribute does not influence the decision outcome. 

Unlike graph attention networks, which explicitly learn attention weights for each neighbor, our model makes it challenging to quantify the individual impact of each neighbor. Consequently, the interpretability of predictions in our model is not as straightforward as in the graph attention network. In order to enhance the transparency of predictions, existing graph explainers can be employed. For example, when focusing on a specific node $v_i$, masking individual edges allows us to pinpoint which edge exerts the most significant influence on its prediction. Analogous techniques can be applied to a particular node or attribute.
}

\clearpage

\bibliography{paperbib}

\begin{thebibliography}{44}
\providecommand{\natexlab}[1]{#1}
\providecommand{\url}[1]{\texttt{#1}}
\expandafter\ifx\csname urlstyle\endcsname\relax
  \providecommand{\doi}[1]{doi: #1}\else
  \providecommand{\doi}{doi: \begingroup \urlstyle{rm}\Url}\fi

\bibitem[Atay \& Tuncel(2014)Atay and Tuncel]{atay2014spectrum}
Atay, F.~M. and Tuncel, H.
\newblock On the spectrum of the normalized laplacian for signed graphs: Interlacing, contraction, and replication.
\newblock \emph{Linear Algebra and its Applications}, 442:\penalty0 165--177, 2014.

\bibitem[Ba et~al.(2016)Ba, Kiros, and Hinton]{ba2016layer}
Ba, J.~L., Kiros, J.~R., and Hinton, G.~E.
\newblock Layer normalization.
\newblock \emph{arXiv preprint arXiv:1607.06450}, 2016.

\bibitem[Bo et~al.(2021)Bo, Wang, Shi, and Shen]{bo2021beyond}
Bo, D., Wang, X., Shi, C., and Shen, H.
\newblock Beyond low-frequency information in graph convolutional networks.
\newblock In \emph{Proceedings of the AAAI Conference on Artificial Intelligence}, volume~35, pp.\  3950--3957, 2021.

\bibitem[Brody et~al.(2021)Brody, Alon, and Yahav]{brody2021attentive}
Brody, S., Alon, U., and Yahav, E.
\newblock How attentive are graph attention networks?
\newblock In \emph{International Conference on Learning Representations}, 2021.

\bibitem[Chien et~al.(2020)Chien, Peng, Li, and Milenkovic]{chien2020adaptive}
Chien, E., Peng, J., Li, P., and Milenkovic, O.
\newblock Adaptive universal generalized pagerank graph neural network.
\newblock In \emph{International Conference on Learning Representations}, 2020.

\bibitem[Choi et~al.(2023)Choi, Choi, Ko, and Kim]{choi2023signed}
Choi, Y., Choi, J., Ko, T., and Kim, C.-K.
\newblock Is signed message essential for graph neural networks?
\newblock \emph{arXiv preprint arXiv:2301.08918}, 2023.

\bibitem[Chung \& Lu(2006)Chung and Lu]{chung2006concentration}
Chung, F. and Lu, L.
\newblock Concentration inequalities and martingale inequalities: a survey.
\newblock \emph{Internet mathematics}, 3\penalty0 (1), 2006.

\bibitem[Dai et~al.(2022)Dai, Zhou, Guo, and Wang]{dai2022label}
Dai, E., Zhou, S., Guo, Z., and Wang, S.
\newblock Label-wise graph convolutional network for heterophilic graphs.
\newblock In \emph{Learning on Graphs Conference}, pp.\  26--1. PMLR, 2022.

\bibitem[Defferrard et~al.(2016)Defferrard, Bresson, and Vandergheynst]{defferrard2016convolutional}
Defferrard, M., Bresson, X., and Vandergheynst, P.
\newblock Convolutional neural networks on graphs with fast localized spectral filtering.
\newblock \emph{Advances in neural information processing systems}, 29, 2016.

\bibitem[Deshpande et~al.(2018)Deshpande, Sen, Montanari, and Mossel]{deshpande2018contextual}
Deshpande, Y., Sen, S., Montanari, A., and Mossel, E.
\newblock Contextual stochastic block models.
\newblock \emph{Advances in Neural Information Processing Systems}, 31, 2018.

\bibitem[Fey \& Lenssen(2019)Fey and Lenssen]{fey2019fast}
Fey, M. and Lenssen, J.~E.
\newblock Fast graph representation learning with pytorch geometric.
\newblock \emph{arXiv preprint arXiv:1903.02428}, 2019.

\bibitem[Hamilton et~al.(2017)Hamilton, Ying, and Leskovec]{hamilton2017inductive}
Hamilton, W., Ying, Z., and Leskovec, J.
\newblock Inductive representation learning on large graphs.
\newblock \emph{Advances in neural information processing systems}, 30:\penalty0 1024--1034, 2017.

\bibitem[Keriven(2022)]{keriven2022not}
Keriven, N.
\newblock Not too little, not too much: a theoretical analysis of graph (over) smoothing.
\newblock \emph{Advances in Neural Information Processing Systems}, 35:\penalty0 2268--2281, 2022.

\bibitem[Kingma \& Ba(2014)Kingma and Ba]{kingma2014adam}
Kingma, D.~P. and Ba, J.
\newblock Adam: A method for stochastic optimization.
\newblock \emph{arXiv preprint arXiv:1412.6980}, 2014.

\bibitem[Kipf \& Welling(2016)Kipf and Welling]{kipf2016semi}
Kipf, T.~N. and Welling, M.
\newblock Semi-supervised classification with graph convolutional networks.
\newblock In \emph{International Conference on Learning Representations}, 2016.

\bibitem[Lee et~al.(2023)Lee, Bu, Yoo, and Shin]{DBLP:conf/icml/LeeBYS23}
Lee, S.~Y., Bu, F., Yoo, J., and Shin, K.
\newblock Towards deep attention in graph neural networks: Problems and remedies.
\newblock In \emph{International Conference on Machine Learning}, volume 202, pp.\  18774--18795. {PMLR}, 2023.

\bibitem[Lee et~al.(2024)Lee, Kim, Bu, Yoo, Tang, and Shin]{DBLP:journals/corr/abs-2402-04621}
Lee, S.~Y., Kim, S., Bu, F., Yoo, J., Tang, J., and Shin, K.
\newblock Feature distribution on graph topology mediates the effect of graph convolution: Homophily perspective.
\newblock \emph{CoRR}, abs/2402.04621, 2024.

\bibitem[Li et~al.(2018)Li, Han, and Wu]{li2018deeper}
Li, Q., Han, Z., and Wu, X.-M.
\newblock Deeper insights into graph convolutional networks for semi-supervised learning.
\newblock In \emph{Proceedings of the AAAI conference on artificial intelligence}, volume~32, 2018.

\bibitem[Liang et~al.(2023)Liang, Hu, Xu, Song, and King]{NEURIPS2023_23aa2163}
Liang, L., Hu, X., Xu, Z., Song, Z., and King, I.
\newblock Predicting global label relationship matrix for graph neural networks under heterophily.
\newblock \emph{Advances in Neural Information Processing Systems}, 36:\penalty0 10909--10921, 2023.

\bibitem[Liang et~al.(2024)Liang, Xu, Song, King, Qi, and Ye]{DBLP:journals/tkde/LiangXSKQY24}
Liang, L., Xu, Z., Song, Z., King, I., Qi, Y., and Ye, J.
\newblock Tackling long-tailed distribution issue in graph neural networks via normalization.
\newblock \emph{{IEEE} Trans. Knowl. Data Eng.}, 36\penalty0 (5):\penalty0 2213--2223, 2024.

\bibitem[Lim et~al.(2021)Lim, Hohne, Li, Huang, Gupta, Bhalerao, and Lim]{lim2021large}
Lim, D., Hohne, F., Li, X., Huang, S.~L., Gupta, V., Bhalerao, O., and Lim, S.~N.
\newblock Large scale learning on non-homophilous graphs: New benchmarks and strong simple methods.
\newblock \emph{Advances in Neural Information Processing Systems}, 34:\penalty0 20887--20902, 2021.

\bibitem[Lin et~al.(2020)Lin, Quan, Wang, Ma, and Zeng]{lin2020kgnn}
Lin, X., Quan, Z., Wang, Z.-J., Ma, T., and Zeng, X.
\newblock Kgnn: Knowledge graph neural network for drug-drug interaction prediction.
\newblock In \emph{IJCAI}, pp.\  2739--2745, 2020.

\bibitem[Luan et~al.(2022)Luan, Hua, Lu, Zhu, Zhao, Zhang, Chang, and Precup]{luan2022revisiting}
Luan, S., Hua, C., Lu, Q., Zhu, J., Zhao, M., Zhang, S., Chang, X.-W., and Precup, D.
\newblock Revisiting heterophily for graph neural networks.
\newblock \emph{Advances in neural information processing systems}, 35:\penalty0 1362--1375, 2022.

\bibitem[Oliveira(2009)]{oliveira2009concentration}
Oliveira, R.~I.
\newblock Concentration of the adjacency matrix and of the laplacian in random graphs with independent edges.
\newblock \emph{arXiv preprint arXiv:0911.0600}, 2009.

\bibitem[Oono \& Suzuki(2019)Oono and Suzuki]{oono2019graph}
Oono, K. and Suzuki, T.
\newblock Graph neural networks exponentially lose expressive power for node classification.
\newblock In \emph{International Conference on Learning Representations}, 2019.

\bibitem[Paszke et~al.(2019)Paszke, Gross, Massa, Lerer, Bradbury, Chanan, Killeen, Lin, Gimelshein, Antiga, et~al.]{paszke2019pytorch}
Paszke, A., Gross, S., Massa, F., Lerer, A., Bradbury, J., Chanan, G., Killeen, T., Lin, Z., Gimelshein, N., Antiga, L., et~al.
\newblock Pytorch: An imperative style, high-performance deep learning library.
\newblock \emph{Advances in neural information processing systems}, 32, 2019.

\bibitem[Pei et~al.(2020)Pei, Wei, Chang, Lei, and Yang]{pei2020geom}
Pei, H., Wei, B., Chang, K. C.-C., Lei, Y., and Yang, B.
\newblock Geom-gcn: Geometric graph convolutional networks.
\newblock \emph{arXiv preprint arXiv:2002.05287}, 2020.

\bibitem[Platonov et~al.(2022)Platonov, Kuznedelev, Diskin, Babenko, and Prokhorenkova]{platonov2022critical}
Platonov, O., Kuznedelev, D., Diskin, M., Babenko, A., and Prokhorenkova, L.
\newblock A critical look at the evaluation of gnns under heterophily: Are we really making progress?
\newblock In \emph{International Conference on Learning Representations}, 2022.

\bibitem[Rozemberczki et~al.(2021)Rozemberczki, Allen, and Sarkar]{rozemberczki2021multi}
Rozemberczki, B., Allen, C., and Sarkar, R.
\newblock Multi-scale attributed node embedding.
\newblock \emph{Journal of Complex Networks}, 9\penalty0 (2):\penalty0 cnab014, 2021.

\bibitem[Song et~al.(2022)Song, Zhou, Wang, and Lin]{song2022ordered}
Song, Y., Zhou, C., Wang, X., and Lin, Z.
\newblock Ordered gnn: Ordering message passing to deal with heterophily and over-smoothing.
\newblock In \emph{International Conference on Learning Representations}, 2022.

\bibitem[Tang et~al.(2013)Tang, Hu, Gao, and Liu]{DBLP:conf/ijcai/TangHGL13}
Tang, J., Hu, X., Gao, H., and Liu, H.
\newblock Exploiting local and global social context for recommendation.
\newblock In \emph{{IJCAI}}, pp.\  2712--2718, 2013.

\bibitem[Tsitsulin et~al.(2023)Tsitsulin, Palowitch, Perozzi, and M{\"u}ller]{tsitsulin2023graph}
Tsitsulin, A., Palowitch, J., Perozzi, B., and M{\"u}ller, E.
\newblock Graph clustering with graph neural networks.
\newblock \emph{Journal of Machine Learning Research}, 24\penalty0 (127):\penalty0 1--21, 2023.

\bibitem[Veli{\v{c}}kovi{\'c} et~al.(2018)Veli{\v{c}}kovi{\'c}, Cucurull, Casanova, Romero, Li{\`o}, and Bengio]{velivckovic2018graph}
Veli{\v{c}}kovi{\'c}, P., Cucurull, G., Casanova, A., Romero, A., Li{\`o}, P., and Bengio, Y.
\newblock Graph attention networks.
\newblock In \emph{International Conference on Learning Representations}, 2018.

\bibitem[Wu et~al.(2022)Wu, Chen, Wang, and Jadbabaie]{wu2022non}
Wu, X., Chen, Z., Wang, W.~W., and Jadbabaie, A.
\newblock A non-asymptotic analysis of oversmoothing in graph neural networks.
\newblock In \emph{International Conference on Learning Representations}, 2022.

\bibitem[Wu et~al.(2023)Wu, Ajorlou, Wu, and Jadbabaie]{wu2023demystifying}
Wu, X., Ajorlou, A., Wu, Z., and Jadbabaie, A.
\newblock Demystifying oversmoothing in attention-based graph neural networks.
\newblock \emph{arXiv preprint arXiv:2305.16102}, 2023.

\bibitem[Xu et~al.(2018{\natexlab{a}})Xu, Hu, Leskovec, and Jegelka]{xu2018powerful}
Xu, K., Hu, W., Leskovec, J., and Jegelka, S.
\newblock How powerful are graph neural networks?
\newblock In \emph{International Conference on Learning Representations}, 2018{\natexlab{a}}.

\bibitem[Xu et~al.(2018{\natexlab{b}})Xu, Li, Tian, Sonobe, Kawarabayashi, and Jegelka]{xu2018representation}
Xu, K., Li, C., Tian, Y., Sonobe, T., Kawarabayashi, K.-i., and Jegelka, S.
\newblock Representation learning on graphs with jumping knowledge networks.
\newblock In \emph{International conference on machine learning}, pp.\  5453--5462. PMLR, 2018{\natexlab{b}}.

\bibitem[Yan et~al.(2022)Yan, Hashemi, Swersky, Yang, and Koutra]{yan2022two}
Yan, Y., Hashemi, M., Swersky, K., Yang, Y., and Koutra, D.
\newblock Two sides of the same coin: Heterophily and oversmoothing in graph convolutional neural networks.
\newblock In \emph{2022 IEEE International Conference on Data Mining (ICDM)}, pp.\  1287--1292. IEEE, 2022.

\bibitem[Yang et~al.(2021)Yang, Li, Liu, Wang, Cao, Guo, et~al.]{yang2021diverse}
Yang, L., Li, M., Liu, L., Wang, C., Cao, X., Guo, Y., et~al.
\newblock Diverse message passing for attribute with heterophily.
\newblock \emph{Advances in Neural Information Processing Systems}, 34:\penalty0 4751--4763, 2021.

\bibitem[Yang et~al.(2016)Yang, Cohen, and Salakhutdinov]{DBLP:conf/icml/YangCS16}
Yang, Z., Cohen, W.~W., and Salakhutdinov, R.
\newblock Revisiting semi-supervised learning with graph embeddings.
\newblock In \emph{International Conference on Machine Learning}, volume~48, pp.\  40--48. PMLR, 2016.

\bibitem[Zheng et~al.(2022)Zheng, Liu, Pan, Zhang, Jin, and Yu]{DBLP:journals/corr/abs-2202-07082}
Zheng, X., Liu, Y., Pan, S., Zhang, M., Jin, D., and Yu, P.~S.
\newblock Graph neural networks for graphs with heterophily: {A} survey.
\newblock \emph{CoRR}, abs/2202.07082, 2022.

\bibitem[Zheng et~al.(2023)Zheng, Zhang, Lee, Zheng, Wang, and Pan]{DBLP:conf/icml/ZhengZLZWP23}
Zheng, Y., Zhang, H., Lee, V.~C., Zheng, Y., Wang, X., and Pan, S.
\newblock Finding the missing-half: Graph complementary learning for homophily-prone and heterophily-prone graphs.
\newblock In \emph{International Conference on Machine Learning}, volume 202, pp.\  42492--42505. {PMLR}, 2023.

\bibitem[Zhou et~al.(2021)Zhou, Huang, Zha, Chen, Li, Choi, and Hu]{zhou2021dirichlet}
Zhou, K., Huang, X., Zha, D., Chen, R., Li, L., Choi, S.-H., and Hu, X.
\newblock Dirichlet energy constrained learning for deep graph neural networks.
\newblock \emph{Advances in Neural Information Processing Systems}, 34:\penalty0 21834--21846, 2021.

\bibitem[Zhu et~al.(2020)Zhu, Yan, Zhao, Heimann, Akoglu, and Koutra]{zhu2020beyond}
Zhu, J., Yan, Y., Zhao, L., Heimann, M., Akoglu, L., and Koutra, D.
\newblock Beyond homophily in graph neural networks: Current limitations and effective designs.
\newblock \emph{Advances in neural information processing systems}, 33:\penalty0 7793--7804, 2020.

\end{thebibliography}
\bibliographystyle{icml2024}

%%%%%%%%%%%%%%%%%%%%%%%%%%%%%%%%%%%%%%%%%%%%%%%%%%%%%%%%%%%%%%%%%%%%%%%%%%%%%%%
%%%%%%%%%%%%%%%%%%%%%%%%%%%%%%%%%%%%%%%%%%%%%%%%%%%%%%%%%%%%%%%%%%%%%%%%%%%%%%%
% APPENDIX
%%%%%%%%%%%%%%%%%%%%%%%%%%%%%%%%%%%%%%%%%%%%%%%%%%%%%%%%%%%%%%%%%%%%%%%%%%%%%%%
%%%%%%%%%%%%%%%%%%%%%%%%%%%%%%%%%%%%%%%%%%%%%%%%%%%%%%%%%%%%%%%%%%%%%%%%%%%%%%%
\clearpage

\appendix
\onecolumn

\section{Discriminative Power of \textit{m-2-m}}\label{app:discriminative_power}
The work~\citep{xu2018powerful} characterizes the discriminative power of GNNs by analyzing whether a GNN maps two different neighborhoods (i.e., two multisets) to the same representation.

\textit{m-2-m} consistently yields higher discriminative power compared to \textit{m-2-e}. The intuition behind this is that distinguishing between two multisets is typically expected to be easier compared to distinguishing between two single elements. This can be understood similarly to classifying multi-dimensional features versus classifying one-dimensional features: additional dimensions provide more evidence for accurate classification. Formally, we present the following result.
\begin{lemma}\label{thm:app_dis_pow}
    Under the same setting as described in Theorem~\ref{theorem:discri_pow}: 
    
    (1) Any two distinct multisets that can be distinguished by \textit{m-2-e} ($\mathbf{m}_a^{m2e}\neq \mathbf{m}_b^{m2e}$) can also be discriminated by \textit{m-2-m} ($\mathbf{m}_a^{m2m}\neq \mathbf{m}_b^{m2m}$).
    
    (2) There exist two different multisets such that \textit{m-2-m} can differentiate, whereas \textit{m-2-e} is unable to discriminate them: $\mathbf{m}_a^{m2e}= \mathbf{m}_b^{m2e}$.
\end{lemma}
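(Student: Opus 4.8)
The plan is to prove the two claims separately, both by exhibiting an explicit \textit{m-2-m} scheme that "refines" the information retained by \textit{m-2-e}. The key structural observation is that \textit{m-2-e} computes a single pooled vector $\mathbf{m}^{m2e} = \phi(\mathcal{X})$, whereas \textit{m-2-m} first partitions $\mathcal{X}$ into subsets $\mathcal{X}_1, \ldots, \mathcal{X}_\omega$, applies $\phi$ to each, and concatenates. Since the mapping $\phi$ in Definition~\ref{def:mapping} is built from a shared learnable weight matrix followed by an element-wise pooling operation (sum, mean, or max), the \textit{m-2-e} output can be recovered from the \textit{m-2-m} outputs by an appropriate post-aggregation of the chunks. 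This recoverability is exactly what drives part (1).

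For claim (1), I would argue the contrapositive in spirit: suppose two multisets $\mathcal{X}_a, \mathcal{X}_b$ satisfy $\mathbf{m}_a^{m2m} = \mathbf{m}_b^{m2m}$ for a \textit{desirable} \textit{m-2-m} scheme; I want to deduce $\mathbf{m}_a^{m2e} = \mathbf{m}_b^{m2e}$, which contrapositively gives the claim. The concrete step is to choose the \textit{m-2-m} scheme whose chunks use the \emph{same} weight matrix $\mathbf{W}$ as the given \textit{m-2-e} map and whose pooling is sum (the argument for mean/max is analogous but sum is cleanest). Then each chunk is $\phi(\mathcal{X}_t) = \sum_{x \in \mathcal{X}_t}(x\mathbf{W})$, and because summation is additive across a disjoint partition, $\sum_{t=1}^{\omega}\phi(\mathcal{X}_t) = \sum_{x \in \mathcal{X}}(x\mathbf{W}) = \phi(\mathcal{X}) = \mathbf{m}^{m2e}$. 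So equality of the full concatenated \textit{m-2-m} vectors forces equality of the chunk-wise sums, hence equality of the \textit{m-2-e} vectors. Running this with $\mathcal{X}_a$ and $\mathcal{X}_b$ shows that whenever \textit{m-2-m} collapses the two multisets, so does \textit{m-2-e}; equivalently, anything \textit{m-2-e} separates, \textit{m-2-m} separates too.

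For claim (2), I would exhibit a minimal counterexample. Take two multisets of vectors with two classes present, constructed so that their class-wise contents differ but their total (unsigned or signed) sum coincides — for instance $\mathcal{X}_a = \{u, -u\}$ with the two elements in \emph{different} classes and $\mathcal{X}_b = \{w, -w\}$ likewise, chosen so that the pooled \textit{m-2-e} vector is $\mathbf{0}$ for both (making $\mathbf{m}_a^{m2e} = \mathbf{m}_b^{m2e}$) while the per-class chunks differ. Using sum pooling and the identity map, \textit{m-2-e} yields $u + (-u) = \mathbf{0} = w + (-w)$, so the two are indistinguishable; but the desirable \textit{m-2-m} scheme places each class in its own chunk, producing the concatenation $u \,\Vert\, (-u)$ versus $w \,\Vert\, (-w)$, which differ as long as $u \neq w$. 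I expect the main obstacle to be stating the counterexample so that it is simultaneously valid for the specific pooling operators allowed in Definition~\ref{def:mapping} and consistent with the "same setting as Theorem~\ref{theorem:discri_pow}" — in particular ensuring the class assignment is legitimate and that the chosen $\phi$ is of the admissible form. Once the construction respects those constraints, both verifications are one-line computations.
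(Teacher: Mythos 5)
Your argument for claim (1) under sum pooling is correct and is essentially the paper's own proof (a contrapositive/contradiction argument driven by additivity of sums over a disjoint partition), and your example-based strategy for claim (2) is also the paper's. The genuine gap is your assertion that the mean case is ``analogous.'' It is not: for within-chunk mean pooling, the recoverability step fails, and in fact the claim itself fails. Take scalar multisets $\mathcal{X}_a = \{0,2,2\}$ and $\mathcal{X}_b = \{0,0,2\}$, with the $0$'s in class $1$ and the $2$'s in class $2$. The desirable (class-aligned) partitions are $\{0\}\cup\{2,2\}$ and $\{0,0\}\cup\{2\}$, and within-chunk means give the identical concatenation $(0,2)$ for both, while the global means $4/3$ and $2/3$ differ --- so \textit{m-2-e} separates a pair that this \textit{m-2-m} scheme collapses, the exact opposite of claim (1). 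The root cause is that the sum of within-chunk means is not the global mean, and chunk cardinalities cannot be recovered from the concatenated output. The paper's proof evades this by redefining the chunk map in the mean case as $\phi(\mathcal{X}'_{a,k}) = \frac{1}{|\mathcal{X}_a|}\sum_{x\in\mathcal{X}'_{a,k}} x$, i.e., partial sums normalized by the \emph{total} multiset cardinality, after which your additivity argument goes through verbatim. Your proof needs this modification (or an equivalent convention) stated explicitly; without it, the step is wrong, not merely incomplete.

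Two smaller points on max pooling, which the lemma's setting also admits. For claim (1), the correct post-aggregation is the elementwise maximum over chunks (the global maximum of each coordinate is attained inside some chunk), not summation; the paper writes this out as a separate argument, and it cannot be obtained by your additivity reasoning. For claim (2), your example $\{u,-u\}$ versus $\{w,-w\}$ fails under max pooling: with scalars $u, w > 0$, the \textit{m-2-e} maxima are $u$ and $w$, which differ whenever $u \neq w$, so \textit{m-2-e} already separates them and the example proves nothing. You need a separate construction, e.g.\ the paper's $\{1,3\}$ versus $\{2,3\}$, or, if you want a single example covering sum, mean, and max simultaneously, the two-dimensional choice $u=(1,1)$, $w=(1,-1)$ (both pairs then have sum $(0,0)$, mean $(0,0)$, and elementwise max $(1,1)$, yet the class-aligned concatenations differ). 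Since the lemma quantifies over each admissible pooling operator of Definition~\ref{def:mapping}, all three cases must be covered for both claims.
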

Over-smoothing arises if two distinct multisets are mapped to the same embedding. The above result also suggests that \textit{m-2-m} exhibits a higher level of robustness against over-smoothing compared to \textit{m-2-e}.

\section{Proof}\label{appendix:fullproof}
\subsection{Proof of Theorem \ref{thm:oversmoothing} and The Concentration Bound}\label{app:concentration}
\subsubsection{Proof of Theorem \ref{thm:oversmoothing}}
\begin{proof}
    Based on the definition of message passing scheme and CSBM defined in Sec. \ref{sec:limitation}, the expectation $\mathbb{E}_{\mathcal{G}}[\bar {\mathbf{\mathbf{h}}}_a^{(k)}]$ can be calculated as
\begin{equation}\label{eq:exp_mean_class_a}
     \mathbb{E}_{\mathcal{G}}[\bar {\mathbf{\mathbf{h}}}_a^{(k)}] = \frac{p \mathbb{E}_{\mathcal{G}}[ \bar {\mathbf{\mathbf{h}}}_a^{(k-1)}] - q\sum_{c\in[C], c \neq a}\mathbb{E}_{\mathcal{G}}[ \bar {\mathbf{\mathbf{h}}}_c^{(k-1)}]}{p+(C-1)q},
\end{equation}
and similarly, we have
\begin{equation}\label{eq:exp_mean_class_b}
    \mathbb{E}_{\mathcal{G}}[\bar {\mathbf{\mathbf{h}}}_b^{(k)}] = \frac{p \mathbb{E}_{\mathcal{G}}[ \bar {\mathbf{\mathbf{h}}}_b^{(k-1)}] - q\sum_{c\in[C], c \neq a}\mathbb{E}_{\mathcal{G}}[ \bar {\mathbf{\mathbf{h}}}_c^{(k-1)}]}{p+(C-1)q}.
\end{equation}
By subtracting $\mathbb{E}_{\mathcal{G}}[\bar {\mathbf{\mathbf{h}}}_b^{(k)}]$ from $\mathbb{E}_{\mathcal{G}}[\bar {\mathbf{\mathbf{h}}}_a^{(k)}]$, we get
\begin{equation}
    \mathbb{E}_{\mathcal{G}}[\bar {\mathbf{\mathbf{h}}}_a^{(k)} - \bar {\mathbf{\mathbf{h}}}_b^{(k)} ]= \frac{p+q}{(p+(C-1)q) } \mathbb{E}_{\mathcal{G}}[\bar {\mathbf{\mathbf{h}}}_a^{(k-1)} - \bar {\mathbf{\mathbf{h}}}_b^{(k-1)}].
\end{equation}
Therefore, the sequence $(\mathbb{E}_{\mathcal{G}}[\bar {\mathbf{\mathbf{h}}}_a^{(0)} - \bar {\mathbf{\mathbf{h}}}_b^{(0)} ],\ldots, \mathbb{E}_{\mathcal{G}}[\bar {\mathbf{\mathbf{h}}}_a^{(K)} - \bar {\mathbf{\mathbf{h}}}_b^{(K)} ]$ is a geometric sequence. We write
\begin{equation}
    \mathbb{E}_{\mathcal{G}}[\bar {\mathbf{\mathbf{h}}}_a^{(K)} - \bar {\mathbf{\mathbf{h}}}_b^{(K)} ] = (\frac{p+q}{p+(C -1)q})^K (\mathbf{u}_{a} - \mathbf{u}_{b}).
\end{equation}
\end{proof}
\subsubsection{Concentration inequality}
Let $\Vert\mathbf{M}\Vert$ denote the spectral norm of matrix $\mathbf{M}$.
Next, we derive a concentration bound for $\bar {\mathbf{\mathbf{h}}}_a^{(K)} - {\bar {\mathbf{h}}}_b^{(K)}$. We first present the Chernoff bound.
\begin{lemma}\label{lemma:1}
    \cite{chung2006concentration} Let $\mathbf{X}_{i} \sim Bern(p_i)$ be independent and $\mathbf Z=\sum_{i=1}^n \mathbf{X}_i$, $\bar {\mathbf{Z}} = \mathbb E(\mathbf{Z})$. Then for any $ \sigma>0
    $,
    \begin{equation}
        \mathbb{P}(\mathbf{Z} \ge (1+\sigma) \bar {\mathbf{Z}}) \le e^{-\frac{\sigma^2 }{2+\sigma}} \bar {\mathbf{Z}}
    \end{equation}
    \begin{equation}
        \mathbb{P}(\mathbf{Z} \le (1-\sigma) \bar {\mathbf{Z}}) \le e^{-\frac{\sigma^2 }{2}} \bar {\mathbf{Z}}
    \end{equation}
\end{lemma}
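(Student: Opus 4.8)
The plan is to prove both tail bounds by the classical Chernoff (exponential-moment) method: apply Markov's inequality to a suitable exponential of $\mathbf{Z}$ and then optimize over a free parameter. For the upper tail, I would fix any $t>0$ and write $\mathbb{P}(\mathbf{Z}\ge(1+\sigma)\bar{\mathbf{Z}}) = \mathbb{P}(e^{t\mathbf{Z}}\ge e^{t(1+\sigma)\bar{\mathbf{Z}}}) \le e^{-t(1+\sigma)\bar{\mathbf{Z}}}\,\mathbb{E}[e^{t\mathbf{Z}}]$ by Markov. Since the $\mathbf{X}_i$ are independent Bernoulli variables, the moment generating function factorizes as $\mathbb{E}[e^{t\mathbf{Z}}]=\prod_{i=1}^{n}\mathbb{E}[e^{t\mathbf{X}_i}]=\prod_{i=1}^{n}\bigl(1+p_i(e^t-1)\bigr)$, and the elementary inequality $1+x\le e^x$ gives $\mathbb{E}[e^{t\mathbf{Z}}]\le\exp\bigl((e^t-1)\sum_i p_i\bigr)=\exp\bigl((e^t-1)\bar{\mathbf{Z}}\bigr)$.

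Next I would optimize the resulting bound $\exp\bigl(\bar{\mathbf{Z}}[(e^t-1)-t(1+\sigma)]\bigr)$ over $t$. Differentiating the exponent and setting it to zero yields the minimizer $t=\ln(1+\sigma)$, which, substituted back, produces the exact Chernoff form $\mathbb{P}(\mathbf{Z}\ge(1+\sigma)\bar{\mathbf{Z}})\le\bigl(e^{\sigma}/(1+\sigma)^{1+\sigma}\bigr)^{\bar{\mathbf{Z}}}$. The lower tail is handled symmetrically using $e^{-t\mathbf{Z}}$ with $t>0$: the same factorization and $1+x\le e^x$ bound give $\exp\bigl(\bar{\mathbf{Z}}[(e^{-t}-1)+t(1-\sigma)]\bigr)$, whose minimizer is $t=-\ln(1-\sigma)$, yielding $\mathbb{P}(\mathbf{Z}\le(1-\sigma)\bar{\mathbf{Z}})\le\bigl(e^{-\sigma}/(1-\sigma)^{1-\sigma}\bigr)^{\bar{\mathbf{Z}}}$.

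Finally, to reach the clean quadratic exponents stated in the lemma, I would invoke two elementary calculus inequalities valid on the relevant ranges: $(1+\sigma)\ln(1+\sigma)-\sigma\ge\tfrac{\sigma^2}{2+\sigma}$ for $\sigma>0$, and $(1-\sigma)\ln(1-\sigma)+\sigma\ge\tfrac{\sigma^2}{2}$ for $0<\sigma<1$. The lower-tail inequality is transparent from the Taylor expansion $(1-\sigma)\ln(1-\sigma)+\sigma=\sum_{k\ge2}\tfrac{\sigma^k}{k(k-1)}=\tfrac{\sigma^2}{2}+\tfrac{\sigma^3}{6}+\cdots$, which has all nonnegative coefficients. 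For the upper-tail inequality I would set $g(\sigma):=(1+\sigma)\ln(1+\sigma)-\sigma-\tfrac{\sigma^2}{2+\sigma}$, check $g(0)=0$, and show $g'(\sigma)=\ln(1+\sigma)-\tfrac{\sigma(4+\sigma)}{(2+\sigma)^2}\ge0$ for $\sigma>0$.

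I expect the main obstacle to be establishing these two transcendental inequalities rigorously, rather than the exponential-moment machinery, which is entirely routine. The upper-tail inequality is the delicate one: a naive Taylor comparison does not immediately reveal the $\tfrac{\sigma^2}{2+\sigma}$ denominator, so the cleanest route is the monotonicity argument on $g$, where $g'(0)=0$ forces a further derivative estimate to confirm $g'\ge0$. Alternatively, since this is the standard Chernoff bound quoted from the cited reference, one may simply defer to its derivation; I would nevertheless prefer to keep the argument self-contained so that the subsequent concentration bound for $\bar{\mathbf{h}}^{(K)}_a-\bar{\mathbf{h}}^{(K)}_b$ rests only on first principles.
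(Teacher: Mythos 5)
Your proposal is correct, and there is no conflict with the paper: the paper does not prove this lemma at all, but quotes it as a known result from \cite{chung2006concentration}, so your self-contained exponential-moment derivation is precisely the standard argument behind the cited bound (Markov on $e^{t\mathbf{Z}}$, MGF factorization by independence, $1+x\le e^x$, optimization at $t=\ln(1+\sigma)$ resp.\ $t=-\ln(1-\sigma)$, then the two transcendental inequalities). Both calculus facts you need do hold: for the upper tail, with $g(\sigma)=(1+\sigma)\ln(1+\sigma)-\sigma-\tfrac{\sigma^2}{2+\sigma}$ one gets $g''(\sigma)=\tfrac{1}{1+\sigma}-\tfrac{8}{(2+\sigma)^3}\ge 0$ because $(2+\sigma)^3-8(1+\sigma)=\sigma^3+6\sigma^2+4\sigma\ge 0$, and together with $g(0)=g'(0)=0$ this gives $g\ge 0$; your nonnegative-coefficient Taylor expansion settles the lower tail. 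Two small points to tidy up. First, your lower-tail optimization is only valid for $0<\sigma<1$, whereas the lemma asserts the bound for all $\sigma>0$; this is trivial to patch, since for $\sigma>1$ the event $\{\mathbf{Z}\le(1-\sigma)\bar{\mathbf{Z}}\}$ is empty ($\mathbf{Z}\ge 0$ almost surely), and for $\sigma=1$ one has $\mathbb{P}(\mathbf{Z}\le 0)=\prod_{i}(1-p_i)\le e^{-\bar{\mathbf{Z}}}\le e^{-\bar{\mathbf{Z}}/2}$. Second, note that the lemma as printed has a typo: $\bar{\mathbf{Z}}$ should sit inside the exponent (i.e., the bounds are $e^{-\frac{\sigma^2}{2+\sigma}\bar{\mathbf{Z}}}$ and $e^{-\frac{\sigma^2}{2}\bar{\mathbf{Z}}}$), which is the form your derivation produces and the form the paper actually uses in the proof of Corollary \ref{corollary:1}.
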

Now we are prepared to establish the concentration inequality for node degrees.
\begin{corollary} \label{corollary:1}
    For any $ \sigma>0$ and $r>0$, there exists a constant $\kappa(\sigma, r)$ such that when $\bar{d} \ge \kappa \log N$, the following holds with probability at least $1- N^{-r}$,
    \begin{equation}\nonumber
        (1-\sigma) \bar d \le d_i \le (1+\sigma) \bar d, \quad \text{for all} \; 1 \le i \le N.
    \end{equation}
\end{corollary}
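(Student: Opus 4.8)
The plan is to bound the deviation of each individual degree via the Chernoff bound of Lemma~\ref{lemma:1} and then control all $N$ degrees simultaneously by a union bound. First I would observe that each degree is a sum of independent Bernoulli variables: writing $d_i = \sum_{j \neq i} |\mathcal{A}_{ij}|$, each indicator $|\mathcal{A}_{ij}|$ is, by the CSBM sampling in Def.~\ref{def:csbm}, an independent Bernoulli variable whose parameter is determined by whether $y_i = y_j$ (giving edge probability $p$) or $y_i \neq y_j$ (giving edge probability $q$). Since the $|\mathcal{A}_{ij}|$ are mutually independent across $j$, Lemma~\ref{lemma:1} applies directly with $\mathbf{Z} = d_i$ and $\bar{\mathbf{Z}} = \mathbb{E}[d_i]$. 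Because every class has the same number of nodes, this expectation is the common value $\bar{d}$ for all $i$, which is exactly the quantity appearing in the statement.

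Next I would apply the two tail bounds of Lemma~\ref{lemma:1} to a fixed node $i$ and combine them. Using $\tfrac{\sigma^2}{2+\sigma} \le \tfrac{\sigma^2}{2}$, the lower tail is dominated by the upper tail, so both are at most $e^{-\frac{\sigma^2}{2+\sigma}\bar{d}}$ and
\begin{equation}
\mathbb{P}\left(|d_i - \bar{d}| > \sigma\bar{d}\right) \le 2\, e^{-\frac{\sigma^2}{2+\sigma}\bar{d}}.
\end{equation}
Taking a union bound over the $N$ nodes then yields
\begin{equation}
\mathbb{P}\left(\exists\, i \in [N]:\ |d_i - \bar{d}| > \sigma\bar{d}\right) \le 2N\, e^{-\frac{\sigma^2}{2+\sigma}\bar{d}}.
\end{equation}

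Finally I would choose $\kappa$ so that this right-hand side is at most $N^{-r}$ whenever $\bar{d} \ge \kappa \log N$. Substituting $\bar{d} = \kappa \log N$ turns the bound into $2\,N^{1 - \frac{\sigma^2}{2+\sigma}\kappa}$, and requiring the exponent to be at most $-r$ (after absorbing the factor $2$) it suffices to take $\kappa(\sigma,r) = \frac{(2+\sigma)(r+2)}{\sigma^2}$, which works for all $N \ge 2$ since then $2\,N^{1 - \frac{\sigma^2}{2+\sigma}\kappa} \le 2\,N^{-(r+1)} \le N^{-r}$. This gives the event $(1-\sigma)\bar{d} \le d_i \le (1+\sigma)\bar{d}$ for all $i$ with probability at least $1 - N^{-r}$, as claimed.

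There is no serious obstacle here; the argument is a textbook Chernoff-plus-union-bound. The only points requiring mild care are (i) verifying that each degree is genuinely a sum of \emph{independent} Bernoulli variables under the CSBM, so that Lemma~\ref{lemma:1} is applicable even though the edge probabilities take two distinct values $p$ and $q$, and (ii) bookkeeping the constant $\kappa(\sigma,r)$ explicitly, including absorbing the factor of $2$ arising from the union of the two tails into the final constant.
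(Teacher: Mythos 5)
Your proposal is correct and follows essentially the same route as the paper: apply the Chernoff bound of Lemma~\ref{lemma:1} to each degree $d_i$ (a sum of independent Bernoulli variables with parameter $p$ or $q$), take a union bound over all $N$ nodes, and choose $\kappa(\sigma,r)$ large enough that the failure probability is at most $N^{-r}$ once $\bar{d} \ge \kappa \log N$. The only difference is bookkeeping: you merge the two tails into a single bound with the weaker exponent $\tfrac{\sigma^2}{2+\sigma}$ and absorb the resulting factor of $2$ into $\kappa$, whereas the paper treats the lower tail explicitly, handles the upper tail ``similarly'' with its own constant, and takes the larger of the two.
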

\begin{proof}

For any $ i \in [N]$, $d_i=\sum_{j=1}^{N} \mathbf{X}_j$, where $ \mathbf{X}_j \sim Bern(p)$ if $v_i$ and $v_i$ share the same label and $\mathbf{X}_j \sim Bern(q)$ otherwise. Applying Lemma \ref{lemma:1}, we get
     \begin{equation}
         \mathbb P(d_i \le (1-\sigma)\bar d) \le e^{-\frac{\sigma^2}{2}\bar d} \le e^{-\frac{\sigma^2}{2}C \log N},\; \forall i\in [N].
     \end{equation}
We then calculate the probability of the event that all $d_i \le (1-\sigma)$. Since the degrees are independent, this probability is upper bounded by the following probability:
     \begin{align}
           \sum_{i=1}^{N} P(d_i \le (1-\sigma)\bar d)
          = N e^{-\frac{\sigma^2}{2} \kappa \log N} = e^{(1-\frac{\sigma^2}{2}C) \log N}.
     \end{align}
Therefore, the lower bound for all degrees can be derived as follows: 
\begin{equation}
    \begin{aligned}
            \mathbb P(d_i \ge (1-\sigma)\bar d)  \ge 1- e^{(1-\frac{\sigma^2}{2}\kappa) \log N} 
         =1 - N^{(1-\frac{\sigma^2}{2}\kappa)},\;
            \text{for all}\; 1\le i \le N,
        \end{aligned}
\end{equation} 
The desired lower bound is obtained by setting $\kappa = \frac{2(r+1)}{\sigma^2}$. Similarly, the upper bound can be derived, resulting in another $\kappa(\sigma, r)$. Selecting the larger value of $\kappa$ completes the proof.

\end{proof}
Afterward, we provide a concentration inequality for the signed adjacency matrix. Denote the expectation of $\mathcal{A}$ by $\mathbb E_{\mathcal{G}}[\mathcal{A}]$.

We introduce the following Lemma.
\begin{lemma}\label{lemma:2}
    \cite{oliveira2009concentration} Let $\mathbf{X}_1, ...,\mathbf{X}_n \in \mathbb R^{n \times n}$ be symmetric independent random matrices defined on common probability space with zero means and $\mathbf{Z}=\sum_{i=1}^n \mathbf{X}_i$. There exists a constant $m$ such that when the spectral norm of $\mathbf{X}_i$ is bounded by $\mathbf{X}_i\le m$, for all $1\le i \le n$, then for any $\sigma>0$ $\bar d>0$, 
    \begin{equation}
        P(\Vert \mathbf{Z}\Vert \ge \sigma \bar d) \le 2Ne^{-\frac{\sigma^2\bar d^2}{8\lambda^2+4m\sigma \bar d}},
    \end{equation}
    where $\lambda^2=\Vert \sum_{i=1}^n \mathbb E[\mathbf{X}_i^2]\Vert$.
\end{lemma}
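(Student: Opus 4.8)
The plan is to prove this matrix concentration bound via the \emph{matrix Laplace transform method} (the Ahlswede--Winter/Tropp approach), which is the standard route for Bernstein-type inequalities on sums of independent symmetric random matrices. Throughout, $N$ denotes the common dimension of the matrices, so that $\mathrm{tr}\,\mathbf{I} = N$, and I read the hypothesis ``$\mathbf{X}_i \le m$'' as the spectral-norm bound $\Vert\mathbf{X}_i\Vert \le m$.

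First I would reduce the two-sided spectral-norm bound to a one-sided largest-eigenvalue bound. Since each $\mathbf{X}_i$ is symmetric, so is $\mathbf{Z}$, and $\Vert\mathbf{Z}\Vert = \max\{\lambda_{\max}(\mathbf{Z}),\,\lambda_{\max}(-\mathbf{Z})\}$. A union bound over these two events produces the factor $2$ in front, so it suffices to bound $P(\lambda_{\max}(\mathbf{Z}) \ge \sigma\bar d)$ and then apply the identical argument to $-\mathbf{Z}$, whose summands $-\mathbf{X}_i$ satisfy the same hypotheses.

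Next, for the one-sided bound I would invoke the matrix Chernoff inequality: for any $\theta > 0$,
\begin{equation}
    P(\lambda_{\max}(\mathbf{Z}) \ge t) \le e^{-\theta t}\,\mathbb E\,\mathrm{tr}\, e^{\theta\mathbf{Z}}.
\end{equation}
The crux is to control $\mathbb E\,\mathrm{tr}\,e^{\theta\mathbf{Z}}$. Using the Golden--Thompson inequality iteratively (or, more cleanly, Lieb's concavity theorem) together with independence yields a subadditive cumulant bound of the form $\mathbb E\,\mathrm{tr}\,e^{\theta\mathbf{Z}} \le \mathrm{tr}\,\exp\big(\sum_{i=1}^{n}\log\mathbb E\,e^{\theta\mathbf{X}_i}\big)$. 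I would then bound each summand's matrix moment generating function: since $\mathbb E[\mathbf{X}_i] = \mathbf{0}$ and $\Vert\mathbf{X}_i\Vert \le m$, a Taylor expansion gives $\mathbb E\,e^{\theta\mathbf{X}_i} \preceq \exp\!\big(g(\theta)\,\mathbb E[\mathbf{X}_i^2]\big)$ with $g(\theta) = (e^{\theta m} - \theta m - 1)/m^2$. Combining these, and using $\Vert\sum_i \mathbb E[\mathbf{X}_i^2]\Vert = \lambda^2$ together with monotonicity of the trace exponential, gives $\mathbb E\,\mathrm{tr}\,e^{\theta\mathbf{Z}} \le N\exp\!\big(g(\theta)\lambda^2\big)$. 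Substituting $t = \sigma\bar d$ leaves the scalar optimization $\inf_{\theta>0}\exp\!\big(-\theta\sigma\bar d + g(\theta)\lambda^2\big)$.

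The hard part will be the final constant-tracking: choosing $\theta$ to balance the linear term $\theta\sigma\bar d$ against $g(\theta)\lambda^2$ and then simplifying the result into the exact denominator $8\lambda^2 + 4m\sigma\bar d$. Rather than the textbook Bernstein choice, I expect one must majorize $g$ by a rational function such as $g(\theta) \le \theta^2/(2(1 - m\theta/3))$ and select a $\theta$ of the form $\sigma\bar d/(4\lambda^2 + m\sigma\bar d)$; verifying that this reproduces Oliveira's specific constants $8$ and $4$ (which differ slightly from the standard matrix Bernstein constants because his original argument uses a direct Golden--Thompson product rather than Lieb's theorem) is the delicate bookkeeping step where genuine care is required.
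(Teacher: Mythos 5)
First, a point of framing: the paper never proves Lemma \ref{lemma:2}. It is imported verbatim, with citation, from \cite{oliveira2009concentration}, and is used as a black box in the proof of Corollary \ref{corollary:2}; so your proposal is being measured against the literature rather than against anything in the paper's appendix. Within that framing, your template is the right family of arguments: the reduction of $\Vert\mathbf{Z}\Vert$ to $\lambda_{\max}$ via a union bound (the factor $2$), the Laplace-transform step $P(\lambda_{\max}(\mathbf{Z})\ge t)\le e^{-\theta t}\,\mathbb{E}\,\mathrm{tr}\,e^{\theta\mathbf{Z}}$, the semidefinite MGF bound $\mathbb{E}\,e^{\theta\mathbf{X}_i}\preceq\exp\bigl(g(\theta)\,\mathbb{E}[\mathbf{X}_i^2]\bigr)$, and the final scalar optimization are all standard and individually correct.

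The genuine gap is the sentence in which you present ``Golden--Thompson iteratively'' and ``Lieb's concavity theorem'' as interchangeable routes to $\mathbb{E}\,\mathrm{tr}\,e^{\theta\mathbf{Z}}\le\mathrm{tr}\exp\bigl(\sum_i\log\mathbb{E}\,e^{\theta\mathbf{X}_i}\bigr)$. They are not, and the difference is precisely the content of the lemma. Golden--Thompson holds only for two matrices, so the Ahlswede--Winter iteration must pull each factor $\mathbb{E}\,e^{\theta\mathbf{X}_i}$ out of the trace through its norm, yielding only $\mathbb{E}\,\mathrm{tr}\,e^{\theta\mathbf{Z}}\le N\prod_i\Vert\mathbb{E}\,e^{\theta\mathbf{X}_i}\Vert$ and hence the variance proxy $\sum_i\Vert\mathbb{E}[\mathbf{X}_i^2]\Vert$ in place of $\lambda^2=\Vert\sum_i\mathbb{E}[\mathbf{X}_i^2]\Vert$. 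The subadditive bound with the sum inside a single exponential requires Lieb's theorem (Tropp's route) or Oliveira's own considerably more involved Golden--Thompson argument; naive iteration does not produce it. The distinction is not cosmetic here: in the paper's application (Corollary \ref{corollary:2}) one has $\lambda^2\le\bar d$, whereas $\sum_{i\le j}\Vert\mathbb{E}[\mathbf{X}_{i,j}^2]\Vert=\Theta(N\bar d)$ because all $\binom{N}{2}$ pairs contribute; with the weaker parameter the tail bound degenerates to $2N\exp(-\Theta(\sigma^2\bar d/N))$, which is vacuous in the regime $\bar d=\Theta(\log N)$ in which Corollaries \ref{corollary:1} and \ref{corollary:2} operate. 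So you must commit to the Lieb/Tropp branch of your ``or.'' Once you do, the step you flag as delicate --- recovering the constants $8$ and $4$ --- is a non-issue: the textbook majorization $g(\theta)\le\theta^2/(2(1-m\theta/3))$ yields $2N\exp\bigl(-t^2/(2\lambda^2+2mt/3)\bigr)$, and since $2\lambda^2+2mt/3\le 8\lambda^2+4mt$ this is pointwise stronger than the stated inequality; Oliveira's looser constants are an artifact of his particular argument and need not be reproduced.
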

The following corollary provides an upper bound for the spectral norm of $\mathcal{A}-\mathbb E_{\mathcal{G}}[\mathcal{A}]$.
 \begin{corollary}\label{corollary:2}
    For any $\sigma>0$ and $r>0$, there exists a constant $\kappa(\sigma, r)$ such that when $\bar d \ge \kappa \log N$, the following holds with probability at least $1- N^{-r}$,
    \begin{equation}\nonumber
        \mathbb P(\Vert \mathcal{A}-\mathbb E_{\mathcal{G}}[\mathcal{A}]\Vert \le \sigma \bar d) \ge 1-2N^{-r}.
        \end{equation}
\end{corollary}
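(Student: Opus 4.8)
The plan is to write the centered signed adjacency matrix as a sum of independent, zero-mean, symmetric random matrices and then invoke the matrix concentration inequality of Lemma~\ref{lemma:2}. Since the graph is undirected, $\mathcal{A}$ is symmetric, and under the CSBM (Def.~\ref{def:csbm}) the entries $\{\mathcal{A}_{ij}\}_{i<j}$ are sampled independently across node pairs. I would therefore write
\begin{equation}
\mathcal{A} - \mathbb{E}_{\mathcal{G}}[\mathcal{A}] = \sum_{i<j} \mathbf{X}_{ij},
\end{equation}
where each $\mathbf{X}_{ij}$ is the symmetric matrix whose only nonzero entries are $\mathcal{A}_{ij} - \mathbb{E}_{\mathcal{G}}[\mathcal{A}_{ij}]$ placed at positions $(i,j)$ and $(j,i)$. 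By construction each $\mathbf{X}_{ij}$ is symmetric with mean zero, and the family is mutually independent, so the hypotheses of Lemma~\ref{lemma:2} are satisfied.

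Next I would estimate the two parameters entering the bound. Since $\mathcal{A}_{ij}\in\{-1,0,1\}$ and its mean lies in $[-1,1]$, we have $|\mathcal{A}_{ij} - \mathbb{E}_{\mathcal{G}}[\mathcal{A}_{ij}]|\le 2$; as $\mathbf{X}_{ij}$ has off-diagonal magnitude equal to this quantity, its spectral norm obeys $\Vert\mathbf{X}_{ij}\Vert\le 2 =: m$, a constant. For the variance proxy, observe that $\mathbf{X}_{ij}^2$ is diagonal, placing $(\mathcal{A}_{ij}-\mathbb{E}_{\mathcal{G}}[\mathcal{A}_{ij}])^2$ at positions $(i,i)$ and $(j,j)$. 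Hence $\sum_{i<j}\mathbb{E}[\mathbf{X}_{ij}^2]$ is diagonal, with $(i,i)$ entry equal to $\sum_{j\ne i}\mathrm{Var}(\mathcal{A}_{ij})$. Because $\mathcal{A}_{ij}^2=|\mathcal{A}_{ij}|$, we get $\mathrm{Var}(\mathcal{A}_{ij})\le \mathbb{E}[\mathcal{A}_{ij}^2]=\mathbb{P}(\{v_i,v_j\}\in\mathcal{E})$, so each diagonal entry is at most the expected degree $\bar d$. Since the spectral norm of a diagonal matrix is its largest diagonal entry, this yields $\lambda^2 = \Vert\sum_{i<j}\mathbb{E}[\mathbf{X}_{ij}^2]\Vert\le \bar d$.

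Substituting $m\le 2$ and $\lambda^2\le \bar d$ into Lemma~\ref{lemma:2}, the denominator in the exponent is bounded by $8\lambda^2 + 4m\sigma\bar d \le 8\bar d(1+\sigma)$, giving
\begin{equation}
\mathbb{P}(\Vert\mathcal{A}-\mathbb{E}_{\mathcal{G}}[\mathcal{A}]\Vert\ge \sigma\bar d) \le 2N\exp\!\left(-\frac{\sigma^2\bar d}{8(1+\sigma)}\right).
\end{equation}
Under the assumption $\bar d \ge \kappa\log N$, the right-hand side is at most $2N^{\,1-\sigma^2\kappa/(8(1+\sigma))}$, so choosing $\kappa(\sigma,r) = 8(1+\sigma)(1+r)/\sigma^2$ (or any larger value) makes this at most $2N^{-r}$, which is precisely the claim. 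The one step requiring care is the computation of $\lambda^2$: the argument hinges on recognizing that the second-moment matrix is diagonal, so its operator norm reduces to a per-node sum of edge variances controlled by $\bar d$; everything else is a direct substitution into the cited inequality. A minor point to verify is that, as stated, Lemma~\ref{lemma:2} requires its ``constant $m$'' to dominate each $\Vert\mathbf{X}_{ij}\Vert$ almost surely, which the bound $\Vert\mathbf{X}_{ij}\Vert\le 2$ supplies unconditionally.
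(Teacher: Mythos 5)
Your proposal is correct and follows essentially the same route as the paper's proof: the same per-edge decomposition into independent, zero-mean, rank-two symmetric matrices, the same application of Lemma~\ref{lemma:2}, and the same diagonal variance computation giving $\lambda^2 \le \bar d$. The only difference is cosmetic: you bound each summand's norm by $m=2$ where the paper's parameterization $\mathcal{A}_{ij}=L_{i,j}I_{i,j}$ gives $m=1$, which merely changes the admissible constant to $\kappa = 8(1+\sigma)(1+r)/\sigma^2$ versus the paper's $(r+1)(8+4\sigma)/\sigma^2$, and the statement permits either.
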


\begin{proof}
    Let $\left\{\mathbf{e}\right\}_{i=1}^N$ be the canonical basis for $\mathbb{R}^{N}$. We can rewrite $\mathcal{A}-\mathbb E_{\mathcal{G}}[\mathcal{A}]=\sum_{1\le i\le j \le N} \mathbf{X}_{i,j}$, where
    \begin{equation}
\mathbf{X}_{i,j}=\left\{
	\begin{aligned}
	 (I_{i,j}-b_{i,j}) L_{i,j} (\mathbf{e}_i \mathbf{e}_j^T +  \mathbf{e}_j \mathbf{e}_i^T)\quad i\neq j\\
	 (I_{i,j}-b_{i,j}) \mathbf{e}_i \mathbf{e}_i^T\quad i=j
	\end{aligned}
	\right
	.
\end{equation}
Let $L_{i,j}=1$ and $b_{i,j}=p$ if $v_i$ and $v_j$ belong to the same class; otherwise, let $L_{i,j}=-1$ and $b_{i,j}=q$. Furthermore, let $I_{i,j} \sim \text{Bern}(b_{i,j})$.

Since $\mathbb E[I_{i,j}-b_{i,j}]=0$ and the edges are sampled independently, Lemma \ref{lemma:2} can be applied here. By observing that
\begin{equation}
    \Vert \mathbf{X}_{i,j}\Vert \le \Vert\mathbf{e}_i \mathbf{e}_j^T +  \mathbf{e}_j \mathbf{e}_i^T \Vert =1, 
\end{equation}
we may set $m=1$.

Notice that $\mathbf{e}_i^T \mathbf{e_j}=0$ if $i\neq j$, the variances can be computed as follows.
\begin{equation}
\mathbb E[\mathbf{X}_{i,j}^2]=\left\{
	\begin{aligned}
	 b_{i,j}(1-b_{i,j})  (\mathbf{e}_i  \mathbf{e}_i^T +  \mathbf{e}_j \mathbf{e}_j^T)\quad i\neq j\\
	 b_{i,j}(1-b_{i,j}) \mathbf{e}_i \mathbf{e}_i^T\quad i=j
	\end{aligned}
	\right
	.
\end{equation}
Then we have
\begin{equation}
    \begin{aligned}
    \Vert\sum_{1\le i \le j} E[\mathbf{X}_{i,j}^2] \Vert&= \Vert\sum_{i} b_{i,i}(1-b_{i,i}) \mathbf{e}\mathbf{e}^T + \sum_{i < j} b_{i,j}(1-b_{i,j})  (\mathbf{e}_i  \mathbf{e}_i^T +  \mathbf{e}_j \mathbf{e}_j^T)\Vert\\ 
    & = \Vert\sum_{i} b_{i,i}(1-b_{i,i}) \mathbf{e}\mathbf{e}^T + \sum_{i \neq j} b_{i,j}(1-b_{i,j})  \mathbf{e}_i  \mathbf{e}_i^T\Vert\\ 
    &= \Vert\sum_{i,j} b_{i,j}(1-b_{i,j})  \mathbf{e}_i  \mathbf{e}_i^T \Vert\ \le \Vert\sum_{i,j} b_{i,j} \mathbf{e}_i  \mathbf{e}_i^T \Vert \\ 
    & = \Vert\sum_i \bar d \mathbf{e}_i  \mathbf{e}_i^T \Vert = \bar d
\end{aligned}
\end{equation}

Now we can apply Lemma \ref{lemma:2} with $m=1$ and $\lambda^2=\bar d$, which gives
\begin{equation}
    \begin{aligned}
     \mathbb P(\Vert \mathcal{A}-\mathbb E_{\mathcal{G}}[\mathcal{A}]\Vert \ge \sigma \bar d) \le 2Ne^{-\frac{\sigma^2\bar d^2}{8 \bar d + 4\sigma \bar d}} 
     \le 2e^{(1-\frac{\sigma^2 \kappa }{8  + 4\sigma })\log N} = 2 N ^ {1-\frac{\sigma^2 \kappa }{8  + 4\sigma }}
\end{aligned}
\end{equation}
Setting $\kappa=\frac{(r+1)(8+4\sigma)}{\sigma^2}$, we get
\begin{equation}\nonumber
        \mathbb P(\Vert \mathcal{A}-\mathbb E_{\mathcal{G}}[\mathcal{A}]\Vert \le \sigma \bar d) \ge 1-2N^{-r},
\end{equation}
which completes the proof.
\end{proof}

Our concentration inequality for $\bar {{\mathbf{h}}}_a^{(k)} - \bar {{\mathbf{h}}}_b^{(k)}$ is given by the following theorem.
 \begin{theorem}\label{theorem: main}
    For any $\sigma>0$, and $r>0$, there exists a constant $\kappa=(\sigma, r)$ such that when $\bar d\ge \kappa \log N$, 
    \begin{equation}\nonumber
        \Vert (\bar {{\mathbf{h}}}_a^{(k)} - \bar {{\mathbf{h}}}_b^{(k)})-  \mathbb{E}_{\mathcal{G}}[\bar {{\mathbf{h}}}_a^{(k)} - \bar {{\mathbf{h}}}_b^{(k)} ]  \Vert \le 2k\sigma \sqrt{\frac{2C}{N}} \Vert \mathbf{U} \Vert
    \end{equation}
    holds with probability at least $1-O(N^{-r})$.
\end{theorem}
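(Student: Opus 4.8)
The plan is to compare the randomly propagated signal with its population counterpart and reduce the entire statement to a single-layer spectral-norm perturbation bound on the signed operator, which the degree and adjacency concentration corollaries already control.

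First I would set up linear-algebraic notation that exposes the expectation computed in Theorem~\ref{thm:oversmoothing}. Let $\mathbf{S}\in\mathbb{R}^{C\times N}$ be the class-averaging operator with $\mathbf{S}_{ci}=\tfrac{C}{N}\mathbbm{1}[y_i=c]$, so that the matrix of class means is $\mathbf{S}\mathbf{H}^{(k)}=\mathbf{S}\mathcal{P}^{k}\mathbf{X}$ and $\bar{\mathbf{h}}_a^{(k)}-\bar{\mathbf{h}}_b^{(k)}=(\mathbf{e}_a-\mathbf{e}_b)^{T}\mathbf{S}\mathcal{P}^{k}\mathbf{X}$, where $\mathbf{e}_c$ is the $c$-th standard basis vector of $\mathbb{R}^{C}$. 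Taking the features at their class-conditional means $\mathbf{U}=\mathbb{E}_{\mathcal{G}}[\mathbf{X}]\in\mathbb{R}^{N\times f}$ (which isolates the fluctuation due to the random topology, matching the $\lVert\mathbf{U}\rVert$ appearing in the target bound), I introduce the \emph{deterministic} population propagation matrix $\bar{\mathcal{P}}\coloneqq\bar{d}^{-1}\mathbb{E}_{\mathcal{G}}[\mathcal{A}]$. The crucial structural identity is that population propagation commutes with class averaging, $\mathbf{S}\bar{\mathcal{P}}=\mathbf{B}\mathbf{S}$, where $\mathbf{B}\in\mathbb{R}^{C\times C}$ has diagonal entries $\tfrac{p}{p+(C-1)q}$ and off-diagonal entries $\tfrac{-q}{p+(C-1)q}$. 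Since $\mathbf{e}_a-\mathbf{e}_b$ is an eigenvector of $\mathbf{B}$ with eigenvalue $\tfrac{p+q}{p+(C-1)q}$, the vector $(\mathbf{e}_a-\mathbf{e}_b)^{T}\mathbf{S}\bar{\mathcal{P}}^{k}\mathbf{U}=(\mathbf{e}_a-\mathbf{e}_b)^{T}\mathbf{B}^{k}\mathbf{S}\mathbf{U}$ reproduces exactly the expectation $\mathbb{E}_{\mathcal{G}}[\bar{\mathbf{h}}_a^{(k)}-\bar{\mathbf{h}}_b^{(k)}]$ of Theorem~\ref{thm:oversmoothing}. Hence the quantity to be bounded collapses to the clean difference
\[
(\bar{\mathbf{h}}_a^{(k)}-\bar{\mathbf{h}}_b^{(k)})-\mathbb{E}_{\mathcal{G}}[\bar{\mathbf{h}}_a^{(k)}-\bar{\mathbf{h}}_b^{(k)}]=(\mathbf{e}_a-\mathbf{e}_b)^{T}\mathbf{S}\bigl(\mathcal{P}^{k}-\bar{\mathcal{P}}^{k}\bigr)\mathbf{U}.
\]

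Next I would factor the norm submultiplicatively as $\lVert(\mathbf{e}_a-\mathbf{e}_b)^{T}\mathbf{S}\rVert\,\lVert\mathcal{P}^{k}-\bar{\mathcal{P}}^{k}\rVert\,\lVert\mathbf{U}\rVert$ and evaluate the first factor directly: $(\mathbf{e}_a-\mathbf{e}_b)^{T}\mathbf{S}$ has $N/C$ entries equal to $C/N$ and $N/C$ equal to $-C/N$, so $\lVert(\mathbf{e}_a-\mathbf{e}_b)^{T}\mathbf{S}\rVert=\sqrt{2C/N}$, which supplies the $\sqrt{2C/N}$ prefactor. For the middle factor I would telescope $\mathcal{P}^{k}-\bar{\mathcal{P}}^{k}=\sum_{l=0}^{k-1}\bar{\mathcal{P}}^{\,l}(\mathcal{P}-\bar{\mathcal{P}})\mathcal{P}^{\,k-1-l}$ and use that both operators are contractions: $\lVert\mathcal{P}\rVert\le1$ because $\mathcal{I}-\mathcal{P}$ is a normalized signed Laplacian with spectrum in $[0,2]$ (as noted in Section~\ref{sec:limitation}), and $\lVert\bar{\mathcal{P}}\rVert\le1$ follows from the eigenvalues of $\mathbf{B}$. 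This gives $\lVert\mathcal{P}^{k}-\bar{\mathcal{P}}^{k}\rVert\le k\,\lVert\mathcal{P}-\bar{\mathcal{P}}\rVert$, reducing everything to a one-layer perturbation bound.

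The main obstacle is that single-step estimate $\lVert\mathcal{P}-\bar{\mathcal{P}}\rVert\le2\sigma$, which has to absorb the random degree normalization. I would split
\[
\mathcal{P}-\bar{\mathcal{P}}=D^{-1/2}(\mathcal{A}-\mathbb{E}_{\mathcal{G}}[\mathcal{A}])D^{-1/2}+\bigl(D^{-1/2}\mathbb{E}_{\mathcal{G}}[\mathcal{A}]D^{-1/2}-\bar{d}^{-1}\mathbb{E}_{\mathcal{G}}[\mathcal{A}]\bigr).
\]
The first term is controlled by combining Corollary~\ref{corollary:2} ($\lVert\mathcal{A}-\mathbb{E}_{\mathcal{G}}[\mathcal{A}]\rVert\le\sigma\bar{d}$) with the lower degree bound $d_i\ge(1-\sigma)\bar{d}$ of Corollary~\ref{corollary:1}, yielding a contribution of order $\sigma/(1-\sigma)$; the second (normalization) term is controlled purely by Corollary~\ref{corollary:1}, since $\lvert d_i^{-1}-\bar{d}^{-1}\rvert$ is of order $\sigma/\bar{d}$ while $\lVert\mathbb{E}_{\mathcal{G}}[\mathcal{A}]\rVert=O(\bar{d})$. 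Each contributes $O(\sigma)$, and after folding the constants into the choice of $\kappa(\sigma,r)$ (which only strengthens the requirement $\bar{d}\ge\kappa\log N$), the two together are at most $2\sigma$. Since Corollaries~\ref{corollary:1} and~\ref{corollary:2} each fail with probability at most $O(N^{-r})$, a union bound over the two events gives the claimed $1-O(N^{-r})$ probability, and chaining the three norm factors produces $2k\sigma\sqrt{2C/N}\,\lVert\mathbf{U}\rVert$. The delicate point is precisely this normalization term: controlling $D^{-1/2}\mathbb{E}_{\mathcal{G}}[\mathcal{A}]D^{-1/2}-\bar{d}^{-1}\mathbb{E}_{\mathcal{G}}[\mathcal{A}]$ needs the \emph{uniform} degree concentration of Corollary~\ref{corollary:1} rather than a bound on a typical degree, and pinning the constant to exactly $2$ (instead of a generic $O(\sigma)$) is what forces the reparametrization of $\sigma$ absorbed into $\kappa$.
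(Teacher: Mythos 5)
Your proposal is correct, and its overall architecture matches the paper's proof exactly: reduce the class-mean difference to a quadratic form $\frac{C}{N}\zeta^{T}(\mathcal{P}^{k}-\text{population}^{k})\mathbf{U}$ contributing the $\sqrt{2C/N}$ factor, telescope the $k$-th powers down to a single-step perturbation $\lVert\mathcal{P}-\text{population}\rVert\le 2\sigma$, control that step with the degree concentration of Corollary~\ref{corollary:1} and the adjacency concentration of Corollary~\ref{corollary:2}, and union-bound the two events to get $1-O(N^{-r})$.

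The one place where you genuinely diverge is the single-step bound. The paper follows Oliveira's intermediate-operator trick: it inserts $\mathcal{O}=\mathbb{E}_{\mathcal{G}}[\mathcal{D}^{-1/2}]\,\mathcal{A}\,\mathbb{E}_{\mathcal{G}}[\mathcal{D}^{-1/2}]$, exploits the sandwiching identity $\mathcal{O}=(\mathcal{D}^{1/2}\mathbb{E}_{\mathcal{G}}[\mathcal{D}^{-1/2}])\,\mathcal{P}\,(\mathcal{D}^{1/2}\mathbb{E}_{\mathcal{G}}[\mathcal{D}^{-1/2}])$ so that Corollary~\ref{corollary:1} alone gives $\lVert\mathcal{P}-\mathcal{O}\rVert\le(\sqrt{1+\sigma}-1)(\sqrt{1+\sigma}+1)=\sigma$ with no constant to tidy up, and then observes $\mathcal{O}-\bar{d}^{-1}\mathbb{E}_{\mathcal{G}}[\mathcal{A}]=\bar{d}^{-1}(\mathcal{A}-\mathbb{E}_{\mathcal{G}}[\mathcal{A}])$, which is exactly Corollary~\ref{corollary:2} rescaled. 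Your direct split into the fluctuation term $D^{-1/2}(\mathcal{A}-\mathbb{E}_{\mathcal{G}}[\mathcal{A}])D^{-1/2}$ plus the normalization term achieves the same end but produces $\sigma/(1-\sigma)$-type constants that you must reabsorb into $\kappa$ by a reparametrization; the paper's sandwiching buys the clean factor of $2\sigma$ for free. On the other hand, your write-up is tighter than the paper's at two points: your identification of the expectation via the deterministic operator $\bar{\mathcal{P}}=\bar{d}^{-1}\mathbb{E}_{\mathcal{G}}[\mathcal{A}]$ and the intertwining relation $\mathbf{S}\bar{\mathcal{P}}=\mathbf{B}\mathbf{S}$ (with $\mathbf{e}_a-\mathbf{e}_b$ an eigenvector of $\mathbf{B}$, recovering Theorem~\ref{thm:oversmoothing}) makes precise something the paper leaves implicit, namely that the "expected" dynamics really is the power of a fixed matrix rather than $\mathbb{E}_{\mathcal{G}}[\mathcal{P}^{k}]$; and your telescoping identity $\mathcal{P}^{k}-\bar{\mathcal{P}}^{k}=\sum_{l=0}^{k-1}\bar{\mathcal{P}}^{\,l}(\mathcal{P}-\bar{\mathcal{P}})\mathcal{P}^{\,k-1-l}$ is the correct non-commutative form, whereas the paper writes the difference of powers as a single left factor $(\mathcal{P}-\mathbb{E}_{\mathcal{G}}[\mathcal{P}])$ times a sum, which is only an informal shorthand for the same interleaved expansion. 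Both routes deliver the stated bound $2k\sigma\sqrt{2C/N}\,\lVert\mathbf{U}\rVert$.
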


\begin{proof}
Let $\mathbb{E}_{\mathcal{G}}[\mathcal{P}] = \mathbb{E}_{\mathcal{G}} [{\mathcal{D}}^{-\frac{1}{2}} {\mathcal{A}} {\mathcal{D}}^{-\frac{1}{2}}]$ be the expectation of $\mathcal{P} = \mathcal{D}^{-\frac{1}{2}} \mathcal{A} \mathcal{D}^{-\frac{1}{2}}$. Our objective is to bound $\Vert(\bar {{\mathbf{h}}}_a^{(k)} - \bar {{\mathbf{h}}}_b^{(k)})-  \mathbb{E}_{\mathcal{G}}[\bar {{\mathbf{h}}}_a^{(k)} - \bar {{\mathbf{h}}}_b^{(k)} ]\Vert$. To achieve this, we need to bound $\Vert\mathcal{P}^k - \mathbb{E}_{\mathcal{G}}[\mathcal{P}^k]\Vert$.

We consider the following spectral norm:
    \begin{equation}
        \Vert \mathcal{D}^{\frac{1}{2}} \mathbb{E}_{\mathcal{G}}[{\mathcal{D}}^{-\frac{1}{2}}]\Vert = \text{max}_{1\le i \le N} \frac{\sqrt{d_i}}{\sqrt{\bar d}}.
    \end{equation}
By Corollary \ref{corollary:1}, we have 
\begin{equation}\label{eq: 27}
    \mathbb P (\Vert \mathcal{D}^{\frac{1}{2}} \mathbb{E}_{\mathcal{G}}[{\mathcal{D}}^{-\frac{1}{2}}]\Vert \le \sqrt{1+\sigma}) \ge 1 - N^{-r}.
\end{equation}
Similar to \citet{oliveira2009concentration}, we introduce an intermediate operator $\mathcal{O}$ defined as
\begin{equation}
    \begin{aligned}
        \mathcal{O}=\mathbb{E}_{\mathcal{G}}[{\mathcal{D}}^{-\frac{1}{2}}]\mathcal{A}\mathbb{E}_{\mathcal{G}}[{\mathcal{D}}^{-\frac{1}{2}}] &= (\mathcal{D}^{\frac{1}{2}} \mathbb{E}_{\mathcal{G}}[{\mathcal{D}}^{-\frac{1}{2}}]) (\mathcal{D}^{-\frac{1}{2}}\mathcal{A}\mathcal{D}^{-\frac{1}{2}}) (\mathcal{D}^{\frac{1}{2}} \mathbb{E}_{\mathcal{G}}[{\mathcal{D}}^{-\frac{1}{2}}])\\
        & = (\mathcal{D}^{\frac{1}{2}} \mathbb{E}_{\mathcal{G}}[{\mathcal{D}}^{-\frac{1}{2}}]) \mathcal{P} (\mathcal{D}^{\frac{1}{2}} \mathbb{E}_{\mathcal{G}}[{\mathcal{D}}^{-\frac{1}{2}}])
    \end{aligned}
\end{equation}
We then compute the difference between $\mathbf{P}$ and $\mathbf{O}$.
\begin{equation}
    \begin{aligned}
        \Vert \mathcal{P} - \mathcal{O}\Vert &= \Vert (\mathcal{D}^{\frac{1}{2}} \mathbb{E}_{\mathcal{G}}[{\mathcal{D}}^{-\frac{1}{2}}])\mathcal{P} (\mathcal{D}^{\frac{1}{2}} \mathbb{E}_{\mathcal{G}}[{\mathcal{D}}^{-\frac{1}{2}}]) -\mathcal{P} \Vert\\
        & = \Vert (\mathcal{D}^{\frac{1}{2}} \mathbb{E}_{\mathcal{G}}[{\mathcal{D}}^{-\frac{1}{2}}] -\mathbf{I}_N) \mathcal{P} \mathcal{D}^{\frac{1}{2}} \mathbb{E}_{\mathcal{G}}[{\mathcal{D}}^{-\frac{1}{2}}] + \mathcal{P} (\mathcal{D}^{\frac{1}{2}} \mathbb{E}_{\mathcal{G}}[{\mathcal{D}}^{-\frac{1}{2}}] -\mathbf{I}_N) \Vert \\
        &\le \Vert (\mathcal{D}^{\frac{1}{2}} \mathbb{E}_{\mathcal{G}}[{\mathcal{D}}^{-\frac{1}{2}}] -\mathbf{I}_N) \mathcal{P} \mathcal{D}^{\frac{1}{2}} \mathbb{E}_{\mathcal{G}}[{\mathcal{D}}^{-\frac{1}{2}}] \Vert + \Vert \mathcal{P} (\mathcal{D}^{\frac{1}{2}} \mathbb{E}_{\mathcal{G}}[{\mathcal{D}}^{-\frac{1}{2}}] -\mathbf{I}_N) \Vert \\
        & \le \Vert (\mathcal{D}^{\frac{1}{2}} \mathbb{E}_{\mathcal{G}}[{\mathcal{D}}^{-\frac{1}{2}}] -\mathbf{I}_N)\Vert \cdot \Vert \mathcal{P} \Vert \cdot \Vert\mathcal{D}^{\frac{1}{2}} \mathbb{E}_{\mathcal{G}}[{\mathcal{D}}^{-\frac{1}{2}}] \Vert + \Vert \mathcal{P} \Vert \cdot \Vert \mathcal{D}^{\frac{1}{2}} \mathbb{E}_{\mathcal{G}}[{\mathcal{D}}^{-\frac{1}{2}}] -\mathbf{I}_N \Vert \\
        &\le (\sqrt{1+\sigma} -1) (\sqrt{1+\sigma}) + (\sqrt{1+\sigma}-1) \\
        & = (\sqrt{1+\sigma} -1) (\sqrt{1+\sigma} + 1) = \sigma
        .
    \end{aligned}  
\end{equation}
The inequality $\Vert \mathcal{P} \Vert \le 1$ follows immediately from the fact that the spectrum of any Laplacian $\mathbf{L} = \mathbf{I} - \mathbf{P}$ lies in the interval $[0,2]$, including signed Laplacian~\citep{atay2014spectrum}. The last inequality is obtained by utilizing Eq. (\ref{eq: 27}). Consequently, it leads to
\begin{equation}\label{eq:right}
    \mathbb P(\Vert \mathcal{P} - \mathcal{O}\Vert \le \sigma) \ge 1-N^{-r}.
\end{equation}

We proceed with the proof by comparing $\mathcal{O}$ to $\mathbb{E}_{\mathcal{G}}[{\mathcal{P}}]$
\begin{equation}
      \mathcal{O} - \mathbb{E}_{\mathcal{G}}[ {\mathcal{P}}] = \mathbb{E}_{\mathcal{G}}[{\mathcal{D}}^{-\frac{1}{2}} ](\mathcal{A}-\mathbb{E}_{\mathcal{G}}[{\mathcal{A}}]) \mathbb{E}_{\mathcal{G}}[{\mathcal{D}}^{-\frac{1}{2}}] = \sum_{i\le j} \frac{\mathbf{X}_{i,j}}{\bar d} = \frac{1}{\bar d} (\mathcal{A}-\mathbb{E}_{\mathcal{G}}[{\mathcal{A}}]).
\end{equation}
Applying Corollary \ref{corollary:2} we have 
\begin{equation}\label{eq:left}
    \mathbb P(\Vert \mathcal{O}- \mathbb{E}_{\mathcal{G}}[{\mathcal{P}}]\Vert \le \sigma)  \ge 1 - 2N^{-r}
\end{equation}
Combining (\ref{eq:right}), (\ref{eq:left}), and 
\begin{equation}
    \Vert \mathcal{P} - \mathbb{E}_{\mathcal{G}}[ {\mathcal{P}} ]\Vert = \Vert \mathcal{P} - \mathcal{O} + \mathcal{O} - \mathbb{E}_{\mathcal{G}}[ {\mathcal{P}}]\Vert \le \Vert \mathcal{P} - \mathcal{O}\Vert + \Vert \mathcal{O}- \mathbb{E}_{\mathcal{G}}[{\mathcal{P}}]\Vert.
\end{equation}
It leads to an upper bound for the norm as follows:
\begin{equation}
    \mathbb P( \Vert \mathcal{P} - \mathbb{E}_{\mathcal{G}}[] {\mathcal{P}}] \Vert \le 2\sigma) \ge 1- O(N^{-r}).
\end{equation}
Note that to ensure the simultaneous validity of both results, here $\kappa$ should be set to the larger value from Corollaries \ref{corollary:1} and \ref{corollary:2}.

It remains to show that the bound grows linearly with the number of layers. To achieve this, we rewrite
\begin{equation}
\begin{aligned}
    &\mathcal{P}^k - \mathbb{E}_{\mathcal{G}}[{\mathcal{P}}^k] = ( \mathcal{P} - \mathbb{E}_{\mathcal{G}}[ {\mathcal{P}}]) (\mathcal{P}^{k-1}+\mathcal{P}^{k-2}\mathbb{E}_{\mathcal{G}}[ {\mathcal{P}}]+...+\mathbb{E}_{\mathcal{G}}[ {\mathcal{P}}^{k-1}]) \\
    & = ( \mathcal{P} - \mathbb{E}_{\mathcal{G}}[ {\mathcal{P}}])\mathcal{P}^{k-1} +( \mathcal{P} - \mathbb{E}_{\mathcal{G}}[ {\mathcal{P}}])\mathcal{P}^{k-2}\mathbb{E}_{\mathcal{G}}[ {\mathcal{P}}]...+ ( \mathcal{P} - \mathbb{E}_{\mathcal{G}}[ {\mathcal{P}}])\mathbb{E}_{\mathcal{G}}[ {\mathcal{P}}^{k-1}].
\end{aligned}
\end{equation}
Since the terms $\mathcal{P}^{k-1} $,$\mathcal{P}^{k-2}\mathbb{E}_{\mathcal{G}}[ {\mathcal{P}}]$,..., $\mathbb{E}_{\mathcal{G}}[ {\mathcal{P}}^{k-1}]$ have norms no more than $1$, we have
\begin{equation}
    \begin{aligned}
        \Vert\mathcal{P}^k - \mathbb{E}_{\mathcal{G}}[ {\mathcal{P}}^k]\Vert \le \Vert  \mathcal{P} - \mathbb{E}_{\mathcal{G}}[ {\mathbf{P}}] \Vert + \ldots + \Vert  \mathcal{P} - \mathbb{E}_{\mathcal{G}}[ {\mathbf{P}}] \Vert = k  \Vert  \mathcal{P} - \mathbb{E}_{\mathcal{G}}[ {\mathbf{P}}] \Vert.
    \end{aligned}
\end{equation}
In turn, we get
\begin{equation}
    \mathbb P( \Vert \mathcal{P}^k - \mathbb{E}_{\mathcal{G}}[ {\mathcal{P}}^k] \Vert \le 2k\sigma) \ge 1- O(N^{-r}).
\end{equation}

WLOG, we assume the nodes are properly ordered such that 
    \begin{equation}
        \mathbf{U} = \left ( 
        \begin{matrix}
            \mathbbm 1_{N/C}\mathbf{u}_1 \\
              \vdots  \\
            \mathbbm 1_{N/C}\mathbf{u}_C
        \end{matrix}
        \right ),
    \end{equation}
where $\mathbbm 1_{N/C}$ is a all $1$ column vector of length $N/C$. For convenience, we further define a vector $\zeta$ as follows.
\begin{equation}
    \zeta = \left ( 
        \begin{matrix}
            \mathbbm 1_{N/C} \\
            -\mathbbm 1_{N/C} \\
             0\\
             \vdots \\
             0
        \end{matrix}
        \right ).
\end{equation}

Recall that $\mathbb{E}_{\mathcal{G}}[ {\mathcal{P}}^k]$ is the k-th power of the expected propagation operator. We have
\begin{equation}
    \mathbb{E}_{\mathcal{G}}[\bar {\mathbf{\mathbf{h}}}_a^{(k)} - \bar {\mathbf{\mathbf{h}}}_b^{(k)} ]  = \frac{C}{N} \zeta^T  \mathbb{E}_{\mathcal{G}}[ {\mathcal{P}}^k] \mathbf{U},
\end{equation}
and similarly
\begin{equation}
     \bar {\mathbf{\mathbf{h}}}_a^{(k)} - \bar {\mathbf{\mathbf{h}}}_b^{(k)}  = \frac{C}{N} \zeta^T   {\mathcal{P}}^k \mathbf{U}
\end{equation}
Hence, we can rewrite
\begin{equation}
    (\bar {\mathbf{\mathbf{h}}}_a^{(k)} - \bar {\mathbf{\mathbf{h}}}_b^{(k)}) - \mathbb{E}_{\mathcal{G}}[\bar {\mathbf{\mathbf{h}}}_a^{(k)} - \bar {\mathbf{\mathbf{h}}}_b^{(k)} ]  = \frac{C}{N} \zeta^T   ({\mathcal{P}}^k - \mathbb{E}_{\mathcal{G}}[ {\mathcal{P}}^k])\mathbf{U}
\end{equation}
Our final result can be derived as follows:
\begin{equation}
    \begin{aligned}
        \Vert  (\bar {\mathbf{\mathbf{h}}}_a^{(k)} - \bar {\mathbf{\mathbf{h}}}_b^{(k)}) - \mathbb{E}_{\mathcal{G}}[\bar {\mathbf{\mathbf{h}}}_a^{(k)} - \bar {\mathbf{\mathbf{h}}}_b^{(k)} ]  \Vert &= \Vert \frac{C}{N} \zeta^T   ({\mathcal{P}}^k - \mathbb{E}_{\mathcal{G}}[ {\mathcal{P}}^k])\mathbf{U} \Vert\\
        & \le \frac{C}{N}\Vert \zeta \Vert \Vert \mathcal{P}^k - \mathbb{E}_{\mathcal{G}}[ {\mathbf{P}}^k] \Vert \Vert \mathbf{U}\Vert  = 2k\sigma \sqrt{\frac{2C}{N}} \Vert \mathbf{U} \Vert,
    \end{aligned}
\end{equation}
which proves the claim.
\end{proof}

\subsection{Proof of Lemma \ref{lem:desirablem2m}}

\begin{proof}
    We will prove Lemma \ref{lem:desirablem2m} by induction. Without loss of generality, we use sum as the pooling function. We prove that, for any node $v_i$ and any $d$, the following holds.
    \begin{equation}
        \mathbf h_i^{(k+d)} = \rho^{(d,k)}(\mathbf{H}^{(k)}, \Phi^{(k+d)}(\cdot);i)=||_{t=1}^{C^d} \Phi^{(k+d)}(\Gamma_{i,t}^{(k+d)}),
    \end{equation}
     where $\Phi^{(k+d)}(\Gamma_{i,t}^{(k+d)}) = \sum \mathbf h_{j}^{(k)}\mathcal{W}^{(k+d)}, \Gamma_{i,t}^{(k+d)}=\{\mathbf h_{j}^{(k)}: dis(v_i, v_j)=d, \;t=\sum_{p=1}^{d}(y_p-1)C^{d-p}+1,
     \;y_{j}=((t+2)\mod 3) +1 \}$. Here $v_1, v_2,... v_d$ is a path from $v_1$ to $v_{j}$ of length $d$, with $v_1\in \mathcal{N}(v_i)$ and $v_d=v_{j}$, $y_p$ is the label of $v_p$, $\forall p \in [d]$ . 

    \textbf{Base case}: $d=1$. It is trivial since $\rho^{(1, k)} = f_{mp}^{(k+1)}$ and $f_{mp}^{(k+1)}$ is desirable by definition.
    
    By definition, $\rho^{(1,k)}(\mathbf{H}_i^{(k)}, \Phi^{(k+1)}(\cdot);i)=\mathbf{h}_i^{(k+1)}= f_{mp}^{(k+1)}(\mathbf{H}^{(k)}, \phi^{(k+1)}(\cdot);i) = ||_{t=1}^C \phi^{(k+1)}(\mathcal \mathcal S_{i,t}^{(k+1)})$, where $\phi^{(k+1)}(\mathcal  S_{i,t}^{(k+1)}) = \sum \mathbf h_j^{(k)} \mathbf{W}^{(k+1)},  \mathcal  S_{i,t}^{(k+1)} = \{\mathbf h_j^{(k)}:v_j\in \mathcal{N}(v_i), y_j=t\}$. Here, $\Gamma_{i,t}^{(k+1)}=\mathcal S_{i,t}^{(k+1)}$, and $\Phi^{(k+1)}(\cdot)$ is equivalent to $\phi^{(k+1)}(\cdot)$, defined as $\phi^{(k+1)}(\mathcal S) = \sum_{s \in S} \mathbf s \mathbf{W}^{(k+1)}$, for any set $\mathcal S$.

    \textbf{Inductive step}: Assume this holds for $d$ and any node $v_j$, i.e., 
    \begin{equation}\label{eq:proof_lemma_34}
        \mathbf h_j^{(k+d)} = \rho^{(d,k)}(\mathbf{H}^{(k)},\Phi^{(k+d)} ;j)=||_{t=1}^{C^d} \Phi^{(k+d)}(\Gamma_{j,t}^{(k+d)}),
    \end{equation}
     where $\Phi^{(k+d)}(\Gamma_{j,t}^{(k+d)}) = \sum \mathbf h_{j'}^{(k)}\mathcal{W}^{(k+d)}, \Gamma_{j,t}^{(k+d)}=\{\mathbf h_{j'}^{(k)}: dis(v_j, v_{j'})=d, \;t=\sum_{p=1}^{d}(y_p-1)C^{d-p}+1,
     \;y_{j'}=((t+2)\mod 3) +1 \}$. Here $v_1, v_2,... v_d$ is a path from $v_1$ to $v_{j'}$ of length $d$, with $v_1\in \mathcal{N}(v_j)$ and $v_d=v_{j'}$, $y_p$ is the label of $v_p$, $\forall p \in [d]$ .

    We shall show that this holds for any node $v_i$ and $d+1$.
    By definition of desirable one-hop \textit{m-2-m} message passing,  
    \begin{equation}
        \mathbf{h}_i^{(k+d+1)}= f_{mp}^{(k+d+1)}(\mathbf{H}_i^{(k+d)}, \phi^{(k+d+1)}(\cdot); i) = ||_{t=1}^C\phi^{(k+d+1)}(\mathcal{S}_{i,t}^{(k+d+1)}),
    \end{equation}
    where $\mathcal{S}_{i,t}^{(k+d+1)} = \{\mathbf h_j^{(k+d)}: v_j \in \mathcal{N}(v_i), y_j =t \}$. We further define $N_{i,t}=\{v_j: v_j \in  \mathcal{N}(v_i), y_j =t\}$. $N_{j,t}^{(d)} = \{v_{j'}: \mathbf h_{j'} \in  \Gamma_{j,t}^{(k+d)}\}$.
    Replacing $\mathbf h_j^{(k+d)}$ with Eq.~\eqref{eq:proof_lemma_34} and defining $\phi^{(k+d+1)}(\cdot)$ as the multiplication of a projection matrix on the right, followed by sum pooling, we get
    \begin{align}
        \mathbf{h}_i^{(k+d+1)} = ||_{t=1}^{C} \sum_{\mathbf{h}_j^{(k+d)}\in \mathcal{S}_{i,t}^{(k+d+1)} } \mathbf{h}_j^{(k+d)} \mathcal{W}^{(k+d+1)} &=  ||_{t=1}^{C} \sum_{v_j\in N_{i,t}} (||_{t=1}^{C^d}\sum_{v_{j'}\in N_{j,t}^{(d)}} \mathbf h_{j'}^{(k)}\mathcal{W}^{(k+d)} )\mathbf{W}^{(k+d+1)}\\
        =||_{t=1}^{C^{d+1}} \sum_{v_{j'}\in N_{j,t}^{(d)} ,v_j\in N_{i,t}} \mathbf h_{j'}^{(k)}\mathcal{W}^{(k+d+1)} &= ||_{t=1}^{C^{d+1}} \Phi^{(k+d+1)}(\Gamma_{i,t}^{(k+d+1)}),
    \end{align}
    where $\Phi^{(k+d+1)}(\Gamma_{i,t}^{(k+d+1)}) = \sum \mathbf h_{j'}^{(k)}\mathcal{W}^{(k+d+1)}, \Gamma_{i,t}^{(k+d+1)}=\{\mathbf h_{j'}^{(k)}: dis(v_i, v_{j'})=d+1,\; t=\sum_{p=1}^{d+1}(y_p-1)C^{d+1-p}+1,\; y_{j'} = ((t+2)\mod 3) +1 \}$. Here $v_1, v_2,... v_{d+1}$ is a path from $v_1$ to $v_{j'}$ of length $d+1$, with $v_1\in \mathcal{N}(v_i)$ and $v_{d+1}=v_{j'}$, $y_p$ is the label of $v_p$, $\forall p \in [d+1]$. Note that when $\mathbf{W}^{(k+d+1)}$ is a block diagonal matrix of the form
    \begin{equation}
        \mathbf{W}^{(k+d+1)} = 
        \left[
\begin{array}{ccc c}
    \mathbf{W} & \mathbf{0} & \cdots  & \mathbf{0}
    \\
     \mathbf{0} &  \mathbf{W} &\cdots &\mathbf{0}  \\
    \vdots &\vdots &\ddots &\vdots \\
    \mathbf{0} &\mathbf{0} &\cdots &  \mathbf{W}
\end{array}
\right],
    \end{equation}
    where $C^d$ $\mathbf{W}$'s with proper dimensions are on the diagonal,
    the following equality holds:
    \begin{equation}
        (||_{t=1}^{C^d}\sum_{v_{j'}\in N_{j,t}^{(d)}} \mathbf h_{j'}^{(k)}\mathcal{W}^{(k+d)} )\mathbf{W}^{(k+d+1)} = ||_{t=1}^{C^d}\sum_{v_{j'}\in N_{j,t}^{(d)}} (\mathbf h_{j'}^{(k)}\mathcal{W}^{(k+d)} \mathbf{W})
    \end{equation}
    Therefore, we have $\mathcal{W}^{(k+d+1)}=\mathcal{W}^{(k+d)}\mathbf{W}$. 

    We have shown that $\mathbf{h}_i^{(k+d+1)}=||_{t=1}^{C^{d+1}} \Phi^{(k+d+1)}(\Gamma_{i,t}^{(k+d+1)})$, $\Gamma_{i,t}^{(k+d+1)}=\{\mathbf h_{j'}^{(k)}: dis(v_i, v_{j'})=d+1,\; t=\sum_{p=1}^{d+1}(y_p-1)C^{d+1-p}+1,\; y_{j'} = ((t+2)\mod 3) +1 \}$ and $\Phi^{(k+d+1)}(\mathcal{S})=\sum_{s\in \mathcal{S}}\mathbf{s}\mathcal{W}^{(k+d)}\mathbf{W}$. All nodes in each set $\Gamma_{i,t}^{(k+d+1)}$ have the same label, and $\Gamma_{i,t}^{(k+d+1)}$ consists of the representations of $v_i$'s $d+1$-hop neighbors, at the $k$-th layer. Combining these concludes the induction.
\end{proof}

% The high-level idea is that at each layer,
% only the embeddings of nodes in the same class are aggregated, although we have more and more groups (i.e., chunks).

% Let us fix any $k$.
% When $d = 1$, it is trivial since $\rho^{(1, k)} = f_{mp}^{(k)}$.
% We will prove this by induction on $d$.
% Assume $\rho^{(d, k)}$ is desirable, we shall show that $\rho^{(d + 1, k)}$ is desirable.
% By definition, 
% $\rho^{(d, k)}$ is desirable means there exists
% $\rho^{(d, k)}(\mathbf{H}; i) = \mathbf{m}_{i} = 
% \mathbin\Vert_{t=1}^{\omega} \phi^{(d, k)}(\mathcal{S}_{i,t})$ for some $\phi^{(d, k)}$ such that
% each $\mathcal{S}_{i,t}$ contains embeddings of nodes from the same class,
% and then
% $\rho^{(d + 1, k)}(\mathbf{H}; i) = 
% f_{mp}^{(k + d + 1)}(\rho^{(d, k)}(\mathbf{H}; i))$.
% % $\phi_{mp}^{(k + d + 1)}$ is the mapping $\phi$ corresponding to $f_{mp}^{(k + d + 1)}$.
% By definition, 
% $f_{mp}^{(k + d + 1)}(\rho^{(d, k)}(\mathbf{H}; i)) = \mathbin\Vert_{t=1}^{C} \phi^{(d, k)}(\mathcal{S}^{(k + d)}_{i,t})$,
% where each $\mathcal{S}^{(k + d)}_{i,t}$ contains the $(k+d)$-layer embeddings of the class-$t$ neighbors of $v_i$, and we have
% $\mathbin\Vert_{t=1}^{C} \phi^{(d, k)}(\mathcal{S}^{(k + d)}_{i,t})$ being a concatenation of concatenations, where each group consists of embeddings of nodes in the same class.

\subsection{Proof of Theorem \ref{thm:smp_in_multi_class}}
\begin{proof}
    By matrix multiplication, we can express $\mathcal{T}_{ij}$ as $\mathcal{T}_{ij} = \sum_{p} \prod_{k=1}^{K} \mathcal{A}_{v_{k+1} v_{k}}^{(k)}$. In this expression, $p$ represents a path of length $K+1$ from $v_j$ to $v_i$, where $v_1=v_j$, $v_{K+1}=v_i$, and $v_{k+1}\in \mathcal{N}(v_{k})$,  $\forall k \in [K]$.

    To prove Theorem \ref{thm:smp_in_multi_class}, it suffices to provide a counterexample where all $\mathcal{A}_{v_{k+1} v_{k}}^{(k)}$ are desirable, but $\mathcal{T}_{ij}$ is not desirable. Let's consider a scenario with $K=3$, $v_1v_2v_3$, $y_1=1,y_2=2,y_3=3$, and where there is only one such path. Since all the connected node pairs are heterophilic, the coefficients are all negative. Specifically, $\mathcal{T}_{ij} = \mathcal{A}_{v_3v_2}^{(2)}\mathcal{A}_{v_2v_1}^{(1)}$. As a result, $sign(\mathcal{T}_{ij}) = sign(\mathcal{A}_{v_3v_2}^{(2)})sign(\mathcal{A}_{v_2v_1}^{(1)})=-1\times -1=1$. However, $v_i$ and $v_j$ have diverse labels, thus $\mathcal{T}$ is not desirable.

    The key point here is that the sign of $\mathcal{T}_{ij}$ is determined by the number of heterophilic node pairs along the path from $v_i$ to $v_j$ (an even number of heterophilic node pairs results in positive $\mathcal{T}_{ij}$ while odd number results in negative $\mathcal{T}_{ij}$). This aligns with the binary-class cases: after flipping a two-sided coin an even number of times, it will still land on the same side. However, this is not the case when we have multiple classes: after rolling a dice an even number of times, it could land on any of the possible numbers.
\end{proof}

%% Hi Langzhang! Please check the email!

\subsection{Proof of Lemma \ref{theorem:discri_pow} and Lemma \ref{thm:app_dis_pow}}\label{sec:proof_for_oversmoothing}
We first present the formal form as follows:

      Assume non-empty multisets $\mathcal{X}_{a}$ and $\mathcal{X}_{b}$ each of which consists of $x_{i} \in \mathbb{R}^{d}$ vectors.
     Let $\phi : 2^\mathcal{X} \mapsto \mathbb{R}^{d}$ be one of the elementwise-sum, -mean, or -max function. 
     Consider a function $f : {\mathcal{X}} \mapsto \mathbb{R}^{\omega d}$ such that 
    $f(\mathcal{X}) = \Vert_{k=1}^{\omega}\phi(\mathcal{X}'_{k})$, where $\omega\in \mathbb N_+$, $\mathcal{X}'_{k} \subseteq \mathcal{X}$ are any disjoint subsets whose union is a full set.
    For message vectors: $\mathbf{m}_{a}^{\textit{m2m}} \coloneqq f(\mathcal{X}_{a})$, $\mathbf{m}_{b}^{\textit{m2m}} \coloneqq f(\mathcal{X}_{b})$ and $\mathbf{m}^{\textit{m2e}}_{a},  \coloneqq \phi(\mathcal{X}_{a})$,  $\mathbf{m}^{\textit{m2e}}_{b},  \coloneqq \phi(\mathcal{X}_{b})$, the following holds:
     $\Vert \mathbf{m}_{a}^{\textit{m2m}} - \mathbf{m}_{b}^{\textit{m2m}}  \Vert_2 \ge \Vert \mathbf{m}_{a}^{\textit{m2e}} - \mathbf{m}_{b}^{\textit{m2e}}  \Vert_2$.

    (1) Any two different multisets that can be discriminated by \textit{m-2-e}, i.e., $\mathbf{m}_{a}^{m2e} \neq \mathbf{m}_{b}^{m2e}$, can be discriminated by \textit{m-2-m} as well: $\mathbf{m}_{a}^{m2m} \neq \mathbf{m}_{b}^{m2m}$.

     (2) There exist two different multisets, such that $\mathbf{m}_{a}^{m2m}\neq \mathbf{m}_{b}^{m2m}$, but \textit{m-2-e} is unable to discriminate between them: $\mathbf{m}_{a}^{m2e} = \mathbf{m}_{b}^{m2e}$. 

     (3) We always have $\Vert \mathbf{m}_{a}^{m2m} - \mathbf{m}_{b}^{m2m}  \Vert_2^2 \ge \Vert \mathbf{m}_{a}^{m2e} - \mathbf{m}_{b}^{m2e}  \Vert_2^2$.

\begin{proof}
    We prove (1) by contradiction. Assume there exist two different multisets that \textit{m-2-m} cannot discriminate between them, i.e., $\mathbf{m}_{a}^{m2m} = \mathbf{m}_{b}^{m2m}$, but \textit{m-2-e} determines they are different: $\mathbf{m}_{a}^{m2e} \neq \mathbf{m}_{b}^{m2e}$. $\mathbf{m}_{a}^{m2m} = \mathbf{m}_{b}^{m2m}$ indicates that $\phi(\mathcal{X}'_{a,k})=\phi(\mathcal{X}'_{b,k})$, $\forall k \in [\mathcal{\omega}]$ (assume the indices have been properly reordered). Let $\phi(\cdot)$ be the sum pooling. We have
    \begin{equation}
\phi(\mathcal{X}'_{a,k})=\phi(\mathcal{X}'_{b,k}) \Rightarrow \sum_{\mathbf x\in \mathcal{X}'_{a,k}}\mathbf x = \sum_{\mathbf x\in \mathcal{X}'_{b,k}}\mathbf x
    \end{equation}

We then sum over these subsets
\begin{equation}
    \sum_{k=1}^{\mathcal{\omega}} \phi(\mathcal{X}'_{a,k}) = \sum_{k}^{\mathcal{\omega}} \sum_{\mathbf x\in \mathcal{X}'_{a,k}}\mathbf x = \sum_{\mathbf x\in \mathcal{X}'_{a}} \mathbf x.
\end{equation}
Since $\phi(\mathcal{X}'_{a,k})=\phi(\mathcal{X}'_{b,k})$, $\forall k \in [\mathcal{\omega}]$ , we can deduce
\begin{equation}
     \sum_{k=1}^{\mathcal{\omega}} \phi(\mathcal{X}'_{a,k}) = \sum_{k=1}^{\mathcal{\omega}} \phi(\mathcal{X}'_{b,k}) \Rightarrow  \sum_{\mathbf x\in \mathcal{X}'_{a}} \mathbf x =  \sum_{\mathbf x\in \mathcal{X}'_{b}} \mathbf x.
\end{equation}
That is to say $\phi(\mathcal{X}_{a})=\phi(\mathcal{X}_{b})$, which equals $\mathbf{m}_{a}^{m2e} = \mathbf{m}_{b}^{m2e}$. This contradicts to the assumption $\mathbf{m}_{a}^{m2e} \neq \mathbf{m}_{b}^{m2e}$. Hence, the assumption does not hold.

We can apply the same technique to prove the case of mean pooling with a minor modification. In the case of \textit{m-2-m} mean pooling, the pooling operation is not performed within each subset individually. Specifically, \begin{equation}
\phi(\mathcal{X}'_{a,k}) = \frac{1}{|\mathcal{X}_a|}\sum_{\mathbf x \in \mathcal{X}'_{a,k}} \mathbf x. 
\end{equation} 
The subsequent steps remain the same as in the case of sum pooling.

Similarly, other weighted summation schemes can be handled in a similar manner by utilizing the same coefficient for every $\mathbf x$ as in \textit{m-2-e}. which implies that $\sum_{k=1}^{\mathcal{\omega}} \phi(\mathcal{X}'_{a,k})=\phi(\mathcal{X}_{a})$ and $\sum_{k=1}^{\mathcal{\omega}} \phi(\mathcal{X}'_{b,k})=\phi(\mathcal{X}_{b})$. The key point here is the linearity property exhibited by the weighted average.

We now consider $\phi(\cdot)$ as an element-wise max pooling. As $\phi(\mathcal{X}_{a})\neq \phi(\mathcal{X}_{b})$, there must exist a component $c$ for which we have $\mathbf x \in \mathcal{X}_{a}$ and $\mathbf{x}' \in \mathcal{X}_{b}$ such that $\mathbf x_c\neq \mathbf x'_c$. These values correspond to the maximum values of the $c$-th component among all vectors in their respective multisets. Without loss of generality, assuming $\mathbf x$ is in $\mathcal{X}'_{a,1}$ and $\mathbf x_c>\mathbf x'_c$, there does not exist a subset $\mathcal{X}'_{b,k}$ of $\mathcal{X}_{b}$ such that the $c$-th component of $\phi(\mathcal{X}'_{b,k})$ equals $\mathbf x_c$.  Since we are guaranteed to have at least one different element, $\mathbf{m}_{i}^{m2m} \neq \mathbf{m}_{i'}^{m2m}$ is always ensured.

The proof of (2) can be demonstrated through examples. For sum and mean pooling, let us consider two multisets, namely $\{1,3\}$ and $\{2,2\}$ (one-dimensional vector). In the case of sum and mean pooling, both multisets are deemed equivalent since $1+3=2+2$ and $\frac{1+3}{2}=\frac{2+2}{2}$. However, for \textit{m-2-m} pooling, they can be distinguished by placing two elements in different subsets. For max pooling, a similar example $\{1,3\}$ and $\{2,3\}$ can be used.

The proof of (3) follows directly the triangle inequality. For sum and mean pooling and other weighted summation schemes, we have
\begin{equation}
    \sum_{k=1}^{\mathcal{\omega}} \phi(\mathcal{X}'_{a,k})=\phi(\mathcal{X}_{a}), \quad \sum_{k=1}^{\mathcal{\omega}} \phi(\mathcal{X}'_{b,k})=\phi(\mathcal{X}_{b})
\end{equation}
and 
\begin{equation}
    \mathbf{m}_a^{m2m} = [\phi(\mathcal{X}'_{a,1})||\phi(\mathcal{X}'_{a,2})||\ldots||\phi(\mathcal{X}'_{a,\mathcal{\omega}})], \; \mathbf{m}_{b}^{m2m} = [\phi(\mathcal{X}'_{b,k})||\phi(\mathcal{X}'_{b,k})||\ldots||\phi(\mathcal{X}'_{b,\mathcal{\omega}})]
\end{equation}
By triangle inequality, we get
\begin{equation}
\begin{aligned}
     \Vert \mathbf{m}_a^{m2m} - \mathbf{m}_{b}^{m2m} \Vert_2^2 =  \sum_{k=1}^{\mathcal{\omega}} \Vert\phi(\mathcal{X}'_{a,k}) - \phi(\mathcal{X}'_{b,k})\Vert_2^2 &\ge \Vert \sum_{k=1}^{\mathcal{\omega}} (\phi(\mathcal{X}'_{a,k}) - \phi(\mathcal{X}'_{b,k}))\Vert_2^2 \\
     = 
    \Vert \phi(\mathcal{X}_{a})- \phi(\mathcal{X}_{b}) \Vert &= \Vert \mathbf{m}_a^{m2e} - \mathbf{m}_{b}^{m2e} \Vert_2^2. 
\end{aligned}
\end{equation}

In the case of max pooling, the subsets of \textit{m-2-m} also contain the maximum value of each component. Without loss of generality, we can rearrange the vectors such that the maximum values of all the components of $\mathcal{X}_{a}$ and $\mathcal{X}_{b}$ are located in $\mathcal{X}'_{a,1}$ and $\mathcal{X}'_{b,1}$, respectively. In other words, we have $\phi(\mathcal{X}'_{a,1})=\phi(\mathcal{X}_{a})$ and $\phi(\mathcal{X}'_{b,1})=\phi(\mathcal{X}_{b})$. It follows that
\begin{equation}
    \Vert \mathbf{m}_a^{m2m} - \mathbf{m}_{b}^{m2m} \Vert_2^2 =  \sum_{k=1}^{\mathcal{C}} \Vert\phi(\mathcal{X}'_{a,k}) - \phi(\mathcal{X}'_{b,k})\Vert_2^2 \ge \Vert \phi(\mathcal{X}'_{a,1}) - \phi(\mathcal{X}'_{b,1})\Vert_2^2= 
    \Vert \phi(\mathcal{X}_{a}) - \phi(\mathcal{X}_{b})\Vert = \Vert \mathbf{m}_a^{m2e} - \mathbf{m}_{b}^{m2e} \Vert_2^2.
\end{equation}
This completes the proof.

\end{proof}

\subsection{Proof of Theorem \ref{thm:oversmoothing}}
We first present the formal form of Theorem \ref{thm:oversmoothing}:

\textbf{Theorem \ref{thm:oversmoothing}}  (Formal). Let $\mathcal{G}$ be a graph (a random variable) generated under the CSBM model (Def.~\ref{def:csbm}).
    % Consider a graph model $\mathcal{G}$ obtained by Def.~\ref{def:csbm}.    
    Denote the set of nodes belonging to each class $c\in[C]$ as %$\mathcal{V}_{c}(\mathcal{G}) \coloneqq \mathcal{V}_{c}$ (i.e., $\mathcal{V}_{c} \coloneqq \{v_{j} \in \mathcal{V} : y_{j} = c\}$).
    $\mathcal{V}_{c} \coloneqq \{v_{j} \in \mathcal{V} : y_{j} = c\}$.
    Denote the mean of $K$-th layer embeddings of $\mathcal{V}_{c}$ 
    % as $\bar{\mathbf{h}}^{(K)}_{c}$ (i.e., 
    as $\bar{\mathbf{h}}^{(K)}_{c} \coloneqq \frac{C}{N}\sum_{v_{i} \in \mathcal{V}_{c}}\mathbf{h}^{(K)}_{i}$.
    Assume any two classes $a, b\in [C]$, such that at the $k-1$-th layer, their expected class means converge to the same point:
    \begin{equation}
        \Vert  E_{\mathcal{G}} [\bar{\mathbf{h}}_a^{(k-1)}]-  E_{\mathcal{G}} [\bar{\mathbf{h}}_b^{(k-1)}]\Vert=0.
    \end{equation}
    Let $ E_{\mathcal{G}} [\bar{\mathbf{h}}_a^{(k)}]$ and $ E_{\mathcal{G}} [\bar{\mathbf{h}}_b^{(k)}]$ be the expected means obtained by using Eq.~\eqref{eq:m2m_message_passing}, with $\phi(\cdot)$ being a mean pooling.
    Then, we have:
    \begin{equation}
        \Vert\mathbb E_{\mathcal{G}} [\bar{\mathbf{h}}_a^{(k)}] -  \mathbb{E}_{\mathcal{G}} [\bar{\mathbf{h}}_b^{(k)}]\Vert >0,
    \end{equation}
if $\Vert \mathbb{E}_{\mathcal{G}} [\bar{\mathbf{h}}_a^{(k-1)}]\Vert \neq 0$ or $ E_{\mathcal{G}} [\bar{\mathbf{h}}_b^{(k-1)}] \neq 0$ and $p\neq q$.

\begin{proof}
According to the definition of Eq.~\eqref{eq:m2m_message_passing}, $\mathbb E_{\mathcal{G}} [\bar{\mathbf{h}}_a^{(k)}]$ can be calculated as:
\begin{equation}
    \mathbb{E}_{\mathcal{G}} [\bar{\mathbf{h}}_a^{(k)}] =\frac{1}{p+(\mathcal{C}-1)q} [\cdots||p \mathbb E_{\mathcal{G}} [\bar{\mathbf{h}}_a^{(k-1)}] || q\mathbb E_{\mathcal{G}} [\bar{\mathbf{h}}_b^{(k-1)}]||\cdots]
\end{equation} 
Similarly, we get
\begin{equation}
     \mathbb E_{\mathcal{G}} [\bar{\mathbf{h}}_b^{(k)}] = \frac{1}{p+(\mathcal{C}-1)q}[\cdots||q \mathbb E_{\mathcal{G}} [\bar{\mathbf{h}}_a^{(k-1)}] || p\mathbb E_{\mathcal{G}} [\bar{\mathbf{h}}_b^{(k-1)}]||\cdots]
\end{equation} 
Then, $\Vert\mathbb E_{\mathcal{G}} [\bar{\mathbf{h}}_a^{(k)}] - \mathbb E_{\mathcal{G}} [\bar{\mathbf{h}}_b^{(k)}]\Vert$ is derived as follows:
\begin{equation}
\begin{aligned}
    \Vert\mathbb E_{\mathcal{G}} [\bar{\mathbf{h}}_a^{(k)}] - \mathbb E_{\mathcal{G}} [\bar{\mathbf{h}}_b^{(k)}]\Vert 
    &= \frac{1}{p+(\mathcal{C}-1)q} \Vert[\cdots||(p-q)\mathbb E_{\mathcal{G}} [\bar{\mathbf{h}}_a^{(k-1)}] || (q-p) \mathbb E_{\mathcal{G}} [\bar{\mathbf{h}}_b^{(k-1)}]||\cdots ]\Vert\\
    & \ge \frac{|p-q|}{p+(\mathcal{C}-1)q} \Vert  E_{\mathcal{G}} [\bar{\mathbf{h}}_a^{(k-1)}] \Vert + \frac{|p-q|}{p+(\mathcal{C}-1)q} \Vert  E_{\mathcal{G}} [\bar{\mathbf{h}}_b^{(k-1)}] \Vert >0
\end{aligned}
\end{equation}
If we use a \texttt{SMP} as the same as Eq.~\eqref{eq:theoryembedding}, the expected class means can be calculated as follows:
\begin{equation}\label{eq:exp_mean_class_a}
     \mathbb{E}_{\mathcal{G}}[\bar {\mathbf{\mathbf{h}}}_a^{(k)}] = \frac{p \mathbb{E}_{\mathcal{G}}[ \bar {\mathbf{\mathbf{h}}}_a^{(k-1)}] - q\sum_{c\in[C], c \neq a}\mathbb{E}_{\mathcal{G}}[ \bar {\mathbf{\mathbf{h}}}_c^{(k-1)}]}{p+(C-1)q},
\end{equation}
and similarly, 
\begin{equation}\label{eq:exp_mean_class_b}
    \mathbb{E}_{\mathcal{G}}[\bar {\mathbf{\mathbf{h}}}_b^{(k)}] = \frac{p \mathbb{E}_{\mathcal{G}}[ \bar {\mathbf{\mathbf{h}}}_b^{(k-1)}] - q\sum_{c\in[C], c \neq a}\mathbb{E}_{\mathcal{G}}[ \bar {\mathbf{\mathbf{h}}}_c^{(k-1)}]}{p+(C-1)q}.
\end{equation}
By subtracting $\mathbb{E}_{\mathcal{G}}[\bar {\mathbf{\mathbf{h}}}_b^{(k)}]$ from $\mathbb{E}_{\mathcal{G}}[\bar {\mathbf{\mathbf{h}}}_a^{(k)}]$, we get
\begin{equation}
    \mathbb{E}_{\mathcal{G}}[\bar {\mathbf{\mathbf{h}}}_a^{(k)} - \bar {\mathbf{\mathbf{h}}}_b^{(k)} ]= \frac{p+q}{(p+(C-1)q) } \mathbb{E}_{\mathcal{G}}[\bar {\mathbf{\mathbf{h}}}_a^{(k-1)} - \bar {\mathbf{\mathbf{h}}}_b^{(k-1)}]=0.
\end{equation}
Thus, we have shown that \textit{m-2-m} can escape from oversmoothing, while $\texttt{SMP}$ get stuck in oversmoothing if $\mathbb{E}_{\mathcal{G}}[\bar {\mathbf{\mathbf{h}}}_a^{(k-1)}] =\mathbb{E}_{\mathcal{G}}[ \bar {\mathbf{\mathbf{h}}}_b^{(k-1)}]$ provided.

\end{proof}
    
\begin{equation}
% \begin{aligned}
%     \Vert\mathbb E_{\mathcal{G}} [\bar{\mathbf{h}}_a^{(k)}] &- \mathbb E_{\mathcal{G}} [\bar{\mathbf{h}}_b^{(k)}]\Vert \\
%     &= \Vert[(p-q)\mathbb E_{\mathcal{G}} [\bar{\mathbf{h}}_a^{(k-1)}] || (q-p) \mathbb E_{\mathcal{G}} [\bar{\mathbf{h}}_b^{(k-1)}]||\cdots ]\Vert\\
%     & \ge (p-q)\Vert  E_{\mathcal{G}} [\bar{\mathbf{h}}_a^{(k-1)}] \Vert + (p-q) \Vert  E_{\mathcal{G}} [\bar{\mathbf{h}}_b^{(k-1)}] \Vert
% \end{aligned}
\end{equation}

\section{Dataset and Baseline Description}\label{dataset_and_baseline}
\subsection{Datasets}
The citation networks \textbf{Cora, Citeseer,} and \textbf{Pubmed}~\citep{DBLP:conf/icml/YangCS16} are comprised of nodes representing scientific papers and edges representing citation relationships. The node features are represented as bag-of-words, and each label corresponds to the field to which the paper belongs.

\textbf{Actor}~\citep{pei2020geom} is a co-occurrence network derived from the film-director-actor-writer network, where the nodes are characterized by bag-of-words representations of the actors' Wikipedia pages. The network's edges indicate the co-occurrence of two actors on the same web page. The task is to classify actors into five categories

\textbf{Cornell, Texas, and Wisconsin}~\citep{pei2020geom} consist of web pages collected from computer science departments at various universities. In these datasets, the nodes represent web pages, while the edges symbolize the hyperlinks connecting them. We employ bag-of-words representations as feature vectors for the nodes. The objective is to classify the web pages into five categories: student, project, course, staff, and faculty.

\textbf{Chameleon and Squirrel}~\citep{rozemberczki2021multi} are two subgraphs of web pages in Wikipedia. Nodes in this context symbolize web pages, while the edges represent the hyperlinks connecting them. The node features encapsulate various informative nouns extracted from the corresponding Wikipedia pages. The objective of this task is to forecast the average daily traffic received by each web page.

\textbf{Penn94}~\citep{lim2021large} refers to a subgraph obtained from Facebook, focusing on students as its nodes. These nodes possess various features such as their majors, second majors/minors, dormitories/houses, academic years, and high schools. The gender of the students is utilized as the labels for the nodes.

\textbf{Amazon-ratings}~\citep{platonov2022critical} This dataset is based on the Amazon product co-purchasing network metadata dataset2 from SNAP Datasets. Nodes are products (books, music CDs, DVDs, VHS video tapes), and edges connect products that are frequently bought together. The task is to predict the average rating given to a product by reviewers.

\textbf{}

\subsection{Baselines}
\paragraph{GPR-GNN~\citep{chien2020adaptive}:} The mathematical description of the process employed in \textbf{GPR-GNN} can be formulated as follows:
\begin{equation}
\mathbf{Z} = \sum_{k=0}^{K} \gamma_k \mathbf{P}^k \mathbf{H}^{(0)}.
\end{equation}
Here, $\mathbf{P}=\mathbf{D}^{-\frac{1}{2}}\mathbf{A}\mathbf{D}^{-\frac{1}{2}}$ represents the propagation matrix utilized in GCN~\citep{kipf2016semi}, and $\gamma_k$ stands for a learned real value. The propagation coefficient for the connection between nodes $i$ and $j$ is calculated as $\gamma_k \mathbf{P}^k_{ij}$. Since $y_k$ are real values, $\gamma_k \mathbf{P}^k_{ij}$ can be negative.

\paragraph{FAGCN~\citep{bo2021beyond}:} FAGCN combines both low-frequency and high-frequency signals by updating node representation as:
\begin{equation}
    \mathbf{h}_i^{(l)} = \epsilon \mathbf{h}_i^{(l)} + \sum_{i \in \mathcal{N}(i)} \frac{\alpha_{ij}}{d_id_j} \mathbf{h}_j^{(l-1)},
\end{equation}
where $\alpha_{ij}$ represents a learned propagation coefficient lies in $(-1, 1)$.

% \paragraph{GloGNN~\citep{li2022finding}:} Unlike other GNN models that aggregate information solely from direct neighbors, GloGNN characterizes a node's representation by leveraging the representations of all other nodes, achieved by solving a linear subspace clustering problem:
% \begin{equation}
% \underset{\mathbf{Z}^{(l)}}{min} \Vert \mathbf{H}^{(l)}-(1-\gamma)\mathbf{Z}^{(l)}\mathbf{H}^{(l)} -\gamma \mathbf{H}^{(0)} \Vert + \beta_1 \Vert \mathbf{Z}^{(l)} \Vert_F^2 + \beta_2 \Vert \mathbf{Z}^{(l)} - \sum_{k=1}^K \lambda_k \mathbf{P}^k \Vert.
% \end{equation}
% Subsequently, the optimal solution $\mathbf{Z}^{(l)}_{*}$ is utilized for feature propagation: $\mathbf{H}^{(l+1)} = (1-\gamma)\mathbf{Z}^{(l)}_{*}\mathbf{H}^{(l)} + \gamma \mathbf{H}^{(0)}$. It is important to note that the elements of $\mathbf{Z}^{(l)}_{*}$ can take any real value.

\paragraph{GGCN:~\citep{yan2022two}} GGCN incorporates signed messages by computing cosine similarity between connected node pairs:
\begin{equation}
    \mathbf{S}^{(l)}_{ij} = Cosine(\mathbf{h}_i^{(l-1)}, \mathbf{h}_j^{(l-1)}).
\end{equation}
GGCN learns a weighted combination of self-representations, positive messages, and negative messages:
\begin{equation}
    \mathbf{h}_i^{(l)} = \text{ReLU} (\beta_1^{(l)}  \mathbf{h}_i^{(l)}+ \beta_2^{(l)}\sum_{j \in \mathcal{N}(i), \mathbf{S}^{(l)}_{ij}>0} \mathbf{S}^{(l)}_{ij} \mathbf{h}_j^{(l)} + \beta_3^{(l)}\sum_{j \in \mathcal{N}(i), \mathbf{S}^{(l)}_{ij}<0} \mathbf{S}^{(l)}_{ij} \mathbf{h}_j^{(l)} )
\end{equation}
\paragraph{ACM-GCN~\citep{luan2022revisiting}:} ACM-GCN includes a pair of filters that retain the low-frequency and high-frequency.
\begin{equation}
    \mathbf{H}^{(l)} = \text{ReLU} (diag(\alpha_H^{(l)}) (\mathbf{I}-\mathbf{P}) \mathbf{H}^{(l-1)} \mathbf{W}^{(l)}_{HP} + diag(\alpha_L^{(l)})\mathbf{P} \mathbf{H}^{(l-1)} \mathbf{W}^{(l)}_{LP} +  diag(\alpha_I^{(l)}) \mathbf{H}^{(l-1)} \mathbf{W}^{(l)}_{I}).
\end{equation}
Here, the signed message is incorporated through the term $\mathbf{I}-\mathbf{P}$.
\paragraph{Goal~\citep{DBLP:conf/icml/ZhengZLZWP23}:} The node representations in the convolution layer of Goal are updated as:
\begin{equation}
    \mathbf{H}^{(l+1)} = \text{ReLU} (( \alpha \mathbf{I} +\beta \mathbf{A}_o -\gamma \mathbf{A}_t - \delta \mathbf{A}_{to})\mathbf{H}^{(l)} \mathbf{W}^{(l)} ),
\end{equation}
where $\mathbf{A}_o$ and $\mathbf{A}_t$ represent the constructed homophily graph and heterophily graph, respectively. $\mathbf{A}_{to}$ represents the combined graph. The utilization of signed messages occurs through the terms $-\gamma \mathbf{A}_t$ and $- \delta \mathbf{A}_{to}$.

\paragraph{AERO-GNN~\citep{DBLP:conf/icml/LeeBYS23}:} AERO-GNN is an extension of GPR-GNN, distinguished by two significant differences. Firstly, while GPR-GNN employs the propagation matrix of GCN, AERO-GNN learns attention coefficients. Secondly, in AERO-GNN, the coefficient $\gamma_k$ operates on a node-wise basis, whereas GPR-GNN adopts a layer-wise $\gamma_k$. Consequently, negative $\gamma_k$ values are employed to introduce signed messages.

\paragraph{H\textsubscript{2}GCN~\citep{zhu2020beyond} :}
There are three designs regarding concatenation in H\textsubscript{2}GCN.
\begin{equation}
    \mathbf{h}_i^{(k)} = [\mathbf{h}_i^{(k-1)}||  \mathbf{m}_i^{(k)}],
\end{equation}
where the ego- and neighbor-information are separated by concatenation.
Second, the direct and second-order neighbor information are separated as it indicates that second-order neighbors expect to contain more homophilic neighbors in heterophilic graphs.
\begin{equation}
    \mathbf{m}_i^{(k)} = [\texttt{Aggr}(\left\{h_j:v_j \in \mathcal{N}(v_i) \right\})||\texttt{Aggr}(\left\{h_j:v_j \in \mathcal{N}_2(v_i) \right\})]
\end{equation}
Lastly, the final out is obtained by combining the representations of different hops.
\begin{equation}
    \mathbf{h}_i^{\texttt{(final)}} = [\mathbf{h}_i^{(1)}||\mathbf{h}_i^{(2)}||...||\mathbf{h}_i^{(K)}]
\end{equation}

\paragraph{Ordered GNN~\citep{song2022ordered} :}
Ordered GNN employs a mechanism where the representations of different hops are separated and addresses the issue of dimension explosion. Specifically, the final representation of a node is obtained as follows:
\begin{equation}
\mathbf{h}_i^{\texttt{(final)}} = \mathbf{h}_i^{(1)}\odot g_i^{(1)} + \mathbf{h}_i^{(2)} \odot (g_i^{(2)} - g_i^{(1)}) \ldots+\mathbf{h}_i^{(K)}\odot (g_i^{(K)}-g_i^{(K-1)}),
\end{equation}
where $g_i^{(k)}$ is a vector with a split point $p_i^{(k)}$, such that the first $p_i^{(k)}$ elements of $g_i^{(k)}$ are ones while the remaining $1-p_i^{(k)}$ elements are zeros. These $p_i^{(k)}$ values satisfy $p_i^{(k+1)} \geq p_i^{(k)}$. In some cases, when the models become deeper, $g_i^{(k)}$ at shallower layers approach being all-one vectors. Consequently, the deep representations $\mathbf{h}_i^{(k+1)}$ in later layers are rarely encoded in the final representation $\mathbf{h}_i^{\texttt{(final)}}$, as $g_i^{(k+1)} - g_i^{(k)}\approx \mathbf{0}$ when $g_i^{(k)}$ is nearly an all-one vector. This can be confirmed by referring to the visualization results presented in the original paper.

\paragraph{DMP~\citep{yang2021diverse}:} DMP leverages a channel-wise attention mechanism:
\begin{equation}
    \mathbf h_i^{(l)} = \text{ReLU}(\mathbf c_{ii}^{(l)}\odot\mathbf h_i^{(l-1)} + \sum_{j\in \mathcal{N}(i)}\mathbf c_{ij}^{(l)}\odot\mathbf h_j^{(l-1)}),
\end{equation}
where $\odot$ represents the element-wise product of vectors, and $\mathbf{c}_{ij}^{(l)} = \text{tanh}([\mathbf{h}_i^{(l-1)} || \mathbf{h}_j^{(l-1)}] \mathbf{W}_c^{(l)})$ is a vector with the same length as $\mathbf{h}_j^{(l-1)}$. The activation function $\text{tanh}(c\cdot)$ suggests that the message can become negative.

\section{Experimental Setup}
We implemented M2M-GNN using PyTorch~\citep{paszke2019pytorch}. We applied LayerNorm~\citep{ba2016layer} after each layer. Adam optimizer~\citep{kingma2014adam} was used for optimizing the model parameters. Our experiments were conducted on an NVIDIA Tesla V100 with 32GB of memory. For dataset splitting, we utilized 10 random splits with training, validation, and testing proportions of $48\%$, $32\%$, and $20\%$, respectively, as provided in \citet{pei2020geom}. These splits are available from PyTorch Geometry \citep{fey2019fast}. If the results of the baseline methods on these benchmark datasets are publicly available, we report them directly. Otherwise, we executed these methods using the codes released by their authors and fine-tuned the hyperparameters based on the validation set. To identify the optimal hyperparameters, a grid search was performed on the validation set, and the search space is summarized in Table \ref{tab:hyper}.

\begin{table*}[]
\footnotesize
\setlength\tabcolsep{1.65pt}
    \centering
    \begin{tabular}{c|c}
   \toprule
       \textbf{Hyperparameter} &\textbf{Range} \\
    \midrule
    learning rate &$\left\{ 0.005, 0.01, 0.02 \right\}$\\
    weight decay &$\left\{ 5e-4,5e-3,1e-2 \right\}$\\
    dropout &$\left[0,0.7\right]$\\
    early stopping &$\left\{40, 100, 200\right\}$\\
    $\beta$ &$\left [0.0,0.9\right]$\\
    $\lambda$ &$\left [0.0,1.0\right ]$\\
     hidden channels $d$ \ &$\left \{32, 64, 128, 256 \right \}$\\
     $\alpha$ &$(0,1)$  \\
      $\tau$ &$\left\{0.1,0.5, 1.0, 2.0\right\}$\\
     number of layers  &$\left\{2,4,8 \right\}$\\
    \bottomrule
    \end{tabular}
    \caption{Search space for hyperparameters.}
    \label{tab:hyper}
\end{table*}

\section{Time Complexity}\label{app:time_complexity}
This section presents an analysis of the time complexity of M2M-GNN. M2M-GNN involves several operations. Firstly, in each layer, a linear transformation $\mathbf{h}_i^{(k)}\mathbf{W}^{(k)}$ is performed using matrix multiplication between two matrices of sizes $N \times d$ and $d \times d/C$. This operation has a time complexity of $\mathcal{O}(Nd^2/C)$. Next, the computation of $\mathbf s^{(k)}(i,j)$ requires a vector addition with a time complexity of $\mathcal{O}(|\mathcal{E}|d/C)$, as well as a matrix multiplication with a time complexity of $\mathcal{O}(|\mathcal{E}| d)$. Following that, a message vector is constructed for each node according to the equation:

\begin{equation}
\mathbf{m}^{(k)}_{i} = \lVert_{t=1}^{\mathcal{C}}\mathbf{C}_{it}^{(k)}, \quad \mathbf{C}_{it}^{(k)} = \sum_{v_{j} \in N_{i}}\mathbf{s}_{t}^{(k)}(i,j)\hat{\mathbf{h}}^{(k)}_{j} \in \mathbb{R}^{d/\mathcal{C}}, \forall t \in [\mathcal{C}]
\end{equation}

The time complexity of the computation of $\mathbf{C}_{it}^{(k)}$ is $\mathcal{O}(|\mathcal{E}|d/\mathcal{C})$. Since we have $\mathcal{C}$ chunks, the total time complexity for constructing message vectors is $\mathcal{O}(|\mathcal{E}|d)$. Additionally, the combination of $\mathbf m_i^{(k)}$ and $\mathbf{h}_i^{(0)}$ costs $\mathcal{O}(Nd)$. By combining all these operations, the derived time complexity of M2M-GNN is $\mathcal{O}(Nd^2/\mathcal{C} +|\mathcal{E}| d)$.

In comparison, the time complexity of the vanilla GCN~\citep{kipf2016semi} is $\mathcal{O}(Nd^2 + |\mathcal{E}| d)$. It is worth noting that the time complexity of our proposed M2M-GNN is equivalent to that of GCN.

\section{Impact of Non-linearity}\label{app:non-linearity}
The ReLU activation function has been known to exacerbate over-smoothing. In this study, we investigate the scenario of a GNN utilizing an activation function after $k$ linear graph convolutions. We analyze the difference $\Vert \text{ReLU}(\bar{ \mathbf{h}}_a^{(k)})-\text{ReLU}(\bar{\mathbf{h}}_b^{(k)})\Vert$, and show that $\Vert \bar{\mathbf{h}}_{a}^{(k)}- \bar{\mathbf{h}}_{b}^{(k)}\Vert\ge \Vert \text{ReLU}(\bar{\mathbf{h}}_{a}^{(k)}) - \text{ReLU}(\bar{\mathbf{h}}_{b}^{(k)}) \Vert$. Given that this activation operates element-wise, our analysis centers on the $i$-th element:
\begin{equation}
    \text{ReLU}(\bar{\mathbf{h}}_{a,i}^{(k)})-\text{ReLU}(\bar{\mathbf{h}}_{b,i}^{(k)}) = \begin{cases}
        0,\quad &\bar{\mathbf{h}}_{a,i}^{(k)}<0, \bar{\mathbf{h}}_{b,i}^{(k)}<0
        \\
        -\bar{\mathbf{h}}_{b,i}^{(k)}, \quad &\bar{\mathbf{h}}_{a,i}^{(k)}<0, \bar{\mathbf{h}}_{b,i}^{(k)}>0\\
        \bar{\mathbf{h}}_{a,i}^{(k)}, \quad &\bar{\mathbf{h}}_{a,i}^{(k)}, \bar{\mathbf{h}}_{b,i}^{(k)}<0 \\
        \bar{\mathbf{h}}_{a,i}^{(k)} - \bar{\mathbf{h}}_{b,i}^{(k)}, \quad &\bar{\mathbf{h}}_{a,i}^{(k)}>0, \bar{\mathbf{h}}_{b,i}^{(k)}>0 \\
    \end{cases}.
\end{equation}

\begin{equation}
    |\bar{\mathbf{h}}_{a,i}^{(k)}-  \bar{\mathbf{h}}_{b,i}^{(k)}|- |\text{ReLU}(\bar{\mathbf{h}}_{a,i}^{(k)})-\text{ReLU}(\bar{\mathbf{h}}_{b,i}^{(k)}| = \begin{cases}
        |\bar{\mathbf{h}}_{a,i}^{(k)} -  \bar{\mathbf{h}}_{b,i}^{(k)}|,\quad &\bar{\mathbf{h}}_{a,i}^{(k)}<0, \bar{\mathbf{h}}_{b,i}^{(k)}<0
        \\
        |\bar{\mathbf{h}}_{a,i}^{(k)}|, \quad &\bar{\mathbf{h}}_{a,i}^{(k)}<0, \bar{\mathbf{h}}_{b,i}^{(k)}>0\\
        |\bar{\mathbf{h}}_{b,i}^{(k)}|, \quad &\bar{\mathbf{h}}_{a,i}^{(k)}>0, \bar{\mathbf{h}}_{b,i}^{(k)}<0 \\
        0, \quad &\bar{\mathbf{h}}_{a,i}^{(k)}>0, \bar{\mathbf{h}}_{b,i}^{(k)}>0 \\
    \end{cases}.
\end{equation}
Therefore, we always have $|\bar{\mathbf{h}}_{a,i}^{(k)} -  \bar{\mathbf{h}}_{b,i}^{(k)}|\ge |\text{ReLU}(\bar{\mathbf{h}}_{a,i}^{(k)})-\text{ReLU}(\bar{\mathbf{h}}_{b,i}^{(k)})|$, which leads to $(\bar{\mathbf{h}}_{a,i}^{(k)}-  \bar{\mathbf{h}}_{b,i}^{(k)})^2\ge (\text{ReLU}(\bar{\mathbf{h}}_{a,i}^{(k)})-\text{ReLU}(\bar{\mathbf{h}}_{b,i}^{(k)}))^2$. Consequently, the following relationship can be derived: $\Vert \bar{\mathbf{h}}_{a}^{(k)}- \bar{\mathbf{h}}_{b}^{(k)}\Vert = \sqrt{\sum_{i} (\bar{\mathbf{h}}_{a,i}^{(k)}-  \bar{\mathbf{h}}_{b,i}^{(k)})^2} \ge \sqrt{\sum_{i} (\text{ReLU}(\bar{\mathbf{h}}_{a,i}^{(k)})-\text{ReLU}(\bar{\mathbf{h}}_{b,i}^{(k)}))^2} = \Vert \text{ReLU}(\bar{\mathbf{h}}_{a}^{(k)}) - \text{ReLU}(\bar{\mathbf{h}}_{b}^{(k)}) \Vert$. This suggests that the ReLU activation function consistently increases the similarity between representations, thereby exacerbating over-smoothing.

Regarding Theorem~\ref{thm:smp_in_multi_class}, our conclusion remains valid even when nonlinear activation functions are applied. For instance, let's consider an \texttt{SMP} that incorporates a non-linear activation function in updating the representation matrix as $\mathbf H^{(l+1)}=\sigma(\mathcal{A} \mathbf{H}^{(l)} \mathbf{W})$, where $\sigma(\cdot)$ represents various non-linear activations, and $\mathcal{A}$ is a propagation matrix. Under this framework, for a given vector $v$, elements of the vector $\frac{\sigma(v)}{v}$ adhere to the condition $0 \leq \frac{\sigma(v_i)}{v_i} \leq 1$ (equal to 0 if $v_i=0$) for each component $i$. It is worth noting that most common non-linear activation functions, such as ReLU, LeakyReLU, Tanh, and GeLU, conform to this requirement. Consequently, the $i$-th column of $\mathbf H^{(l+1)}$ can be redefined as $\mathbf H^{(l+1)}_{:,i}= \mathbf N_i \mathcal{A} (\mathbf{H}^{(l)} \mathbf{W})_{:,i}$, where $\mathbf N_i$ is a diagonal matrix with elements ranging between 0 and 1, acting as the non-linear activation factor. The modified coefficient matrix $\mathcal{A'}=\mathbf N \mathcal{A}$ inherits the sign of its elements from the original matrix $\mathcal{A}$ due to the non-negativity of the diagonal elements in $\mathbf N_i$. Therefore, non-linear activation does not alter the signs of the elements. According to Definition~\ref{def:good}, the desirability of a matrix depends solely on the signs of its elements. Theorem~\ref{thm:smp_in_multi_class} demonstrates the existence of desirable $\mathcal{A^{(k)}}$ matrices whose product, denoted as $\mathcal{T}$, does not retain the desirability criteria. Furthermore, when any $\mathbf N_i \mathcal{T}$ resulting from applying a non-linear activation function satisfies $sign(\mathbf N_i \mathcal{T})=sign(\mathcal{T})$, it indicates that $\mathbf N_i \mathcal{T}$ is considered an undesirable matrix as defined in Definition~\ref{def:good}.

%%%%%%%%%%%%%%%%%%%%%%%%%%%%%%%%%%%%%%%%%%%%%%%%%%%%%%%%%%%%%%%%%%%%%%%%%%%%%%%
%%%%%%%%%%%%%%%%%%%%%%%%%%%%%%%%%%%%%%%%%%%%%%%%%%%%%%%%%%%%%%%%%%%%%%%%%%%%%%%

\end{document}